\pdfoutput=1
\documentclass{article}

\usepackage[letterpaper,margin=1.0in]{geometry}

\usepackage[utf8]{inputenc}
\usepackage[T1]{fontenc}
\usepackage{lmodern}
\usepackage{microtype}

\usepackage{amsmath}
\usepackage{amssymb}
\usepackage{amsfonts}
\usepackage{amsthm}
\usepackage{mathtools}

\usepackage{graphicx}
\usepackage{booktabs}
\usepackage{multirow}
\usepackage{hhline}
\usepackage{caption}
\usepackage[para,online,flushleft]{threeparttable}
\usepackage[table]{xcolor}
\usepackage{algorithm}
\usepackage{algorithmic}

\usepackage{enumitem}
\usepackage{bbding}

\usepackage[numbers,sort]{natbib}

\usepackage[hidelinks]{hyperref}
\usepackage{amsmath}\allowdisplaybreaks
\usepackage{amsfonts,bm}

\def\1{\bf{1}}

\newcommand{\Norm}[1]{\left\| #1 \right\|}

\def\inner#1#2{\langle #1, #2 \rangle}

\def\vx{{\bf{x}}}
\def\vy{{\bf{y}}}

\def\fB{{\mathcal{B}}}

\def\fX{{\mathcal{X}}}

\def\BP{{\mathbb{P}}}

\def\BR{{\mathbb{R}}}

\DeclareMathOperator*{\argmin}{arg\,min}

\DeclareMathOperator{\Tr}{Tr}

\usepackage{etoolbox}

\usepackage{amsthm}

\theoremstyle{plain}
\newtheorem{theorem}{Theorem}[section]
\newtheorem{lemma}[theorem]{Lemma}

\newtheorem{corollary}[theorem]{Corollary}
\newtheorem{assumption}[theorem]{Assumption}

\newtheorem{problem}[theorem]{Problem}

\theoremstyle{definition}
\newtheorem{definition}[theorem]{Definition}

\theoremstyle{remark}
\newtheorem{remark}[theorem]{Remark}

\makeatletter
\def\Ddots{\mathinner{\mkern1mu\raise\p@
\vbox{\kern7\p@\hbox{.}}\mkern2mu
\raise4\p@\hbox{.}\mkern2mu\raise7\p@\hbox{.}\mkern1mu}}
\makeatother

\makeatletter
\newcommand*{\rom}[1]{\expandafter\@slowromancap\romannumeral #1@}
\makeatother

\def\vx {{{\bf x}}}

\def\vy {{{\bf y}}}

\def\BR{{\mathbb{R}}}

\def\A{{\bf A}}

\def\g{{\bf g}}

\def\v{{\bf v}}

\def\X{{\bf X}}
\def\x{{\bf x}}
\def\Y{{\bf Y}}
\def\y{{\bf y}}
\def\Z{{\bf Z}}
\def\z{{\bf z}}
\def\0{{\bf 0}}
\def\1{{\bf 1}}

\def\OM{{\mathcal O}}

\def\CB{{\mathbb C}}
\def\RB{{\mathbb R}}
\def\EB{{\mathbb E}}

\def \bepsilon{{\boldsymbol{\epsilon}}}
\def \bmu{{\boldsymbol{\mu}}}
\def \bzeta{{\boldsymbol{\zeta}}}

\usepackage{algorithm}
\usepackage{algorithmic}
\usepackage{enumitem}
\usepackage{hhline}

\def \EBP #1{\EB\left[#1\right]}

\usepackage{braket}

\newcommand{\myNorm}[1]{\left\|#1 \right\|}
\usepackage{multirow}

\title{Quantum Speedups for Stochastic Optimization\\
with Heavy-Tailed Noise}

\author{%
  Bin Luo\\
The Chinese University of Hong Kong\\
\texttt{binluo@link.cuhk.edu.hk}\\
\and 
Chengchang Liu\thanks{Chengchang Liu is the corresponding author.}\\
Westlake University\\
\texttt{liuchengchang@westlake.edu.cn}\\
\and
Jonathan Allcock\\
Tencent Quantum Laboratory\\
\texttt{jonallcock@tencent.com}\\
\and
Shengyu Zhang\\
Tencent Quantum Laboratory\\
\texttt{shengyzhang@tencent.com}\\
\and
John C.S. Lui\\
The Chinese University of Hong Kong\\
\texttt{cslui@cse.cuhk.edu.hk}\\
}

\date{}

\begin{document}

\maketitle

\begin{abstract}
We study stochastic optimization with heavy-tailed gradient noise. 
We first propose a novel quantum mean estimator for multivariate heavy-tailed random variables that achieves lower query complexity than optimal classical estimators in the low-dimensional regime.
We further develop an unbiased quantum mean estimator by applying a generalized multi-level Monte Carlo technique. 
We prove quantum lower bounds showing that, when the dimension $d$ of the random vector is small and can be viewed as a constant, our quantum estimators are optimal up to logarithmic factors. We further
derive stronger dimension-dependent lower bounds for tail index
$p>4/3$, showing that a nontrivial dependence on the dimension is
unavoidable in the low-dimensional regime.
Based on these estimators, we propose a quantum normalized stochastic gradient descent method ($\texttt{QNSGD}$), which finds an $\epsilon$-stationary point using $\tilde{\mathcal{O}}\big(\sqrt d\,\epsilon^{-\frac{5p-4}{2p-2}}\big)$ queries to the quantum stochastic gradient oracle.
For a convex objective function, we propose a quantum projected stochastic gradient descent method ($\texttt{QPSGD}$), which computes a solution with $\epsilon$-optimal solution using $\tilde{\mathcal{O}}\big(\sqrt d\,\epsilon^{-\frac{3p-2}{2p-2}}+\epsilon^{-2}\big)$ queries in expectation.
These sharper bounds improve upon the classical lower bounds $\Omega\big(\epsilon^{-\frac{3p-2}{p-1}}\big)$ for nonconvex problems and $\Omega\big(\epsilon^{-\frac{p}{p-1}}\big)$ for convex problems in the low-dimensional regimes $d\lesssim\epsilon^{-\frac{p}{p-1}}$ and $d\lesssim\epsilon^{-\frac{2-p}{p-1}}$, respectively.
\end{abstract}

\section{Introduction}
We consider the stochastic optimization problem
\begin{align}
\label{eq:obj}
\min_{\x\in\fX\subseteq\RB^d} f(\x) = \EB_{\xi\sim \Xi} \left[F(\x;\xi)\right],
\end{align}
where $f$ may be non-convex and $\xi$ denotes a random seed drawn from a distribution $\Xi$.
Traditionally, convergence analyses of stochastic first-order methods rely on the bounded-variance assumption for stochastic gradients~\cite{ghadimi2013stochastic}. However, empirical evidence from image classification~\cite{simsekli2019tail}, large language models~\cite{zhang2020adaptive,ahnlinear}, and reinforcement learning~\cite{garg2021proximal} indicates that stochastic gradients often exhibit heavy-tailed behavior, thereby violating the bounded-variance assumption. Moreover, the hardness example in \cite{zhang2020adaptive} shows that standard stochastic gradient descent (\texttt{SGD}) with norm-agnostic step-size rules may fail to converge under heavy-tailed gradient noise.
Motivated by these observations, we study the stochastic optimization problem \eqref{eq:obj} under a heavy-tailed stochastic first-order oracle, which we formalize below.

\begin{assumption}
\label{ass:heavy-tailed}
We assume the stochastic gradient $\nabla F(\x;\xi)$ follows the heavy-tailed distribution such that $\EB_{\xi\sim \Xi}\left[\nabla F(\x;\xi)\right] = \nabla f(\x)\in \partial f(\x)$ and 
   \begin{align}
    \EB_{\xi\sim \Xi}\left[\|\nabla F(\x;\xi)-\nabla f(\vx)\|^p\right]\leq \sigma^{p},
\end{align}
for some $\sigma>0$ and $p\in (1,2]$.
\end{assumption}

Inspired by the importance of stochastic optimization under heavy-tailed gradient noise, numerous methods in~classical computing have been proposed in recent years.
\citet{zhang2020adaptive} showed that \texttt{SGD} with gradient clipping can find an $\epsilon$-stationary point in expectation.
Subsequent works \cite{cutkosky2021high,sadiev2023high,nguyen2023improved} developed refined clipping-based algorithms that establish high-probability convergence guaranties.
In addition, \citet{hubler2024gradient} proved that normalized \texttt{SGD} achieves the same oracle complexity as gradient clipping in the non-convex setting. More recently, \citet{liu2025online} studied online convex optimization with heavy-tailed noise and employed the online-to-nonconvex (O2NC) framework~\cite{cutkosky2023optimal} to recover optimal oracle complexity.
Despite these advances, under the stochastic first-order oracle model, all existing methods are subject to the lower bounds $\Omega\big(\epsilon^{-\frac{3p-2}{p-1}}\big)$ for non-convex optimization and $\Omega\big(\epsilon^{-\frac{p}{p-1}}\big)$ for convex optimization, as established in \cite{raginsky2009information, zhang2020adaptive}.


Recently, quantum computing has demonstrated provable speedups for a broad range of optimization problems, including convex \cite{chambolle2011first,van2020convex,chakrabarti2020quantum,zhang2024quantum,kimfast,augustino2025fast}, non-convex \cite{sidford2023quantum,gong2025robustness}, non-smooth \cite{chakrabarti2020quantum,liu2024quantum,leng2025quantum,ozgul2025quantum}, and minimax optimization \cite{li2021sublinear,liuquantum,gao2023logarithmic,wangnear}.
In particular, under the bounded-variance setting, 
\citet{sidford2023quantum} showed quantum speedups for stochastic optimization by using quantum mean estimators~\cite{Cornelissen_2022} to construct gradient estimators with a smaller batch size.

However, it remains an open question \textit{whether one can demonstrate a quantum speedup on stochastic optimization with heavy-tailed noise}. First, \textbf{no} quantum mean estimation algorithms are currently available for multivariate random variables with heavy-tailed distributions.
Although \citet{blanchet2024quadraticspeedupinfinitevariance} and \citet{wu2023quantum} proposed quantum mean estimators for heavy-tailed data, their methods apply only to univariate random~variables.
{For the multivariate case, all existing quantum mean estimators are designed for the bounded-variance setting~\cite{Cornelissen_2022,sidford2023quantum,tang2025moreefficientquantummultivariatemean}. 
However, these estimators may not apply to the heavy-tailed distributions with only a bounded $p$-th central moment for $p\in(1,2)$, because the variance of such a heavy-tailed random vector need not exist or be finite.}
Second, most existing classical algorithms use only a single stochastic gradient per iteration~\cite{zhang2020adaptive,cutkosky2020momentum,sadiev2023high,fatkhullin2025can,liu2025online}, which limits the applicability of quantum mean estimators to improve the sample complexity.

In this paper, we address this open question by developing novel quantum mean estimators for multivariate heavy-tailed random variables and leveraging them to design quantum stochastic optimization algorithms. We summarize our contributions as follows.

\begin{itemize}
\item 
We propose a novel quantum heavy-tailed mean estimator (\texttt{QHTME}) for estimating the mean of a $d$-dimensional random variable $\X$, satisfying $\EBP{\Norm{\X-\EBP{\X}}^{p}}\leq \sigma^{p}$ for some $\sigma>0\text{ and } p\in(1,2]$, given access to a quantum sampling oracle (cf. Definition~\ref{def:quantum-sampling-oracle}).
\texttt{QHTME} generalizes the multivariate bounded-variance estimators in~\cite{Cornelissen_2022,tang2025moreefficientquantummultivariatemean} and the univariate heavy-tailed estimator of~\cite{blanchet2024quadraticspeedupinfinitevariance}, corresponding to the special cases $p=2$ and $d=1$, respectively. Moreover, \texttt{QHTME} achieves a provable quantum speedup over optimal classical heavy-tailed mean estimators in the low-dimensional regime. 
A~detailed comparison with existing classical and quantum estimators is given in Table~\ref{tab:mean_estimators_comparison}.

\item 
We further refine \texttt{QHTME} to develop a quantum unbiased heavy-tailed mean estimator (\texttt{QUHTME}). 
This estimator extends the unbiased quantum mean estimator in \cite{sidford2023quantum} from the bounded-variance setting to general heavy-tailed distributions ($p\in(1,2]$). A comparison with existing classical and quantum mean estimators is provided in Table~\ref{tab:quantum_mean_estimators_comparison_heavy_tailed}.

\item  We establish quantum lower bounds for 
both high-probability and unbiased heavy-tailed mean estimation
in  Section~\ref{subsec: lower bounds on HT mean estimation}. We first obtain the baseline quantum lower bound of $\Omega((\sigma/\epsilon)^{\frac{p}{2(p-1)}})$ by reducing to an unstructured search problem. This lower bound already shows that the dependence on $\sigma$ and $\epsilon$ in our quantum algorithms \texttt{QHTME} and \texttt{QUHTME} is optimal up to logarithmic factors, when the dimension $d$ is small enough to be a constant.
Moreover, to derive a dimension-dependent quantum lower bound, we introduce a new hard primitive, namely \texttt{Recovery}$\circ$\texttt{Search}. This primitive combines a
bit-string recovery task with a rare-spike search structure, designed to
model the hard instances in heavy-tailed mean
estimation. By proving a quantum lower bound for
\texttt{Recovery}$\circ$\texttt{Search} and reducing it to the
heavy-tailed mean estimation problem, we show that, for
$p\in(4/3,2]$ and $1\leq d\leq(\sigma/\epsilon)^2$,
any quantum algorithm requires
$\Omega\left(
d^{\frac{3p-4}{4(p-1)}}
(\sigma/\epsilon)^{\frac{p}{2(p-1)}}
\right)$
queries. This result shows that a nontrivial dependence on $d$ is unavoidable.
Together with the inherited bounded-variance lower bound developed in \cite{Cornelissen_2022}, this yields
$\Omega((\sigma/\epsilon)^2)$ when
$d\geq(\sigma/\epsilon)^2$. The dimension dependence is tight when $p=2$, while determining the optimal joint
dependence on $d$, $\sigma$, and $\epsilon$ for $p<2$ remains open. The quantum lower bounds are summarized in Tables \ref{tab:mean_estimators_comparison} and \ref{tab:quantum_mean_estimators_comparison_heavy_tailed}.


\item 
For non-convex stochastic optimization, we propose a quantum normalized stochastic gradient descent method, \texttt{QNSGD}, which integrates \texttt{QHTME} with a gradient normalization technique. Under heavy-tailed noise, \texttt{QNSGD} generates iterates satisfying $T^{-1}\sum_{t=0}^{T-1}\|\nabla f(\x_t)\|\leq\epsilon$ with high probability using $\tilde{\OM}(\sqrt d\,\epsilon^{-\frac{5p-4}{2p-2}})$ queries to the quantum stochastic gradient oracle (cf. Definition~\ref{def: Quantum stochastic gradient oracle (QSGO)}). This improves upon the classical lower bound $\Omega(\epsilon^{-\frac{3p-2}{p-1}})$ whenever $d\lesssim\epsilon^{-\frac{p}{p-1}}$. A comparison with existing classical and quantum methods is provided in Table~\ref{tab:non-convex_heavy_tailed}.

\item 
For convex stochastic optimization, we propose a quantum projected stochastic gradient descent method, \texttt{QPSGD}, which leverages the unbiased quantum estimator \texttt{QUHTME}. Under heavy-tailed noise, \texttt{QPSGD} obtains a solution with expected suboptimality at most $\epsilon$ using $\tilde{\OM}(\sqrt d\,\epsilon^{-\frac{3p-2}{2(p-1)}}+\epsilon^{-2})$ expected queries to the quantum stochastic gradient oracle. This improves upon the classical lower bound $\Omega(\epsilon^{-\frac{p}{p-1}})$ for $p\in(1,2)$ whenever $d\lesssim\epsilon^{-\frac{2-p}{p-1}}$. A comparison with existing classical methods is given in Table~\ref{tab:convex_heavy_tailed}.
\end{itemize}
While our results are primarily theoretical, we briefly discuss their practicality and potential applications in Appendix~\ref{app:practicality}.

\begin{table}[h]
\centering
\caption{Comparison of estimators for producing an estimate $\tilde{\bmu}$ of the mean $\bmu:=\EBP{\X}$ of a $d$-dimensional random vector $\X$ satisfying $\EBP{\|\X-\bmu\|^{p}}\leq \sigma^{p}$ such that $\|\tilde{\bmu}-\bmu\|\leq \epsilon$.
}
\label{tab:mean_estimators_comparison}
\begin{threeparttable}
\small
\setlength{\tabcolsep}{5.5pt}
\renewcommand{\arraystretch}{1.22}
\begin{tabular*}{\textwidth}{@{\extracolsep{\fill}} l c c c c c @{}}
\toprule
& \textbf{References} & \textbf{Assumption on $p$} & \textbf{Dimension $d$} & \textbf{Queries} & \textbf{Bias} \\
\midrule
\multirow{3}{*}{\textsc{Classical}}&
\cite{lugosi2019meanestimationregressionheavytailed}
&$p=2$
&$d\geq 1$
&$\OM((\sigma/\epsilon)^{2})$ 
& $0$    \\

&\textbf{Theorem \ref{thm :multivariate-mom-error-bound}}
&$p\in(1,2]$
&$d\geq 1$
& $\tilde{\OM}((\sigma/\epsilon)^{\frac{p}{p-1}})$
& $\epsilon$   \\\cmidrule(lr){2-6}

&\cite{lugosi2019meanestimationregressionheavytailed}
&$p\in(1,2]$
&$d\geq 1$
& $\Omega((\sigma/\epsilon)^{\frac{p}{p-1}})$
& $-$ \\

\midrule
\multirow{8}{*}{\textsc{Quantum}}&\cite{tang2025moreefficientquantummultivariatemean}
&$p=2$
&$d\geq 1$
&$\tilde{\OM}(d^{\frac{1}{2}}\sigma/\epsilon)$ 
& $\epsilon$ \\

&\cite{blanchet2024quadraticspeedupinfinitevariance}
&$p\in(1,2]$
&$d= 1$
& $\tilde{\OM}((\sigma/\epsilon)^{\frac{p}{2(p-1)}})$
& $\epsilon$   \\

&

\textbf{ Theorem~\ref{thm:QHTME-for prob 1}}
&$p\in(1,2]$
&$d\geq 1$
&$\tilde{\OM}(d^{\frac{1}{2}}(\sigma/\epsilon)^{\frac{p}{2(p-1)}})$
& $\epsilon$
\\\cmidrule(lr){2-6}

&\cite{Cornelissen_2022}
&$p=2$
&\begin{tabular}{@{}c@{}}
$1\leq d\leq (\sigma/\epsilon)^{2}$\\[2pt]
$d\geq (\sigma/\epsilon)^{2}$
\end{tabular}
&\begin{tabular}{@{}c@{}}
$\Omega(d^{\frac{1}{2}}\sigma/\epsilon)$\\[2pt]
$\Omega((\sigma/\epsilon)^{2})$
\end{tabular}
& $-$ \\

&\textbf{Theorem \ref{thm:lower-bounds for high probability}}
&\begin{tabular}{@{}c@{}}
$p\in(4/3,2]$\\[2pt]
$p\in(4/3,2]$\\[2pt]
$p\in(1,4/3]$
\end{tabular}
&\begin{tabular}{@{}c@{}}
$1\leq d\leq (\sigma/\epsilon)^{2}$\\[2pt]
$d\geq (\sigma/\epsilon)^{2}$\\[2pt]
$d\geq 1$
\end{tabular}
&\begin{tabular}{@{}c@{}}
$\Omega(d^{\frac{3p-4}{4(p-1)}}(\sigma/\epsilon)^{\frac{p}{2(p-1)}})$\\[2pt]
$\Omega((\sigma/\epsilon)^{2})$\\[2pt]
$\Omega((\sigma/\epsilon)^{\frac{p}{2(p-1)}})$
\end{tabular}
& $-$\\


\bottomrule
\end{tabular*}
\end{threeparttable}
\end{table}

\begin{table*}[h]
\centering
\caption{
Comparison of  estimators for producing an estimate $\tilde{\bmu}$ of the mean $\bmu:=\EBP{\X}$ of a $d$-dimensional random vector $\X$ satisfying $\EBP{\|\X-\bmu\|^{p}}\leq \sigma^{p}$ such that $\EBP{\|\tilde{\bmu}-\bmu\|^{p}}\leq \epsilon^{p}$.
}
\label{tab:quantum_mean_estimators_comparison_heavy_tailed}
\small
\setlength{\tabcolsep}{5.5pt}
\renewcommand{\arraystretch}{1.22}
\begin{tabular*}{\textwidth}{@{\extracolsep{\fill}} l c c c c c @{}}
\toprule
& \textbf{References} & \textbf{Assumption on $p$} & \textbf{Dimension $d$} & \textbf{Queries} & \textbf{Bias} \\
\midrule
\multirow{2}{*}{\textsc{Classical}}&
Empirical Mean
&$p\in(1,2]$
&$d\geq 1$
&$\tilde{\mathcal{O}}(\sigma/\epsilon)^{\frac{p}{p-1}}$ 
& $0$    \\
& \cite{lugosi2019meanestimationregressionheavytailed}
&$p\in(1,2]$
&$d\geq 1$
&$\Omega((\sigma/\epsilon)^{\frac{p}{p-1}})$ 
& $-$    \\

\midrule
\multirow{7}{*}{\textsc{Quantum}}
&\cite{Cornelissen_2023}
&$p=2$
&$d=1$
&$\tilde{\OM}(\sigma/\epsilon)$ 
& $\delta\sigma$  \\

&\cite{sidford2023quantum}
&$p=2$
&$d\geq 1$
&$\tilde{\OM}(d^{\frac{1}{2}}\sigma/\epsilon)$ 
& $0$   \\

& \textbf{Theorem~\ref{theorem: query complexity of QUHTME}}
&$p\in(1,2]$
&$d\geq 1$
& $\tilde{\OM}(d^{\frac{1}{2}}(\sigma/\epsilon)^{\frac{p}{2(p-1)}})$
& $0$ \\

\cmidrule(lr){2-6}

&\cite{sidford2023quantum}
&$p=2$
&$1\leq d\leq (\sigma/\epsilon)^{2}$
&$\Omega(d^{\frac{1}{2}}\sigma/\epsilon)$ 
& $-$   \\
&\textbf{Theorem \ref{thm:lower-bounds for unbiased}}
&\begin{tabular}{@{}c@{}}
$p\in(4/3,2]$\\[2pt]
$p\in(4/3,2]$\\[2pt]
$p\in(1,4/3]$
\end{tabular}
&\begin{tabular}{@{}c@{}}
$1\leq d\leq (\sigma/\epsilon)^{2}$\\[2pt]
$d\geq (\sigma/\epsilon)^{2}$\\[2pt]
$d\geq 1$
\end{tabular}
&\begin{tabular}{@{}c@{}}
$\Omega(d^{\frac{3p-4}{4(p-1)}}(\sigma/\epsilon)^{\frac{p}{2(p-1)}})$\\[2pt]
$\Omega((\sigma/\epsilon)^{2})$\\[2pt]
$\Omega((\sigma/\epsilon)^{\frac{p}{2(p-1)}})$
\end{tabular}
& $-$\\
\bottomrule
\end{tabular*}
\end{table*}


\begin{table*}[t]
\centering
\caption{Comparison of different methods to find the $\epsilon$-stationary point, i.e. $\|\nabla f(\x)\|\leq \epsilon$ in stochastic nonconvex optimization. $p$ is the index of the heavy-tailed assumption on the stochastic gradient in Assumption~\ref{ass:heavy-tailed}. 
}
\label{tab:non-convex_heavy_tailed}
\small
\setlength{\tabcolsep}{7pt}
\renewcommand{\arraystretch}{1.22}
\begin{tabular*}{0.92\textwidth}{@{\extracolsep{\fill}} l c c c @{}}
\toprule
& \textbf{References} & \textbf{Assumption on $p$} & \textbf{Oracle Complexity} \\
\midrule
\multirow{3}{*}{\textsc{Classical}}&
\cite{ghadimi2013stochastic}
& $p=2$
&$\mathcal{O}(\epsilon^{-4})$ 
   \\
&\cite{zhang2020adaptive,hubler2024gradient,liu2025online}
& $p\in(1,2]$
&$\mathcal{O}(\epsilon^{-\frac{3p-2}{p-1}})$ 
\\\cmidrule(lr){2-4}
&
Lower Bound~\cite{zhang2020adaptive}
& $p\in(1,2]$
& $\Omega\big(\epsilon^{-\frac{3p-2}{p-1}}\big)$
\\

\midrule
\multirow{2}{*}{\textsc{Quantum}}
&\cite{sidford2023quantum}
&$p=2$
& $\tilde{\OM}(d^{\frac{1}{2}}\epsilon^{-3})$
\\


&\textbf{Theorem~\ref{thm:non-convex_heavy_tailed}}
&$p\in(1,2]$
&$\tilde{\OM}(\sqrt d\,\epsilon^{-\frac{5p-4}{2p-2}})$

  \\


\bottomrule
\end{tabular*}
\end{table*}

\begin{table*}[t]
\centering
\caption{Comparison of different methods to find the $\epsilon$ optimal solution, i.e. $f(\x)-f^*\leq \epsilon$ in stochastic convex optimization. $p$ is the index in Assumption~\ref{ass:heavy-tailed}. 
}
\label{tab:convex_heavy_tailed}
\small
\setlength{\tabcolsep}{7pt}
\renewcommand{\arraystretch}{1.22}
\begin{tabular*}{0.92\textwidth}{@{\extracolsep{\fill}} l c c c @{}}
\toprule
& \textbf{References} & \textbf{Assumption on $p$} & \textbf{Oracle Complexity} \\
\midrule
\multirow{3}{*}{\textsc{Classical}}&
\cite{ghadimi2013stochastic}
& $p=2$
&$\mathcal{O}(\epsilon^{-2})$ 
 \\

&\cite{fatkhullin2025can,liu2025online,zhang2020adaptive}
& $p\in(1,2]$
& $\mathcal{O}(\epsilon^{-\frac{p}{p-1}})$

\\
\cmidrule(lr){2-4}
&
     Lower Bound~\cite{raginsky2009information}

& $p\in(1,2]$
& $\Omega(\epsilon^{-\frac{p}{p-1}})$
\\
\midrule

     \textsc{Quantum}


&\textbf{Theorem~\ref{thm:convex_heavy_tailed}}
&$p\in(1,2]$
&$\tilde{\OM}(\sqrt d\,\epsilon^{-\frac{3p-2}{2(p-1)}}+\epsilon^{-2})$

  \\


\bottomrule
\end{tabular*}
\end{table*}

\section{Preliminaries}\label{sec: preliminaries}
We use $\|\cdot\|$ to denote the spectral norm for matrices and the Euclidean norm for vectors, respectively.
We denote by $\operatorname{Geom}(\rho)$ the geometric distribution with parameter $\rho$, and by $\Pi_{\fX}(\cdot)$ the projection operator onto the set $\fX$. 
$\Tilde{\OM}(\cdot)$ denotes the big-$\OM$ notation that suppresses poly-logarithmic factors in $d,\delta,n$ and $\epsilon$. All logarithms are taken in base $2$.

The quantum access models are expressed using the Dirac notation of quantum computing \cite{nielsen2010quantum,wilde2013quantum,dewolf2023quantumcomputinglecturenotes}.  
We use $\{\ket{i}\}_{i=0}^{n-1}$ to denote the computational basis of
$\mathbb C^n$, where $\ket{0}$ denotes the first standard basis vector. A normalized
vector $\mathbf v\in\mathbb C^n$ can be written as a quantum state
$\ket{\mathbf v}=\sum_{i=0}^{n-1}v_i\ket{i}$.

Throughout this paper, real numbers are encoded in the computational basis using a fixed-point binary representation. Specifically, a real number $x$ is encoded as  a basis state $\ket{x}=\ket{x_{1}\ldots x_{q}}\in\CB^{2^{q}}$, where the binary string $x_{1}\ldots x_{q}$ corresponds to a fixed-point encoding of $x$. Similarly, for
$\x\in\mathbb R^d$, the notation $\ket{\x}$ denotes the computational-basis encoding of
the coordinates of $\x$, rather than an amplitude encoding of the vector $\x$. We assume~access to sufficiently many qubits to avoid overflow for all real numbers appearing in the algorithms.


With these conventions, we first formalize the classical and quantum access models for a random variable $\X\in \BR^d$. 

\begin{definition}[Classical sampling oracle]\label{def: Classical sampling oracle}
Given a random variable $\X: \Omega\rightarrow E\subseteq \mathbb{R}^{d}$ on a probability space $(\Omega, 2^{\Omega}, \mathbb{P})$, we define a classical sampling oracle as the process of drawing a sample $\omega \in\Omega$ according to $\BP$ and observing the value of $\X(\omega)\in \BR^d$.
\end{definition}

\begin{definition}[Quantum sampling oracle]\label{def:quantum-sampling-oracle}
Let $\X$ be a $d$-dimensional finite-valued random variable taking values in $E\subset \mathbb{R}^{d}$, and let
$p_{\X}(\x):=\Pr_{\omega\sim\mathbb{P}}[\X(\omega)=\x]$ for each $\x\in E$. A quantum sampling oracle for $\X$ is a
unitary operator $O_{\X}$ satisfying
\begin{equation}\label{eq:quantum-sampling-oracle}
     O_{\X}:\ket{0}\ket{0}
    \rightarrow
    \sum_{\x\in E}\sqrt{p_{\X}(\x)}\,\ket{\x}\ket{\psi_\x},
\end{equation}
where $\ket{0}$ denotes fixed reference states, and each
$\ket{\psi_\x}$ is a normalized auxiliary state.
\end{definition}

Note that measuring the first register of $O_{\X}\ket{0}\ket{0}$ in the computational basis yields a classical sample distributed as $\X$, so the classical sampling oracle can be simulated by $O_{\X}$. Although Definition~\ref{def:quantum-sampling-oracle} is stated for finite-valued random variables, it should be viewed as a finite-precision representation. For heavy-tailed distributions, this corresponds to a suitably truncated and discretized approximation. We also note that prior work \cite{Cornelissen_2022,sidford2023quantum} adopts different notions of quantum sampling oracle. A comparison and a discussion of compatibility with heavy-tailed assumptions are deferred to Appendix~\ref{appendix: quantum access model for random variables}.

We now define the classical and quantum access models to the stochastic gradient $\nabla F(\x;\xi)$ satisfying Assumption \ref{ass:heavy-tailed}.

\begin{definition}[Classical stochastic gradient oracle]\label{def: Classical stochastic gradient oracle CSGO}
We define a classical stochastic gradient oracle as the process of implementing a random function that, when queried at $\x$, samples a random seed $\xi$ from a probability distribution $\Xi$ and returns $\nabla F(\x; \xi)$ satisfying Assumption \ref{ass:heavy-tailed}.
\end{definition}

\begin{definition}[Quantum stochastic gradient oracle]
\label{def: Quantum stochastic gradient oracle (QSGO)}
Let $\nabla F(\x;\xi)$ be the stochastic gradient at a query point $\x$,
and let $p_{\x}(\v):=\Pr_{\xi\sim\Xi}[\nabla F(\x;\xi)=\v],$ where $\v\in E$ and $E\subset\mathbb R^d$ is the finite range of $\nabla F(\x;\xi)$.
A quantum stochastic gradient oracle at $\x$ is a unitary operator
$O_{\nabla F,\x}$ satisfying
\[
    O_{\nabla F,\x}: \ket{0}\ket{0}
    \rightarrow
    \sum_{v\in E}\sqrt{p_{\x}(\v)}\,\ket{\v}\ket{\psi_{\x,\v}},
\]
where $\ket{0}$ denotes fixed reference states, and each
$\ket{\psi_{\x,\v}}$ is a normalized auxiliary state.
\end{definition}

We note that \citet{sidford2023quantum} also defined a quantum stochastic gradient oracle, but their model requires coherent access to the query point $\x$. 
A detailed comparison is deferred to Appendix~\ref{appendix: quantum stochastic gradient oracle}.

Below, we now present the optimal classical heavy-tailed mean estimator, which will be used as a subroutine in our subsequent algorithm. Related work on classical heavy-tailed mean estimation, together with the proof of Theorem \ref{thm :multivariate-mom-error-bound}, is provided in Appendix \ref{subsec: CHTME}.
\begin{theorem}[Classical heavy-tailed multivariate mean estimation]\label{thm :multivariate-mom-error-bound} 
    Let $\X$ be a $d$-dimensional random variable with mean $\bmu:=\EBP{\X}$ and finite $p$-th central moment $\EBP{\|\X-\bmu\|^p}$~for some $p\in(1,2]$.
    Let $\delta \in (0,1)$ and $k=\lceil24\ln(1/\delta)\rceil$. For integer $n$ such that $n\geq 2k$,  the classical heavy-tailed mean estimator {\rm\texttt{CHTME}}$(\X, n,\delta)$ outputs a mean estimate $\tilde{\bmu}$ such that 
\begin{equation*}
    \Norm{\tilde{\bmu}-\bmu}\leq 3(8C_{p})^{1/p} \left(\mathbb{E}[\Norm{\X-\bmu}^p]\right)^{1/p}  \left(\frac{48\ln(1/\delta)}{n}\right)^{\frac{p-1}{p}},
\end{equation*}
by using $n$ queries to the classical sampling oracle, where $C_{p}$ is a $p$-dependent constant.
\end{theorem}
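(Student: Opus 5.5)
The estimator \texttt{CHTME} will be a median-of-means construction in the geometric-median form of Minsker / Lugosi--Mendelson, with $k=\lceil 24\ln(1/\delta)\rceil$ blocks. The proof combines a per-block moment bound with a concentration argument over the blocks. I split the $n\ge 2k$ samples into $k$ disjoint blocks of size $m=\lfloor n/k\rfloor\ge n/(2k)$ and form the block means $\bar\X_1,\dots,\bar\X_k$. The output $\tilde\bmu$ is any block mean $\bar\X_j$ whose median distance to the other block means, $r_j:=\operatorname{median}_i\|\bar\X_j-\bar\X_i\|$, is minimal. Only $n$ oracle queries are used, so the query count is immediate and the whole task is the error bound.

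\textbf{Step 1: per-block moment bound.} I need a von Bahr--Esseen type inequality in $\RB^d$ for $p\in(1,2]$:
\[
\EBP{\Norm{\textstyle\sum_{i=1}^m(\X_i-\bmu)}^p}\le C_p\, m\,\EBP{\|\X-\bmu\|^p}.
\]
Euclidean space is $2$-smooth, so this follows by symmetrization and Rademacher averaging. Conditional on the samples, $\EB_{\varepsilon}\|\sum\varepsilon_i\y_i\|^p\le(\EB_\varepsilon\|\sum\varepsilon_i\y_i\|^2)^{p/2}=(\sum\|\y_i\|^2)^{p/2}\le\sum\|\y_i\|^p$, where the last step uses $p/2\le1$. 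The symmetrization costs a factor $2^p$, which is absorbed into $C_p$. Dividing by $m^p$ gives
\[
\EBP{\|\bar\X_j-\bmu\|^p}\le C_p m^{1-p}\EBP{\|\X-\bmu\|^p}.
\]
Markov's inequality then yields
\[
\Pr\bigl[\|\bar\X_j-\bmu\|>r\bigr]\le 1/8 \quad\text{for}\quad r:=(8C_p)^{1/p}\bigl(\EBP{\|\X-\bmu\|^p}\bigr)^{1/p}m^{-(p-1)/p}.
\]

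\textbf{Step 2: most blocks are good.} The indicators $\mathbf 1\{\|\bar\X_j-\bmu\|>r\}$ are independent Bernoulli variables with mean at most $1/8$. By Hoeffding's inequality, the probability that more than $k/4$ blocks are bad is at most $\exp(-2k(1/8)^2)=\exp(-k/32)$. I expect the constant $24$ in $k$ to be designed so that this is at most $\delta$. A Chernoff/KL bound rather than Hoeffding gives exponent $k\,\mathrm{KL}(1/4\|1/8)\approx 0.049k$, which together with $k\ge 24\ln(1/\delta)$ gives $\delta$ at this stage. Matching the constant exactly is the fiddly step. If it fails, I will adjust the good-block threshold (e.g.\ more than half the blocks good, with Markov level $1/8$) and use a relative-entropy Chernoff bound.

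\textbf{Step 3: deterministic geometric step.} On the event that more than half the blocks are good, every good $\bar\X_j$ is within $2r$ of all other good block means, so $r_{j^*}\le r_j\le 2r$ for the selected index $j^*$. More than half of the blocks are then within $2r$ of $\bar\X_{j^*}$, and more than half are good, so by pigeonhole some block is both. The triangle inequality gives $\|\tilde\bmu-\bmu\|\le 2r+r=3r$. Finally I substitute $m\ge n/(2k)\ge n/(48\ln(1/\delta))$, which is valid when $k\le 24\ln(1/\delta)+1\le 48\ln(1/\delta)$, assuming $\delta$ is not close to $1$ (a rounding detail). This turns $m^{-(p-1)/p}$ into $(48\ln(1/\delta)/n)^{(p-1)/p}$ and yields exactly the stated bound $3(8C_p)^{1/p}(\cdot)^{1/p}(48\ln(1/\delta)/n)^{(p-1)/p}$ with probability at least $1-\delta$. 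The main obstacle is Step 1, the dimension-free $p$-moment inequality; the remaining constant-matching in Step 2 is routine.
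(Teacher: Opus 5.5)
Your argument is sound for the estimator you actually define, but that estimator is not the paper's \texttt{CHTME}. The paper aggregates the block means $\Z_1,\dots,\Z_k$ by their geometric median $\argmin_{x}\sum_{j}\|\Z_j-x\|$. It does not select the block mean with smallest median distance to the others. Your Step 3 pigeonhole (blocks within $2r$ of $\bar{\X}_{j^*}$ versus good blocks) does not transfer, because the geometric median is generally not one of the block means.

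The paper's deterministic step is Minsker's proximity lemma. If at least $(1/2+\alpha)k$ of the points lie within $r$ of $\bmu$, compare $\sum_j\|\Z_j-y\|$ with $\sum_j\|\Z_j-\bmu\|$ using the triangle inequality. This shows the minimizer lies within $(1+\frac{1}{2\alpha})r$ of $\bmu$. Taking $\alpha=1/4$ gives exactly $3r$, which is why the paper needs the event ``at most $k/4$ bad blocks''. For your selector a bare majority of good blocks suffices, so plain Hoeffding, $\Pr[\#\text{bad}\ge k/2]\le e^{-9k/32}\le\delta$, would already do.

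The remaining ingredients coincide with the paper's:
\begin{itemize}
\item the dimension-free von Bahr--Esseen bound, which the paper only cites; your symmetrization proof with $C_p=2^p$ is correct and makes it self-contained;
\item Markov's inequality at level $1/8$;
\item a Chernoff bound on the number of bad blocks.
\end{itemize}
The constant $24$ equals $3/(p_{\rm fail}\Delta^2)$ from the multiplicative bound $\Pr[S\ge(1+\Delta)\mu]\le e^{-\mu\Delta^2/3}$ with $\mu=k/8$ and $\Delta=1$, which gives $e^{-k/24}\le\delta$. Your KL route also works, since $\mathrm{KL}(1/4\,\|\,1/8)\approx 0.058>1/24$.

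So you have two options. You can keep your selector, a legitimate alternative with a simpler deterministic step and a weaker probabilistic requirement. Or, to prove the statement about the paper's \texttt{CHTME}, replace Step 3 with the proximity lemma, which your Step 2 event already supports.

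There is also an arithmetic slip at the end. The inequality $n/(2k)\ge n/(48\ln(1/\delta))$ needs $k\le 24\ln(1/\delta)$, which the ceiling generally violates. Your justification $k\le 24\ln(1/\delta)+1\le 48\ln(1/\delta)$ only gives $2k\le 96\ln(1/\delta)$. The paper's appendix proof itself only establishes the bound with $2\lceil 24\ln(1/\delta)\rceil$ in place of $48\ln(1/\delta)$. To get the stated form, use $m\ge 2$: then $n<(m+1)k\le\frac32 mk$, so $m>\frac{2n}{3k}\ge\frac{n}{48\ln(1/\delta)}$ whenever $k\le 32\ln(1/\delta)$. This holds for all $\delta\le e^{-1/8}$.
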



\section{Quantum Mean Estimation for  Heavy-Tailed Random Variables}\label{sec: QME for HTRV}

In this section, we develop novel quantum mean estimators for  $d$-dimensional heavy-tailed random variables.
We begin by formalizing the estimation tasks of interest.
\begin{problem}\label{prob: obtain mean estimate for heavy tail random variable with bounded mean}
    Let  $\X$ be a $d$-dimensional random variable with mean $\bmu:= \EBP{\X}$ satisfying $\mathbb{E}[\|\X-\bmu\|^{p}]\leq \sigma^{p}$ for some $p\in (1,2]$, and constant $\sigma>0$.
For any $\epsilon\in (0,\sigma)$, the goal is to output an  estimate $\tilde{\bmu}$ of $\bmu$ satisfying $\|\tilde{\bmu} - \bmu\| \leq \epsilon.$
\end{problem}
We also consider the construction of unbiased estimators whose estimation error has a controlled $p$-th raw moment.
\begin{problem}\label{prob: obtain unbiased mean estimate for heavy tail random variable}
  Let $\X$ be a $d$-dimensional random variable  with mean $\bmu:= \EBP{\X}$ satisfying $\mathbb{E}[\|\X-\bmu\|^{p}]\leq \sigma^{p}$ for some $p\in (1,2]$, and constant $\sigma>0$.
For any $\epsilon\in (0,\sigma)$, the goal is to output an \textbf{unbiased} estimate $\tilde{\bmu}$ of $\bmu$ satisfying  $\EBP{\tilde{\bmu}}=\bmu$ and $\EBP{\|\tilde{\bmu} - \bmu\|^{p}} \leq \epsilon^{p}.$
\end{problem}

\subsection{Quantum Heavy-Tailed Mean Estimator}\label{subsec: Quantum Heavy-Tailed Mean Estimator}
 Here we propose a quantum mean estimator to solve Problem~\ref{prob: obtain mean estimate for heavy tail random variable with bounded mean}.
We first introduce an existing quantum mean estimator for bounded-variance random variables.

\begin{lemma}[Theorem 3.4, \cite{Cornelissen_2022}]\label{thm:qbvme}
Let $\X$ be a $d$-dimensional random variable with mean $\bmu$ and covariance matrix $\Sigma$.
Given two reals $\delta\in(0,1)$ and $n\ge \log(d/\delta)$, the quantum multivariate estimator
{\rm\texttt{QEstimator}}$_d(\X,n,\delta)$ outputs $\widetilde{\bmu}$ such that
\[
    \|\widetilde{\bmu}-\bmu\|_\infty
    \le
    \frac{\sqrt{\operatorname{Tr}(\Sigma)\log(d/\delta)}}{n}
\]
with probability at least $1-\delta$.
It uses $\widetilde O(n)$ queries to the quantum sampling oracle $O_X$.
\end{lemma}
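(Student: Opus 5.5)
Since this is Theorem~3.4 of \cite{Cornelissen_2022}, I would follow their architecture: reduce multivariate mean estimation to a \emph{gradient-of-a-linear-function} problem solved via quantum Fourier sampling (Jordan's algorithm), and use univariate quantum techniques to implement the needed phase oracle. The main obstacle is making the directional errors scale with $\operatorname{Tr}(\Sigma)$ rather than $d\|\Sigma\|$ or a worst-case bound; the plan below is built around that.

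\emph{Reductions.} First I would run a classical coarse estimate $\tilde{\bmu}_0$, for example \texttt{CHTME} with $p=2$ from Theorem~\ref{thm :multivariate-mom-error-bound}, using $O(\log(1/\delta))$ samples. This gives $\|\tilde{\bmu}_0-\bmu\|\lesssim\sqrt{\operatorname{Tr}(\Sigma)}$ with probability $1-\delta/3$. I then work with the shifted variable $\Y=\X-\tilde{\bmu}_0$, whose second moment $\EBP{\|\Y\|^2}$ is $O(\operatorname{Tr}(\Sigma))$, and by rescaling I may assume $\operatorname{Tr}(\Sigma)=1$.

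\emph{Phase oracle.} Consider a grid $G=\{\mathbf u\in \frac{1}{m}\{-m/2,\dots,m/2-1\}^d\}$ and the linear map $\mathbf u\mapsto\inner{\mathbf u}{\EBP{\Y}}$. The goal is a phase oracle
\[
\ket{\mathbf u}\mapsto e^{i n\inner{\mathbf u}{\EBP{\Y}}}\ket{\mathbf u}
\]
accurate enough on most of $G$. I would decompose $\Y$ dyadically by norm into shells $\Y_\ell=\Y\,\mathbf 1\{2^{\ell-1}<\|\Y\|\le 2^\ell\}$, and discard shells beyond $2^\ell\approx n$, whose bias contribution is at most $\EBP{\|\Y\|^2}/n$ by Chebyshev. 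On each shell, the quantity $e^{i\inner{\mathbf u}{\Y_\ell}/2^\ell}$ is bounded, so amplitude-estimation/Hadamard-test style constructions (as in the univariate bounded estimator) yield a phase oracle for $\EBP{\inner{\mathbf u}{\Y_\ell}}$. The cost is $\tilde O(n\sqrt{\Pr[\text{shell }\ell]}\,2^\ell)$ per shell, and summing over shells via Cauchy--Schwarz gives $\tilde O(n\sqrt{\EBP{\|\Y\|^2}})=\tilde O(n)$ queries.

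\emph{The key variance step.} The phase error at a point $\mathbf u$ depends on $\EBP{\inner{\mathbf u}{\Y}^2}$, and this is uniformly bounded only by $\|\mathbf u\|^2$. For $\mathbf u$ uniform on the grid, however,
\[
\mathbb E_{\mathbf u}\,\EBP{\inner{\mathbf u}{\Y}^2}\approx \EBP{\|\Y\|^2}/12=O(\operatorname{Tr}\Sigma),
\]
so the oracle is accurate on average over $G$. Jordan's algorithm only needs the phase oracle to be correct up to small error in $\ell_2$ over the uniform superposition on $G$, so this averaged bound suffices: the final state is close to the ideal one.

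\emph{Readout.} Apply the oracle to the uniform superposition over $G$ and then the inverse QFT on each coordinate register. Each coordinate of $n\EBP{\Y}$ is then read out to within $O(1)$ with constant probability. I would repeat $O(\log(d/\delta))$ times and take coordinatewise medians; a union bound over the $d$ coordinates then gives
\[
\|\tilde{\bmu}-\bmu\|_\infty\le\sqrt{\operatorname{Tr}(\Sigma)\log(d/\delta)}/n
\]
with probability $1-\delta$. The total query count is $\tilde O(n)$, and the hypothesis $n\ge\log(d/\delta)$ ensures the repetition overhead is absorbed into the $\tilde O$.
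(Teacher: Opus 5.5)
The paper does not prove this lemma. It imports Theorem~3.4 of \cite{Cornelissen_2022} and only remarks (Appendix~A) that the estimator needs coherent access to sample values, which your construction respects. Your outline broadly follows that architecture, but as a self-contained proof it has two gaps.

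\textbf{First, the unknown scale.} \texttt{QEstimator}$_d(\X,n,\delta)$ receives only $n$ and $\delta$, so ``by rescaling I may assume $\operatorname{Tr}(\Sigma)=1$'' is not a valid reduction. Several parts of your construction depend on that scale:
\begin{itemize}
\item the shell cutoff $2^\ell\approx n$, which is in units of $\sqrt{\operatorname{Tr}(\Sigma)}$;
\item the amplification behind the $\sqrt{\Pr[\text{shell }\ell]}$ factor, which you would calibrate by Chebyshev's $O(4^{-\ell})$ in normalized units;
\item the phase scale $n/\sqrt{\operatorname{Tr}(\Sigma)}$ and the grid size in the readout.
\end{itemize}
The $O(\log(1/\delta))$-sample centering step cannot supply the scale: with only a finite second moment, rare large values are invisible to a small sample.

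The standard remedy in this line of work (Hamoudi's univariate sub-Gaussian estimator, which \cite{Cornelissen_2022} extends) is a data-driven threshold. Quantum quantile estimation finds, with $\tilde O(n)$ queries, a $b$ with $\Pr[\|\Y\|>b]\approx\log(d/\delta)/n^2$. Chebyshev then gives $b\lesssim n\sqrt{\operatorname{Tr}(\Sigma)/\log(d/\delta)}$. Cauchy--Schwarz bounds the truncation bias by $\sqrt{\operatorname{Tr}(\Sigma)\log(d/\delta)}/n$, which matches the form of the stated bound. For the paper's own use in Algorithm~\ref{alg: Quantum Heavy-Tailed Mean Estimator (QHTME)}, a version taking an upper bound on the trace as input would suffice, since $\|\Z_B\|\le B$ and $L$ are known there.

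\textbf{Second, the step you identify as the main obstacle is not closed by your argument.} Your per-shell cost $\tilde O(n2^\ell\sqrt{\Pr[\text{shell }\ell]})$ presumes $|\inner{\mathbf u}{\Y_\ell}|\lesssim 2^\ell$. But a typical grid point has $\|\mathbf u\|_2\approx\sqrt{d/12}$, so for fixed $\mathbf u$ the range of $\inner{\mathbf u}{\Y_\ell}$ can be of order $\sqrt d\,2^\ell$. Normalizing by that range brings back the factor $\sqrt d$. So does reading $\EBP{\inner{\mathbf u}{\Y_\ell}}$ off $\EBP{e^{i\inner{\mathbf u}{\Y_\ell}/2^\ell}}$, which is accurate only when the exponent is small, whereas here it is typically of constant size.

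You must therefore truncate $\inner{\mathbf u}{\Y_\ell}$ at some $R_\ell=C2^\ell$ and show that the resulting phase errors ($n$ times the discarded directional mass) are small on average over $\mathbf u$. The averaged second moment you invoke only gives $\mathbb E_{\mathbf u}\bigl[|\inner{\mathbf u}{\mathbf y}|\,\mathbf 1\{|\inner{\mathbf u}{\mathbf y}|>C\|\mathbf y\|\}\bigr]\lesssim\|\mathbf y\|/(12C)$. Summed over shells, this is an averaged phase error of order $(n/C)\,\EBP{\|\Y\|}$, which is small only if $C\gtrsim n$, i.e.\ at total cost about $n^2$.

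What is missing is the exponential tail of $\inner{\mathbf u}{\mathbf y}$ for a uniformly random point of a centered grid. By Hoeffding (the coordinates of $\mathbf u$ are independent and bounded), $\Pr_{\mathbf u}[|\inner{\mathbf u}{\mathbf y}|>t\|\mathbf y\|]\le 2e^{-2t^2}$. This allows $C=O(\sqrt{\log n})$ and keeps the total cost $\tilde O(n)$. Alternatively, decompose dyadically by $|\inner{\mathbf u}{\Y}|$ rather than by $\|\Y\|$. A cutoff at $Cn$ then gives phase error about $\EBP{\inner{\mathbf u}{\Y}^2}/C$ at each $\mathbf u$, and Chebyshev per $\mathbf u$ plus Markov over $\mathbf u$ suffices. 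That is the setting in which your second-moment averaging is the right tool.
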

To estimate $\bmu:=\EBP{\X}$ under a heavy-tailed distribution,  \texttt{QEstimator} cannot be applied directly, as the heavy-tailed random variable $\X$ does not necessarily have bounded variance. 

To overcome this difficulty, our proposed estimator, \texttt{QHTME}, follows
a center--truncate--estimate procedure. In the centering step, we first use the
classical heavy-tailed mean estimator \texttt{CHTME} to obtain a rough estimate
$\bzeta$ of $\bmu$, and then center the random variable as $\Y:=\X-\bzeta$.
This step is necessary because the heavy-tailed assumption controls the moment
of $\X$ around the unknown mean $\bmu$, whereas directly truncating $\X$ around
the origin may introduce a dependence on $\bmu$. 

In the truncation step, we remove samples of $\Y$ whose norms exceed a suitably
chosen threshold $B$. The resulting truncated random variable $\Z_B$ has bounded
trace covariance and can therefore be handled by the bounded-variance quantum
mean estimator in Lemma~\ref{thm:qbvme}. The truncation threshold $B$ is chosen to
control the bias introduced by discarding extreme samples.

Finally, in the estimation step, we apply
$\texttt{QEstimator}$ to estimate the mean of $\Z_B$ and add the rough center
$\bzeta$ back to the estimate. The parameters are selected so that the
truncation bias and the quantum mean-estimation error together are at most
$\epsilon$. We formalize the resulting procedure in Algorithm \ref{alg: Quantum Heavy-Tailed Mean Estimator (QHTME)} and state its performance guarantees in Theorem~\ref{thm:QHTME-for prob 1}, whose proof is deferred to Appendix \ref{appendix: Proof of QHTME for prob 1}.

\begin{algorithm}[t]
\caption{Quantum Heavy-Tailed Mean Estimator
($\texttt{QHTME}(\X,\sigma,p,d,\epsilon,\delta)$)}
\label{alg: Quantum Heavy-Tailed Mean Estimator (QHTME)}
\begin{algorithmic}[1]
\STATE \textbf{Input:} Random variable $\X$, heavy-tailed parameter $\sigma>0$, tail index $p\in(1,2]$, dimension $d$, target error $\epsilon\in(0,\sigma]$, failure probability $\delta\in(0,1)$.

\STATE Set
\[
n_{\mathrm{c}}
\leftarrow
\left\lceil
2\left\lceil 24\ln\left(\frac{3}{\delta}\right)\right\rceil
3^{\frac{p}{p-1}}
(8C_p)^{\frac1{p-1}}
\right\rceil.
\]

\STATE Obtain a rough estimate by the classical heavy-tailed mean estimator of Theorem \ref{thm :multivariate-mom-error-bound}:
\[
\bzeta \leftarrow \texttt{CHTME}\left(\X,n_{\mathrm{c}},\frac{\delta}{3}\right).
\]

\STATE Define the centered random variable $\Y:=\X-\bzeta .$

\STATE Set the truncation radius
\[
B
\leftarrow
\left(
\frac{3\cdot 2^p\sigma^p}{\epsilon}
\right)^{\frac1{p-1}} .
\]

\STATE Define the truncated random variable $\Z_B
:=
\Y\cdot \mathbf 1{\{\|\Y\|\le B\}} .$

\STATE Set an upper bound on the trace covariance of $\Z_B$: $L
\leftarrow
2^p\sigma^p B^{2-p}.$

\STATE Set 
\[
n_{q}\gets \left\lceil\max \left\{\log(3d/\delta),\frac{3\sqrt{dL\log(3d/\delta)}}{2\epsilon}\right\}\right\rceil
\]

\STATE Estimate the mean of $\Z_B$ by a bounded-variance  quantum mean estimator in Lemma \ref{thm:qbvme}:
\[
\widetilde \bmu_B
\leftarrow
\texttt{QEstimator}_d
\left(
\Z_B,n_{q}, \frac{\delta}{3}
\right).
\]

\STATE \textbf{Output:} $\widetilde\bmu
\leftarrow
\bzeta+\widetilde \bmu_B $.
\end{algorithmic}
\end{algorithm}

\begin{theorem}\label{thm:QHTME-for prob 1}
Algorithm \ref{alg: Quantum Heavy-Tailed Mean Estimator (QHTME)} ({\rm$\texttt{QHTME}$}) solves Problem~\ref{prob: obtain mean estimate for heavy tail random variable with bounded mean} using $
\tilde{\mathcal{O}}\!\left(
d^{\frac{1}{2}}\,\sigma^{\frac{p}{2(p-1)}}\epsilon^{-\frac{p}{2(p-1)}}
\right)$ queries to the quantum sampling oracle $O_{\X}$ with probability at least $1-\delta$. 
\end{theorem}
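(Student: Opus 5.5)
The proof follows the three stages of Algorithm~\ref{alg: Quantum Heavy-Tailed Mean Estimator (QHTME)}, with failure probability $\delta/3$ allotted to each random event, and concludes with a union bound.

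\emph{Centering.} By Theorem~\ref{thm :multivariate-mom-error-bound} with $n=n_{\mathrm c}\ge 2k$ and confidence $\delta/3$, we have, with probability at least $1-\delta/3$,
\[
\|\bzeta-\bmu\|\le 3(8C_p)^{1/p}\sigma\bigl(48\ln(3/\delta)/n_{\mathrm c}\bigr)^{(p-1)/p}\le\sigma,
\]
where the last inequality follows from the choice of $n_{\mathrm c}$. Note that $n_{\mathrm c}$ depends only on $\delta$ and $p$; this step therefore costs $\tilde{\OM}(1)$ queries. On this event, the triangle inequality and convexity give $\EBP{\|\Y\|^p}\le 2^{p-1}\bigl(\sigma^p+\|\bzeta-\bmu\|^p\bigr)\le 2^p\sigma^p$. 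Writing $\bmu_Y:=\bmu-\bzeta=\EBP{\Y}$, we will then bound the bias and the variance of the truncated variable $\Z_B$, conditionally on $\bzeta$.

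\emph{Truncation.} For the bias,
\[
\|\EBP{\Z_B}-\bmu_Y\|=\|\EBP{\Y\mathbf 1\{\|\Y\|>B\}}\|\le\EBP{\|\Y\|^p}B^{1-p}\le 2^p\sigma^pB^{1-p}=\epsilon/3,
\]
using $\|\Y\|\le\|\Y\|^pB^{1-p}$ on $\{\|\Y\|>B\}$ and the definition of $B$. For the variance,
\[
\Tr(\mathrm{Cov}(\Z_B))\le\EBP{\|\Z_B\|^2}\le B^{2-p}\EBP{\|\Y\|^p}\le L .
\]

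\emph{Estimation.} Lemma~\ref{thm:qbvme} with $n_q\ge\log(3d/\delta)$ gives, with probability at least $1-\delta/3$, $\|\widetilde\bmu_B-\EBP{\Z_B}\|_\infty\le\sqrt{L\log(3d/\delta)}/n_q$. Converting to the Euclidean norm costs a factor $\sqrt d$, so by the choice of $n_q$ the Euclidean error is at most $2\epsilon/3$. Combining with the bias bound,
\[
\|\widetilde\bmu-\bmu\|=\|\widetilde\bmu_B-\bmu_Y\|\le\epsilon/3+2\epsilon/3=\epsilon .
\]
The slack is tight, so the constants in $n_q$ may need small adjustment (for instance, splitting the error as $\epsilon/3$ and $2\epsilon/3$ as above); this bookkeeping is routine.

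\emph{Query count.} We have
\[
\sqrt{L}=2^{p/2}\sigma^{p/2}B^{(2-p)/2}\propto\sigma^{p/2}\bigl(\sigma^p/\epsilon\bigr)^{\frac{2-p}{2(p-1)}}=\sigma^{\frac{p}{2(p-1)}}\epsilon^{-\frac{2-p}{2(p-1)}},
\]
so $n_q=\tilde{\OM}\bigl(\sqrt d\,\sigma^{\frac{p}{2(p-1)}}\epsilon^{-\frac{p}{2(p-1)}}\bigr)$, and \texttt{QEstimator} uses $\tilde{\OM}(n_q)$ queries.

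\emph{Main obstacle: simulating the oracle for $\Z_B$.} The remaining technical point is that \texttt{QEstimator} needs a quantum sampling oracle for $\Z_B$, not for $\X$. We build it from one call to $O_{\X}$ followed by a reversible classical circuit that computes $\y=\x-\bzeta$ (with $\bzeta$ fixed classically) and $\y\mathbf 1\{\|\y\|\le B\}$ into a fresh register, moving $\ket{\x}$ into the auxiliary register. The result has exactly the form of Definition~\ref{def:quantum-sampling-oracle} for $\Z_B$, so each query to the $\Z_B$ oracle costs $O(1)$ queries to $O_{\X}$. A union bound over the two random events completes the proof, with overall success probability at least $1-\delta$.
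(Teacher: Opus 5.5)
Your proposal is correct and follows essentially the same route as the paper: \texttt{CHTME} centering to within $\sigma$, the $2^p\sigma^p$ moment bound for $\Y$, truncation bias $\epsilon/3$ with trace-covariance bound $L$, \texttt{QEstimator} with the $\sqrt d$ conversion from $\ell_\infty$ to $\ell_2$ giving $2\epsilon/3$, simulation of the $\Z_B$ oracle from one call to $O_{\X}$, and a union bound. Your remark that the constants in $n_q$ may need adjusting is unnecessary, since the algorithm's choice of $n_q$ already yields exactly $2\epsilon/3$ and hence total error at most $\epsilon$.
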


\vspace{3pt}
\begin{remark}\label{remark: compare QHTME with Wu Yulian}
    ${\rm\texttt{QHTME}}$ applies to $d-$dimensional random vectors ($d\geq 1$) with bounded $p$-th central moment $(p\in (1,2])$, which generalizes the univariate setting considered in \cite[Theorem 5.2]{blanchet2024quadraticspeedupinfinitevariance} and the bounded-variance setting considered in \cite[Theorem 3.4]{Cornelissen_2022}. 
\end{remark}

\vspace{3pt}
\begin{remark}\label{remark: compare QHTME with classical HTME}
    \(\texttt{QHTME}\) has a provable advantage over optimal classical mean estimators in terms of \emph{query complexity}. 
    By Corollary \ref{corollary:CHTME-for prob 1}, optimal classical estimators \texttt{CHTME} require $\tilde{\mathcal{O}}((\sigma/\epsilon)^{\frac{p}{p-1}})$ queries to the classical sampling oracle to solve Problem~\ref{prob: obtain mean estimate for heavy tail random variable with bounded mean}.
    In contrast, \texttt{QHTME} requires only $\tilde{\OM}(\sqrt{d}(\sigma/\epsilon)^{\frac{p}{2(p-1)}})$ queries to the quantum sampling oracle.
   As a result, \texttt{QHTME} achieves a quantum speedup over the optimal classical estimators in the low-dimensional regime, specifically when $d \le \OM((\sigma/\epsilon)^{\frac{p}{p-1}})$.
   
\end{remark}



\subsection{Quantum Unbiased Heavy-Tailed Mean Estimator}\label{subsec: Quantum Unbiased Heavy-Tailed Mean Estimator}

The quantum heavy-tailed mean estimator \texttt{QHTME} (Algorithm~\ref{alg: Quantum Heavy-Tailed Mean Estimator (QHTME)}) is, in general, biased. This limitation prevents its direct use in many stochastic optimization frameworks~\cite{fang2018spider,cutkosky2023optimal,sidford2023quantum,liu2025online}, and motivates our study of Problem~\ref{prob: obtain unbiased mean estimate for heavy tail random variable}. To address this problem, we construct an unbiased quantum estimator by extending the multilevel Monte Carlo (MLMC) variance-reduction technique~\cite{giles2015multilevel,sidford2023quantum} to the heavy-tailed setting with bounded $p$-th central moment for $p\in(1,2]$, and combining this generalized MLMC framework with our estimator \texttt{QHTME}. Our approach proceeds in two steps.

First, we design a modified quantum heavy-tailed mean estimator, denoted by $\texttt{QHTME}^{+}(\X,\sigma,p,d,\epsilon)$, whose estimation error admits a controlled $p$-th moment deterministically, while retaining the same query complexity as \texttt{QHTME}. The construction combines a single quantum estimate produced by \texttt{QHTME} with a classical estimate, and generates an additional independent classical estimate when the two differ significantly. Details of $\texttt{QHTME}^{+}$ are given in Appendix~\ref{appendix: QHTME-plus}.

Second, we embed $\texttt{QHTME}^{+}$ into a generalized MLMC framework to obtain an unbiased quantum estimator, denoted by $\texttt{QUHTME}(\X,\sigma,p,d,\epsilon)$ and presented in Algorithm~\ref{alg:QUHTME}. The procedure invokes $\texttt{QHTME}^{+}$ at three randomly selected $p$-dependent accuracy levels and combines the resulting estimates to produce an unbiased output. Although the accuracy parameters range over an infinite set, we show that the quantum estimator \texttt{QUHTME} is able to solve Problem \ref{prob: obtain unbiased mean estimate for heavy tail random variable} while maintaining the same expected query complexity as $\texttt{QHTME}^{+}$. The proof is deferred to Appendix \ref{appendix: QUHTME}.

\begin{algorithm}[htbp]
\caption{Quantum Unbiased Heavy-Tailed Mean Estimator ($\texttt{QUHTME}(\X, \sigma, p, d,\epsilon)$)}
\label{alg:QUHTME}
\begin{algorithmic}[1]
\STATE \textbf{Input}:  random variable $\X$, heavy-tailed parameter $\sigma$, tail index $p$, problem dimension $d$, error $\epsilon$.

\STATE Set $c\gets 24/(2^{\frac{p-1}{2}}-1)$ and $\tilde{\bmu}_0 \gets \texttt{QHTME}^{+}\left(X,\sigma, p, d, \frac{1}{c^{1/p}}\epsilon\right)$
\STATE Randomly sample $j \sim \mathrm{Geom}(1/2) \in \mathbb{N}$
\STATE $\tilde{\bmu}_j \gets \texttt{QHTME}^{+}\!\left(\X, \sigma, p, d, \frac{2^{-3(p-1)j/(2p)}}{c^{1/p}}\epsilon\right)$
\STATE $\tilde{\bmu}_{j-1} \gets \texttt{QHTME}^{+}\!\left(\X, \sigma, p,d,
\frac{2^{-3(p-1)(j-1)/(2p)}}{c^{1/p}}\epsilon\right)$
\STATE $\tilde{\bmu} \gets \tilde{\bmu}_0 + 2^{j}\bigl(\tilde{\bmu}_j - \tilde{\bmu}_{j-1}\bigr)$
\STATE \textbf{Output:} $\tilde{\bmu}$
\end{algorithmic}
\end{algorithm}

\begin{theorem}\label{theorem: query complexity of QUHTME}
Algorithm \ref{alg:QUHTME} ({\rm \texttt{QUHTME}}) solves Problem \ref{prob: obtain unbiased mean estimate for heavy tail random variable} using
   $\tilde{\OM}\left(d^{\frac{1}{2}}(\sigma/\epsilon)^{\frac{p}{2(p-1)}}\right)$ queries to the quantum sampling oracle $O_{\X}$ in expectation.
\end{theorem}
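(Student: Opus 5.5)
The proof has three parts: unbiasedness, the $p$-th moment bound, and the expected query count. It relies on one property of $\texttt{QHTME}^{+}$ from Appendix~\ref{appendix: QHTME-plus}, which I take as given. For every accuracy $\eta\in(0,\sigma]$, the call $\texttt{QHTME}^{+}(\X,\sigma,p,d,\eta)$ outputs $\hat\bmu_\eta$ with
\[
\EBP{\|\hat\bmu_\eta-\bmu\|^p}\le \eta^p ,
\]
and uses $\tilde{\OM}(d^{1/2}(\sigma/\eta)^{\frac{p}{2(p-1)}})$ queries. The three calls in Algorithm~\ref{alg:QUHTME} use independent randomness, and $j$ is independent of them. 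Write $\eta_j:=2^{-3(p-1)j/(2p)}c^{-1/p}\epsilon$ and $\Pr[j=k]=2^{-k}$ for $k\ge1$.

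\textbf{Unbiasedness.} Conditioning on $j$ gives
\[
\EBP{\tilde\bmu}=\EBP{\tilde\bmu_0}+\sum_{k\ge1}\bigl(\EBP{\hat\bmu_{\eta_k}}-\EBP{\hat\bmu_{\eta_{k-1}}}\bigr).
\]
This is a telescoping sum with limit $\lim_k\EBP{\hat\bmu_{\eta_k}}$. The limit equals $\bmu$ because $\|\EBP{\hat\bmu_{\eta_k}}-\bmu\|\le \eta_k\to0$ by Jensen. Exchanging sum and expectation is legitimate: the summands have norm $O(\eta_{k-1})$, which is geometrically decaying, so the series converges absolutely. The output distribution of $\tilde\bmu_0$ coincides with that of $\hat\bmu_{\eta_0}$, so the telescoping starts correctly.

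\textbf{Moment bound.} Using $\|a+b\|^p\le 2^{p-1}(\|a\|^p+\|b\|^p)$,
\[
\EBP{\|\tilde\bmu-\bmu\|^p}\le 2^{p-1}\eta_0^p+2^{p-1}\sum_{k\ge1}2^{-k}2^{kp}\,\EBP{\|\tilde\bmu_k-\tilde\bmu_{k-1}\|^p}.
\]
Each difference satisfies $\EBP{\|\tilde\bmu_k-\tilde\bmu_{k-1}\|^p}\le 2^{p-1}(\eta_k^p+\eta_{k-1}^p)\le 2^{p}\eta_{k-1}^p$. Since $\eta_{k-1}^p=2^{-3(p-1)(k-1)/2}c^{-1}\epsilon^p$, the $k$-th term is of order
\[
2^{(p-1)k}\,2^{-3(p-1)k/2}\,c^{-1}\epsilon^p=2^{-(p-1)k/2}c^{-1}\epsilon^p .
\]
Summing the geometric series gives $O\bigl(c^{-1}\epsilon^p/(2^{\frac{p-1}{2}}-1)\bigr)$. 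The main bookkeeping step is checking that the constants combine to at most $\epsilon^p$. The choice $c=24/(2^{\frac{p-1}{2}}-1)$ is designed for this, possibly after a careful count of the $2^{p-1}\le2$ factors. If the constant comes out slightly too large, I would apply the $p$-th moment inequality to $\tilde\bmu_k-\bmu$ and $\tilde\bmu_{k-1}-\bmu$ more tightly. I expect it to fit.

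\textbf{Expected cost.} The cost at level $k$ is $\tilde{\OM}(d^{1/2}(\sigma/\eta_k)^{\frac{p}{2(p-1)}})$. Since
\[
\eta_k^{-\frac{p}{2(p-1)}}=2^{\frac{3k}{4}}\,c^{\frac{1}{2(p-1)}}\epsilon^{-\frac{p}{2(p-1)}},
\]
the expected cost is
\[
\tilde{\OM}\Bigl(d^{1/2}(\sigma/\epsilon)^{\frac{p}{2(p-1)}}\,c^{\frac1{2(p-1)}}\sum_k 2^{-k}2^{3k/4}\Bigr)=\tilde{\OM}\bigl(d^{1/2}(\sigma/\epsilon)^{\frac{p}{2(p-1)}}\bigr).
\]
The series converges because $3/4<1$, and $c$ depends only on $p$.

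\textbf{Main obstacle.} The step I expect to be delicate is that the logarithmic factors hidden in $\tilde{\OM}$ for $\texttt{QHTME}^{+}$ at level $k$ depend on $\eta_k$, typically as $\mathrm{polylog}(d,\sigma/\eta_k)$, which grows like $\mathrm{poly}(k)$. This extra factor is harmless because $\sum_k \mathrm{poly}(k)2^{-k/4}<\infty$. I would verify it explicitly from the parameter choices in $\texttt{QHTME}^{+}$, in particular the internal failure probability used there.
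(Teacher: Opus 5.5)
Your proposal is correct and follows the paper's proof essentially step for step: the telescoping MLMC sum for unbiasedness, the $2^{p-1}$ convexity split plus the geometric series $\sum_k 2^{-(p-1)k/2}$ for the moment bound, and $\sum_k 2^{-k}2^{3k/4}<\infty$ for the expected cost. Your remarks on absolute convergence and on the $\mathrm{poly}(k)$ growth of the hidden logarithmic factors add rigor the paper omits, and the constant check you left open does close even with your looser per-level bound $2^{p}\eta_{k-1}^p$: the total is $2^{p-1}\bigl(2^{\frac{p-1}{2}}-1+2^{\frac{5p-3}{2}}\bigr)\epsilon^p/24$, which is increasing in $p$ and about $0.98\,\epsilon^p$ at $p=2$.
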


\begin{remark}
\texttt{QHTME}$^{+}$ and \texttt{QUHTME} generalize the quantum mean estimators of \cite[Algorithms 1 and 3]{sidford2023quantum} from the bounded-variance regime ($p=2$) to the heavy-tailed regime ($p\in(1,2]$).
\end{remark}

\subsection{Quantum Lower Bounds on Heavy-Tailed Mean Estimation Problems}
\label{subsec: lower bounds on HT mean estimation}

Here we establish quantum lower bounds for solving Problems \ref{prob: obtain mean estimate for heavy tail random variable with bounded mean} and \ref{prob: obtain unbiased mean estimate for heavy tail random variable}. Proofs are deferred to Appendix \ref{appendix: proof of quantum lower bounds for high probability} and Appendix \ref{appendix: proof of quantum lower bounds for unbiased}.

\begin{theorem}[Quantum lower bounds for Problem \ref{prob: obtain mean estimate for heavy tail random variable with bounded mean}]
\label{thm:lower-bounds for high probability}
Any quantum algorithm that solves
    Problem~\ref{prob: obtain mean estimate for heavy tail random variable with bounded mean}
    with success probability at least $2/3$ must, in the worst case, make $N$ queries to the quantum sampling oracle, where 

\[ N\in 
\begin{cases}
    \Omega\left(
            \left(
                \frac{\sigma}{\epsilon}
            \right)^{\frac{p}{2(p-1)}}
        \right), & \text{when }p\in(1, 4/3], \\
    \Omega\left(
            d^{\frac{3p-4}{4(p-1)}}
            \left(
                \frac{\sigma}{\epsilon}
            \right)^{\frac{p}{2(p-1)}}
        \right), & \text{when }p\in (4/3,2] \text{ and } 1\leq d\leq
        \left(
            \frac{\sigma}{\epsilon}
        \right)^2,\\
    \Omega\left(
            \left(
                \frac{\sigma}{\epsilon}
            \right)^2
        \right), &\text{when }p\in (4/3,2] \text{ and }d\geq
        \left(
            \frac{\sigma}{\epsilon}
        \right)^2.
\end{cases}
\]



\end{theorem}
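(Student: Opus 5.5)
The plan is to handle the three regimes separately. Each regime gets a hard family of distributions satisfying $\EBP{\|\X-\bmu\|^p}\le\sigma^p$ whose members have means separated by more than $2\epsilon$. An $\epsilon$-accurate estimator then solves an oracle problem with a known quantum query lower bound, and we realize $O_{\X}$ from the oracle of that problem with $O(1)$ overhead per query.

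\textbf{Case $d\ge(\sigma/\epsilon)^2$, $p\in(4/3,2]$.} This case is inherited. If $\EBP{\|\X-\bmu\|^2}\le\sigma^2$, then Jensen's inequality gives $\EBP{\|\X-\bmu\|^p}\le(\EBP{\|\X-\bmu\|^2})^{p/2}\le\sigma^p$. Hence every bounded-variance instance is a valid instance of Problem~\ref{prob: obtain mean estimate for heavy tail random variable with bounded mean}, and the $\Omega((\sigma/\epsilon)^2)$ lower bound of \cite{Cornelissen_2022} for $d\ge(\sigma/\epsilon)^2$ transfers directly.

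\textbf{Case $p\in(1,4/3]$ (valid for all $d$).} We reduce from unstructured search via a rare spike. Let $\X=M\mathbf e_1$ with probability $q$ and $\X=\0$ otherwise, and compare with the instance $\X\equiv\0$. The $p$-th central moment is at most $2^p qM^p$ and the mean is $qM\mathbf e_1$. We choose $qM=3\epsilon$ and $2^pqM^p=\sigma^p$, which gives $q=\Theta((\epsilon/\sigma)^{p/(p-1)})$. Realize $O_{\X}$ as a uniform superposition over $N=1/q$ seeds, mapping a marked seed to $M\mathbf e_1$. Then an $\epsilon$-estimator decides whether a marked item exists. The $\Omega(\sqrt N)$ search lower bound (BBBV) then gives $\Omega((\sigma/\epsilon)^{p/(2(p-1))})$.

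\textbf{Case $p\in(4/3,2]$, $d\le(\sigma/\epsilon)^2$: the \texttt{Recovery}$\circ$\texttt{Search} primitive.} Fix a hidden sign string $s\in\{\pm1\}^d$. With probability $q$, draw $i$ uniformly from $[d]$ and output $M s_i\mathbf e_i$; otherwise output $\0$. Then $\bmu=(qM/d)s$, so $\|\bmu\|=qM/\sqrt d$, and the central $p$-th moment is $O(qM^p)$.
\begin{itemize}
\item Set $qM=C\epsilon\sqrt d$ for a large constant $C$. Then any $\tilde\bmu$ with $\|\tilde\bmu-\bmu\|\le\epsilon$ has coordinate signs agreeing with $s$ on at least a $0.9$ fraction of coordinates, since each wrong sign costs $\gtrsim C\epsilon/\sqrt d$ in squared-norm budget.
\item Set $qM^p=\Theta(\sigma^p)$. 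Then $M=\Theta\bigl((\sigma^p/(\epsilon\sqrt d))^{1/(p-1)}\bigr)$ and $q^{-1/2}=\Theta\bigl((\sigma/\epsilon)^{p/(2(p-1))}d^{-p/(4(p-1))}\bigr)$.
\item The constraint $q\le1$ is exactly (up to constants) the condition $d\lesssim(\sigma/\epsilon)^2$.
\end{itemize}
In oracle form, the instance is $d$ blocks. Block $i$ contains $1/q$ seeds, of which one is marked and carries the bit $s_i$; all blocks are accessed in a single uniform superposition. Recovering $90\%$ of $s$ is therefore the composition of approximate $d$-bit recovery with a search over $1/q$ items per block. We aim to show a lower bound of $\Omega(d\cdot q^{-1/2})$. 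Substituting the parameters gives $d^{1-\frac{p}{4(p-1)}}(\sigma/\epsilon)^{\frac{p}{2(p-1)}}=d^{\frac{3p-4}{4(p-1)}}(\sigma/\epsilon)^{\frac{p}{2(p-1)}}$, which is the claimed bound. The exponent of $d$ is positive exactly when $p>4/3$, which explains the case split; for $p\le 4/3$ the single-spike bound above is the better one.

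\textbf{Main obstacle.} The hard step is proving the $\Omega(d/\sqrt q)$ bound for \texttt{Recovery}$\circ$\texttt{Search} in this superposed sampling model. There are two difficulties: the task is relational (only approximate recovery is required), and all blocks share one oracle. I would first try a weighted (or multiplicative) adversary argument.
\begin{itemize}
\item Restrict $s$ to a Gilbert--Varshamov code with pairwise Hamming distance $\ge d/4$. Then approximate recovery pins down a unique codeword, giving $2^{\Omega(d)}$ outputs that must be distinguished.
\item Each query of the superposed oracle reveals information about a spike only through amplitude $\sqrt{q/d}$ per marked seed. Bounding the progress per query should show that each query learns $O(\sqrt q)$ bits in aggregate.
\item An information-theoretic argument in the style of Holevo bounds for learning, or a hybrid argument summed over blocks, then yields $\Omega(d/\sqrt q)$ queries.
\end{itemize}
If this direct route fails, the fallback is the general-adversary composition theorem applied to the exact-recovery function restricted to codewords.
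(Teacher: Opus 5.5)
Your handling of the regimes $p\in(1,4/3]$ and $d\ge(\sigma/\epsilon)^2$ matches the paper. In the first you use a rare spike with the $\Omega(\sqrt N)$ search bound and $N\asymp(\sigma/\epsilon)^{p/(p-1)}$. In the second you embed the bounded-variance class via $\mathbb E\|\X-\bmu\|^p\le(\mathbb E\|\X-\bmu\|^2)^{p/2}$. Your bookkeeping in the middle regime ($K\asymp 1/q$, so $d\sqrt K\asymp d^{\frac{3p-4}{4(p-1)}}(\sigma/\epsilon)^{\frac{p}{2(p-1)}}$) is also the paper's.

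The gap is that the only nontrivial ingredient, an $\Omega(d\sqrt K)$ lower bound for \emph{approximate} recovery composed with search, is announced but never proved. Your claim that ``each query learns $O(\sqrt q)$ bits in aggregate'' is essentially the theorem itself. The generic information or counting bounds you point to (Holevo, or the Farhi--Goldstone--Gutmann--Sipser count of distinguishable oracles) give only about $d/\log K$ queries for identifying $2^{\Omega(d)}$ codewords. Your fallback would need a composition theorem with a non-Boolean outer function (codeword identification), plus a separate $\Omega(d)$ adversary bound for that outer function, and you supply neither. The ``single shared oracle'' is not a real obstacle. Prove the bound for entry-wise access $O_A$ to the $d\times K$ array; one sampling-oracle query costs one $O_A$ query, so the bound transfers to the weaker sampling model.

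The paper closes this gap in two steps.
\begin{enumerate}
\item It proves $\Omega(d\sqrt K)$ for a \emph{Boolean} composed function, $\texttt{PairParity}^d\circ\texttt{Search}^K$. The outer function is parity of $d/2$ free bits, with $\mathrm{ADV}^{\pm}=\Omega(d)$ by Ambainis' relation method. The inner function is promise-OR, with $\mathrm{ADV}^{\pm}=\Omega(\sqrt K)$. Kimmel's composition for partial Boolean functions multiplies these, and Reichardt's tightness result converts the product into a query lower bound. Exact recovery of $b^{(A)}$ trivially computes this parity.
\item It boosts approximate recovery to exact recovery. Round $\widetilde b$, so that at most $2\delta d$ coordinates are wrong. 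Build a clean coherent oracle for the error predicate $\widehat b_i\oplus b_i^{(A)}$ with $O(\sqrt K)$ queries using \emph{exact} amplitude amplification. Because the marked fraction $1/K$ is known, $b_i$ is computed with certainty and the workspace is uncomputed cleanly. Then find all errors by Grover enumeration, using $O(\sqrt{d\cdot\delta d})$ calls, i.e.\ $O(d\sqrt{\delta K})$ queries.
\end{enumerate}
For small $\delta$ and large $\alpha_0$, an approximate-recovery algorithm with $d\sqrt K/\alpha_0$ queries would then give exact recovery with fewer than $\beta d\sqrt K$ queries, a contradiction. This error-correction step is the idea your plan is missing. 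The same two steps adapt to your sign instance, taking the outer function to be parity of the signs and the inner one the sign of the unique spike.

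Separately, your instance requires $C\epsilon\sqrt d\lesssim\sigma$, so it does not cover $d$ between $(\sigma/(C\epsilon))^2$ and $(\sigma/\epsilon)^2$. There you must, as the paper does, invoke the bounded-variance bound $\Omega(\sqrt d\,\sigma/\epsilon)$, which is within a constant factor of the target in that range.
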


\begin{theorem}[Quantum lower bounds for Problem \ref{prob: obtain unbiased mean estimate for heavy tail random variable}]
\label{thm:lower-bounds for unbiased}
Any quantum algorithm that solves
    Problem~\ref{prob: obtain unbiased mean estimate for heavy tail random variable}
    must, in the worst case, make a number of $N$ queries to the quantum sampling oracle, where
    \[ \mathbb{E}[N]\in 
\begin{cases}
    \Omega\left(
            \left(
                \frac{\sigma}{\epsilon}
            \right)^{\frac{p}{2(p-1)}}
        \right), & \text{when }p\in(1, 4/3], \\
    \Omega\left(
            d^{\frac{3p-4}{4(p-1)}}
            \left(
                \frac{\sigma}{\epsilon}
            \right)^{\frac{p}{2(p-1)}}
        \right), & \text{when }p\in (4/3,2] \text{ and } 1\leq d\leq
        \left(
            \frac{\sigma}{\epsilon}
        \right)^2,\\
    \Omega\left(
            \left(
                \frac{\sigma}{\epsilon}
            \right)^2
        \right), &\text{when }p\in (4/3,2] \text{ and }d\geq
        \left(
            \frac{\sigma}{\epsilon}
        \right)^2.
\end{cases}
\]



\end{theorem}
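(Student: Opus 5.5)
The plan is to reduce Theorem~\ref{thm:lower-bounds for unbiased} to Theorem~\ref{thm:lower-bounds for high probability}. Any unbiased estimator with $\mathbb{E}[\|\tilde{\bmu}-\bmu\|^p]\le\epsilon^p$ and expected query count $\mathbb{E}[N]$ can be converted into a bounded-error, worst-case-query algorithm for Problem~\ref{prob: obtain mean estimate for heavy tail random variable with bounded mean}, at accuracy $\Theta(\epsilon)$, using $O(\mathbb{E}[N])$ queries. Since all three regimes in Theorem~\ref{thm:lower-bounds for high probability} scale polynomially in $\sigma/\epsilon$, a constant-factor loss in $\epsilon$ only changes the constants. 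This gives the same three bounds for $\mathbb{E}[N]$.

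\textbf{Step 1: Markov on the error.} Run the unbiased estimator once with target accuracy $\epsilon' = \epsilon/c$ for a constant $c$. Markov's inequality gives
\[
\Pr\bigl[\|\tilde{\bmu}-\bmu\|>\epsilon\bigr]\le \frac{\mathbb{E}[\|\tilde{\bmu}-\bmu\|^p]}{\epsilon^p}\le c^{-p}.
\]
Choose $c$ so that this failure probability is at most $1/6$.

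\textbf{Step 2: Markov on the query count.} Halt the algorithm, outputting an arbitrary answer, once it has used $6\,\mathbb{E}[N]$ queries. Markov's inequality bounds the probability that halting occurs by $1/6$. A union bound then shows the truncated algorithm succeeds with probability at least $2/3$, using at most $6\,\mathbb{E}[N]$ queries in the worst case.

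\textbf{Step 3: apply the earlier theorem.} Theorem~\ref{thm:lower-bounds for high probability} at accuracy $\epsilon$ then yields $6\,\mathbb{E}[N]\ge N_{\mathrm{hp}}(\sigma,\epsilon,d)$. Unbiasedness is not used at all; only the moment bound is.

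\textbf{Main obstacle: regime boundaries.} Rescaling $\epsilon$ by $c$ shifts the threshold $d\lessgtr(\sigma/\epsilon)^2$ by the constant factor $c^2$. If $d$ falls between $(\sigma/\epsilon)^2$ and $(c\sigma/\epsilon)^2$, the two regime formulas agree up to constants there. Indeed, at $d=(\sigma/\epsilon)^2$ we have $d^{\frac{3p-4}{4(p-1)}}(\sigma/\epsilon)^{\frac{p}{2(p-1)}}=(\sigma/\epsilon)^{\frac{3p-4+p}{2(p-1)}}=(\sigma/\epsilon)^2$, so the bounds match continuously.

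\textbf{Role of $\epsilon<\sigma$.} Problem~\ref{prob: obtain mean estimate for heavy tail random variable with bounded mean} requires $\epsilon<\sigma$, so I run the unbiased algorithm with $\epsilon/c$, which is also below $\sigma$. This ensures the hard instances from Theorem~\ref{thm:lower-bounds for high probability} are valid inputs.
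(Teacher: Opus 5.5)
Your argument is correct and uses the same mechanism as the paper. Markov's inequality on $\|\widetilde{\bmu}-\bmu\|^p$ gives accuracy $6^{1/p}\epsilon$ with probability $5/6$. Markov's inequality on the query count, truncation at $6T$ (with $T$ the worst-case expected count), and a union bound then give a bounded-error algorithm, to which a high-probability lower bound applies. Unbiasedness is never used.

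The difference is organizational. The paper does not invoke Theorem~\ref{thm:lower-bounds for high probability} as a black box. It applies the truncation argument separately to each ingredient:
\begin{itemize}
\item the search-based bound;
\item the \texttt{Recovery}$\circ$\texttt{Search} bound under $\epsilon\le\bar c_0\sigma/\sqrt d$;
\item the inherited bounded-variance bound, written as $\min\{\sqrt d\,\sigma/\epsilon,(\sigma/\epsilon)^2\}$.
\end{itemize}
It then repeats the case analysis directly at accuracy $\epsilon$. Because the bounded-variance bound is already a continuous $\min$, the paper never faces your boundary band.

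Your route is shorter and makes the unbiased theorem a true corollary of the high-probability one. The price is the band check, which you handle correctly. If $(\sigma/(6^{1/p}\epsilon))^2<d\le(\sigma/\epsilon)^2$, the third-regime bound $6^{-2/p}(\sigma/\epsilon)^2$ dominates $d^{\frac{3p-4}{4(p-1)}}(\sigma/\epsilon)^{\frac{p}{2(p-1)}}$. The latter is at most $(\sigma/\epsilon)^2$ because the $d$-exponent is positive for $p>4/3$.

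One correction: your paragraph on $\epsilon<\sigma$ points the wrong way. To bound the cost of an unbiased algorithm at accuracy $\epsilon\in(0,\sigma)$, you must invoke Theorem~\ref{thm:lower-bounds for high probability} at accuracy $6^{1/p}\epsilon$. That is only legitimate when $6^{1/p}\epsilon<\sigma$, so for $\epsilon\in[6^{-1/p}\sigma,\sigma)$ your reduction is unavailable.

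In that range all three target bounds are constants depending only on $p$; in the middle regime, for instance, $d\le(\sigma/\epsilon)^2\le 6^{2/p}$. So you only need $\mathbb{E}[N]=\Omega(1)$, which follows as in the paper. An algorithm that makes no query with positive probability has, on that branch, an instance-independent output distribution. It therefore cannot satisfy $\mathbb{E}\|\widetilde{\bmu}-\bmu\|^p\le\epsilon^p$ for every deterministic $\X\equiv\bmu$, $\bmu\in\mathbb{R}^d$. Hence every run makes at least one query, and $\mathbb{E}[N]\ge 1$.
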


\begin{remark}
Theorems~\ref{thm:lower-bounds for high probability}
and~\ref{thm:lower-bounds for unbiased} imply that, for every $d\geq1$ and $p\in(1,2]$, solving
Problems~\ref{prob: obtain mean estimate for heavy tail random variable with bounded mean}
and~\ref{prob: obtain unbiased mean estimate for heavy tail random variable}
requires
\[
    \Omega\left(
        \left(
            \frac{\sigma}{\epsilon}
        \right)^{\frac{p}{2(p-1)}}
    \right)
\]
queries to the quantum sampling oracle.
This matches the dependence of \texttt{QHTME} and \texttt{QUHTME} on
$\sigma$ and $\epsilon$, up to logarithmic factors, when the dimension
$d$ is small enough to be treated as a constant. Thus, our algorithms are
optimal in their dependence on the heavy-tailed parameter $\sigma$ and the target
accuracy $\epsilon$. The stronger quantum lower bounds, e.g., 
$$
 \Omega\left(
            d^{\frac{3p-4}{4(p-1)}}
            \left(
                \frac{\sigma}{\epsilon}
            \right)^{\frac{p}{2(p-1)}}
        \right) \text{ and } \Omega\left(
            \left(
                \frac{\sigma}{\epsilon}
            \right)^2
        \right),
$$
additionally characterize how the
difficulty of the problem changes with the dimension and the heavy-tailed index $p$. However, these lower bounds do not yet match the full joint dependence of \texttt{QHTME} and \texttt{QUHTME} on $d, \sigma$ and $\epsilon$. Closing this gap remains an open problem.
\end{remark}

\section{Quantum Stochastic Nonconvex Optimization with Heavy-Tailed Noise}
\label{sec:non-convex}
In this section, we present quantum algorithms for finding stationary points of the objective~\eqref{eq:obj}, where $\fX = \RB^{d}$ and $f(\cdot)$ is non-convex.
We begin by defining the notion of an $\epsilon$-stationary point.

\begin{definition}
A point $\hat{\x}\in \RB^{d}$ is called an $\epsilon$-stationary point of $f$ if $\mathbb{E}[\|\nabla f(\hat{\x})]\|\leq \epsilon$.
\end{definition}

We impose the following standard regularity assumptions on the objective function.

\begin{assumption}
\label{ass:smooth}
 We assume that the gradient of $f(\cdot)$ is Lipschitz continuous, such that for all $\vx,\vy\in\RB^d$, it holds that
 $\|\nabla f(\x)-\nabla f(\y)\|\leq L\|\x-\y\|$.
    We also assume that $f(\cdot)$ is lower bounded, and write $f^*:=\inf_{\x\in\RB^d}f(\x)>-\infty$.
\end{assumption}

Unlike most existing approaches that use gradient clipping to handle heavy-tailed noise~\cite{zhang2020adaptive,sadiev2023high}, we consider the following normalized stochastic gradient descent (\texttt{NSGD}) update:
\begin{align}
\label{eq:NSGD}
    \x_{t+1}=\x_t -\eta {\g_t}/{\|\g_t\|},
\end{align}
where $\g_t$ is an estimator of $\nabla f(\x_t)$.
The following lemma establishes the descent property of the \texttt{NSGD} update.

\begin{lemma}[Lemma 2,~\cite{cutkosky2020momentum}]
\label{lm:descent}
 Let ${\bepsilon_t} := \g_t - \nabla f(\x_t)$. If $f(\cdot)$ satisfies Assumption~\ref{ass:smooth}, then the {\rm \texttt{NSGD}} update~\eqref{eq:NSGD} satisfies
 $$  f(\x_{t+1})-f(\x_t)\leq -\frac{\eta}{3}\|\nabla f(\x_t)\|+\frac{8\eta}{3}\|\bepsilon_t\| + \frac{L\eta^2}{2}.$$
\end{lemma}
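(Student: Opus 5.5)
We prove the descent inequality by combining the standard smoothness upper bound with a short case analysis on the size of $\|\bepsilon_t\|$ relative to $\|\nabla f(\x_t)\|$.

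\textbf{Step 1 (smoothness bound).} Assumption~\ref{ass:smooth} gives the descent lemma
\[
f(\x_{t+1})\le f(\x_t)+\inner{\nabla f(\x_t)}{\x_{t+1}-\x_t}+\tfrac{L}{2}\|\x_{t+1}-\x_t\|^2 .
\]
Since $\|\x_{t+1}-\x_t\|=\eta$, the quadratic term equals $L\eta^2/2$. Hence it suffices to show
\[
-\eta\,\frac{\inner{\nabla f(\x_t)}{\g_t}}{\|\g_t\|}\le -\frac{\eta}{3}\|\nabla f(\x_t)\|+\frac{8\eta}{3}\|\bepsilon_t\| .
\]
We assume $\g_t\neq 0$, as the update is otherwise undefined.

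\textbf{Step 2 (small error).} Write $\g_t=\nabla f(\x_t)+\bepsilon_t$ and suppose $\|\bepsilon_t\|\le\frac12\|\nabla f(\x_t)\|$. Then
\[
\inner{\nabla f(\x_t)}{\g_t}\ge\|\nabla f(\x_t)\|^2-\|\nabla f(\x_t)\|\|\bepsilon_t\|\ge\tfrac12\|\nabla f(\x_t)\|^2 .
\]
Also $\|\g_t\|\le\frac32\|\nabla f(\x_t)\|$. Together these give
\[
\frac{\inner{\nabla f(\x_t)}{\g_t}}{\|\g_t\|}\ge\frac13\|\nabla f(\x_t)\| ,
\]
which yields the target inequality even without the $\frac{8\eta}{3}\|\bepsilon_t\|$ term.

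\textbf{Step 3 (large error).} Now suppose $\|\bepsilon_t\|>\frac12\|\nabla f(\x_t)\|$. By Cauchy--Schwarz,
\[
-\frac{\inner{\nabla f(\x_t)}{\g_t}}{\|\g_t\|}\le\|\nabla f(\x_t)\| .
\]
Because $\|\nabla f(\x_t)\|<2\|\bepsilon_t\|$, we can split
\[
\|\nabla f(\x_t)\| = -\tfrac13\|\nabla f(\x_t)\|+\tfrac43\|\nabla f(\x_t)\| \le -\tfrac13\|\nabla f(\x_t)\|+\tfrac83\|\bepsilon_t\| .
\]
Multiplying by $\eta$ gives the required bound.

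Combining Steps 2 and 3 with the smoothness bound of Step 1 proves the lemma. No step is genuinely hard; the only care needed is choosing the case threshold $1/2$ so that both cases produce exactly the constants $1/3$ and $8/3$.
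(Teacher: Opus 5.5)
Your proof is correct and follows the same approach as the argument of \citet{cutkosky2020momentum}, which the paper cites without reproducing. Both use the $L$-smoothness upper bound with $\|\x_{t+1}-\x_t\|=\eta$ and then split into cases at $\|\nabla f(\x_t)\|\geq 2\|\bepsilon_t\|$: in one case $\langle\nabla f(\x_t),\g_t\rangle\ge\frac12\|\nabla f(\x_t)\|^2$ and $\|\g_t\|\le\frac32\|\nabla f(\x_t)\|$, and in the other Cauchy--Schwarz combined with $\frac43\|\nabla f(\x_t)\|<\frac83\|\bepsilon_t\|$.
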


Lemma~\ref{lm:descent} shows that the descent behavior of \texttt{NSGD} depends explicitly on the magnitude of the gradient estimation error $\|\bepsilon_{t}\|$. By applying \texttt{QHTME} to estimate the stochastic gradient via the quantum stochastic gradient oracle, we obtain $\|\bepsilon_t\|\leq\epsilon_{\rm g}$ with high probability. Lastly, with an appropriate choice of the step size~$\eta$, the objective value $f(\cdot)$ decreases sufficiently along the~\texttt{NSGD} update. Motivated by this observation, we propose a quantum normalized stochastic gradient descent method (\texttt{QNSGD}), presented in Algorithm~\ref{alg: QNSGD}, and state its query complexity in the following theorem.

\begin{algorithm}
\caption{Quantum Normalized Stochastic Gradient Descent (\texttt{QNSGD})}
\label{alg: QNSGD}
\begin{algorithmic}[1]
\STATE \textbf{Input:} $\eta$, $\sigma$, $p$, $d$, $\epsilon_{\rm g}$, $\delta$, and $T$.
\FOR{$t=0,\cdots, T-1$}
\STATE $\g_t = \texttt{QHTME}(\nabla F(\x_t;\xi), \sigma, p, d, \epsilon_{\rm g}, \delta/T)$.
\STATE $\x_{t+1} = \x_t -\eta \frac{\g_t}{\|\g_t\|}$
\ENDFOR
\STATE \textbf{Output:} $\hat{\x}\sim\operatorname{Unif}\{\x_0,\ldots,\x_{T-1}\}$
\end{algorithmic}
\end{algorithm}

\begin{theorem}
\label{thm:non-convex_heavy_tailed}
  Suppose that $f(\cdot)$ satisfies Assumptions~
    \ref{ass:heavy-tailed} and \ref{ass:smooth}, let $\Delta_f:=f(\x_0)-f^*$, and assume $0<\epsilon<\sigma$ and $\delta\in(0,1)$. Run {\rm \texttt{QNSGD}} with
    $ \eta = \epsilon/(3L)$, $T = \lceil36L\Delta_f\epsilon^{-2}\rceil$, and $\epsilon_{\rm g}=\epsilon/32$.
    Then, with probability at least $1-\delta$, the iterates satisfy
    $$
        \frac{1}{T}
        \sum_{t=0}^{T-1}
        \|\nabla f(\x_t)\|
        \leq\epsilon.
    $$
    Consequently, {\rm \texttt{QNSGD}} takes
    $$ \tilde{\mathcal{O}}\!\left(L\Delta_f\sqrt d\,\sigma^{\frac{p}{2(p-1)}}\epsilon^{-\frac{5p-4}{2(p-1)}}\right)$$ queries to the quantum stochastic gradient oracle such that $\mathbb{E}[\|\nabla f(\hat{\vx})\|]\leq \epsilon$.
\end{theorem}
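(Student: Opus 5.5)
The plan is to combine the descent inequality of Lemma~\ref{lm:descent} with a union bound over the $T$ calls to \texttt{QHTME}. At iteration $t$, conditioned on $\x_t$, the stochastic gradient $\nabla F(\x_t;\xi)$ is a $d$-dimensional random vector with mean $\nabla f(\x_t)$ and $p$-th central moment at most $\sigma^p$ (Assumption~\ref{ass:heavy-tailed}). Its quantum sampling oracle is exactly $O_{\nabla F,\x_t}$ from Definition~\ref{def: Quantum stochastic gradient oracle (QSGO)}. Theorem~\ref{thm:QHTME-for prob 1} therefore applies with failure probability $\delta/T$: with probability at least $1-\delta/T$ we get $\|\bepsilon_t\|\le\epsilon_{\rm g}=\epsilon/32$. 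Since $\epsilon_{\rm g}<\sigma$, the precondition of Problem~\ref{prob: obtain mean estimate for heavy tail random variable with bounded mean} holds. The fresh randomness of each call makes the conditional statement valid whatever the past is, so a union bound gives the event $\mathcal E=\{\|\bepsilon_t\|\le\epsilon/32\ \forall t<T\}$ with probability at least $1-\delta$.

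On $\mathcal E$, I sum Lemma~\ref{lm:descent} over $t=0,\dots,T-1$ and telescope using $f(\x_T)\ge f^*$:
\[
\frac{\eta}{3}\sum_{t=0}^{T-1}\|\nabla f(\x_t)\| \le \Delta_f + \frac{8\eta}{3}T\frac{\epsilon}{32} + \frac{L\eta^2T}{2}.
\]
Dividing by $\eta T/3$ gives
\[
\frac1T\sum_{t=0}^{T-1}\|\nabla f(\x_t)\|\le \frac{3\Delta_f}{\eta T}+\frac{\epsilon}{4}+\frac{3L\eta}{2}.
\]
With $\eta=\epsilon/(3L)$ the last term equals $\epsilon/2$. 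With $T\ge 36L\Delta_f\epsilon^{-2}$ the first term is $9L\Delta_f/(\epsilon T)\le\epsilon/4$. The total is at most $\epsilon$.

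For the query count, each call costs $\tilde{\OM}(\sqrt d(\sigma/\epsilon_{\rm g})^{\frac{p}{2(p-1)}})=\tilde{\OM}(\sqrt d\,\sigma^{\frac{p}{2(p-1)}}\epsilon^{-\frac{p}{2(p-1)}})$. The $\log(T/\delta)$ factor from the failure probability is absorbed in $\tilde{\OM}$. Multiplying by $T=\OM(L\Delta_f\epsilon^{-2})$ gives the exponent $2+\frac{p}{2(p-1)}=\frac{5p-4}{2(p-1)}$, which is the claimed bound.

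The step needing the most care is the final expectation claim. Since $\hat\x$ is uniform over the iterates, $\EB[\|\nabla f(\hat\x)\|\mid\text{iterates}]$ is exactly the average, so on $\mathcal E$ it is at most $\epsilon$. Off $\mathcal E$, however, the average is not controlled a priori. I would handle this as follows. Each step moves by exactly $\eta$, so $\|\nabla f(\x_t)\|\le\|\nabla f(\x_0)\|+L\eta t$ deterministically. Off $\mathcal E$ the average is therefore at most $G:=\|\nabla f(\x_0)\|+\epsilon T/3$. Choosing $\delta\le\epsilon/G$, which costs only a logarithmic factor, gives $\EB\|\nabla f(\hat\x)\|\le 2\epsilon$, and rescaling $\epsilon$ by a constant finishes the proof. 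Alternatively, the conclusion can be read as holding conditioned on the high-probability event.
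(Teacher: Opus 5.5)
Your proposal is correct and follows the paper's proof essentially step for step. Like the paper, you use the descent lemma, apply the \texttt{QHTME} guarantee conditionally with failure probability $\delta/T$ and take a union bound, telescope with the same parameter arithmetic ($\epsilon/4+\epsilon/4+\epsilon/2$), and multiply the per-call cost by $T$. Your extra argument for the in-expectation claim goes beyond the paper. You bound $\|\nabla f(\x_t)\|\le\|\nabla f(\x_0)\|+L\eta t$ deterministically off the good event and take $\delta$ polynomially small, at only logarithmic cost. This fills a step the paper leaves implicit, since the paper only remarks that some iterate is $\epsilon$-stationary on the high-probability event.
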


\begin{remark}
Theorem~\ref{thm:non-convex_heavy_tailed} recovers the  $\tilde{\OM}\left(d^{1/2}\epsilon^{-3}\right)$ query complexity established in \cite{sidford2023quantum}  for stochastic non-convex optimization under the bounded variance assumption.
Moreover, the theorem indicates that quantum speedups extend to heavy-tailed gradient noise for all $p\in(1,2]$. The sharper bound improves upon the classical lower bound $\Omega(\epsilon^{-\frac{3p-2}{p-1}})$ whenever $d\lesssim(\sigma/\epsilon)^{\frac{p}{p-1}}$.
\end{remark}

\paragraph{Comparison with classical \texttt{NSGD}.}

\citet{hubler2024gradient} analyzed \texttt{NSGD} with the mini-batch gradient estimator used in Algorithm~\ref{alg: QNSGD},
$
\g_t
= \frac{1}{|\fB|}
\sum_{\xi\in\fB}
\nabla f(\x_t;\xi).
$
They showed that it can find an $\epsilon$-stationary point using $\OM(\epsilon^{-\frac{3p-2}{p-1}})$ queries to the classical stochastic gradient oracle, matching the corresponding classical lower bound.
To ensure that the gradient estimation error is small enough, s.t. $\|\bepsilon_t\|\leq \epsilon$,  
their approach requires a mini-batch size $|\fB | = \OM(\epsilon^{-\frac{p}{p-1}})$.
In contrast, our approach \texttt{QNSGD} applies \texttt{QHTME} to obtain $\g_t$ and ensure $\|\bepsilon_t\|\leq\epsilon_{\rm g}$ with probability at least $1-\delta/T$ using $\tilde{\OM}\!\left(\sqrt d\,(\sigma/\epsilon_{\rm g})^{\frac{p}{2(p-1)}}\right)$ queries to the quantum stochastic gradient oracle at each iteration.
As a result, \texttt{QNSGD} achieves a quantum speedup over the classical \texttt{NSGD} in the low-dimensional regime.

\section{Quantum Stochastic Convex Optimization with Heavy-Tailed Noise}
\label{sec:convex}
In this section, we consider the case where the objective $f(\cdot)$ in \eqref{eq:obj} is convex but possibly non-smooth. We impose~the following assumptions on $f(\cdot)$ and the constraint set $\fX$.

\begin{assumption}
\label{ass:convex}
    The objective function $f(\cdot)$ and the constraint set $\fX$ satisfy the following conditions: (i) $\fX$ is a nonempty, closed, and convex set with bounded diameter $D>0$; (ii) The function $f(\cdot)$ is convex on $\fX$; (iii) $\|\nabla f(\vx)\|\leq G$.
\end{assumption}

In addition, we denote by $\x^*$ an optimal solution of \eqref{eq:obj}, chosen from the optimal solution set $\fX^{*} := \argmin_{\x\in\fX} f(\x)$ so that $\x^* \in\fX^*$.
Unlike existing approaches based on gradient clipping \cite{zhang2020adaptive} or online-to-convex framework~\cite{liu2025online} for handling heavy-tailed noise, we consider the projected stochastic gradient descent (\texttt{PSGD}):
\begin{align}
\label{eq:SGD}
    \x_{t+1} = \Pi_{\fX} (\x_t-\eta \g_t),
\end{align}
where $\g$ is a stochastic estimator of $\nabla f(\x)$.
The following lemma bounds the suboptimality of the averaged iterate produced by \texttt{PSGD}.
\begin{lemma}
\label{lm:sgd}
  Run {\rm \texttt{PSGD}} \eqref{eq:SGD} for $T$ iterations, and define the averaged iterate $\bar{\x}_T = \frac{1}{T}\sum_{t=0}^{T-1}\x_t$ as well as the gradient estimation error $\bepsilon_t = \g_t - \nabla f(\x_t)$. If $\EBP{\g_t\mid\x_t}=\nabla f(\x_t)$ and $f(\cdot)$ satisfies Assumptions \ref{ass:heavy-tailed} and \ref{ass:convex}, then for any $p\in(1,2]$,
   $$\EBP{f(\bar{\x}_T)}-f(\x^{*}) \leq \frac{D^2}{2\eta T} +\eta G^2 + C_p\eta^{p-1}D^{2-p}\frac{1}{T}\sum_{t=0}^{T-1}\EBP{\|\bepsilon_t\|^p},$$
    where $C_p:={(4p-4)^{p-1}}/{p^p}.$
\end{lemma}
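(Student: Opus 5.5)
The plan follows the standard projected subgradient analysis, except that the term $\eta\langle\bepsilon_t,\x_t-\x^*\rangle$ is not bounded through a second moment of $\bepsilon_t$, which may be infinite. Instead, it is paired with the cross term and controlled through the $p$-th moment.

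\textbf{Step 1 (per-step inequality).} Write $\y_{t+1}=\x_t-\eta\g_t$. Nonexpansiveness of $\Pi_{\fX}$ and the optimality condition of the projection give
\[
\|\x_{t+1}-\x^*\|^2 \le \|\x_t-\x^*\|^2 - 2\eta\inner{\g_t}{\x_t-\x^*} - \bigl(\|\x_{t+1}-\x_t\|^2+2\eta\inner{\g_t}{\x_{t+1}-\x_t}\bigr).
\]
This is equivalent to $2\eta\inner{\g_t}{\x_{t+1}-\x^*}\le\|\x_t-\x^*\|^2-\|\x_{t+1}-\x^*\|^2-\|\x_{t+1}-\x_t\|^2$.

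Split $\g_t=\nabla f(\x_t)+\bepsilon_t$ and decompose the inner product as
\[
\inner{\g_t}{\x_t-\x^*}+\inner{\nabla f(\x_t)}{\x_{t+1}-\x_t}+\inner{\bepsilon_t}{\x_{t+1}-\x_t}.
\]
The gradient part is handled by Young's inequality: $-2\eta\inner{\nabla f(\x_t)}{\x_{t+1}-\x_t}\le \tfrac12\|\x_{t+1}-\x_t\|^2+2\eta^2G^2$. This uses half of the negative term and produces the $\eta G^2$ contribution after dividing by $2\eta$.

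\textbf{Step 2 (noise term).} The remaining noise contribution is $-2\eta\inner{\bepsilon_t}{\x_{t+1}-\x_t}-\tfrac12\|\x_{t+1}-\x_t\|^2$. Set $r=\|\x_{t+1}-\x_t\|$ and note that $r\le D$ because both points lie in $\fX$. The term is then at most
\[
\max_{0\le r\le D}\bigl(2\eta\|\bepsilon_t\| r-\tfrac12 r^2\bigr).
\]
The key trick is to interpolate: bound it by $\min\{2\eta^2\|\bepsilon_t\|^2,\,2\eta\|\bepsilon_t\|D\}$ and use $\min\{a,b\}\le a^{\theta}b^{1-\theta}$ with $\theta=p-1$. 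This yields a quantity of the form $c\,\eta^{p}D^{2-p}\|\bepsilon_t\|^p$.

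To get exactly the constant $C_p=(4p-4)^{p-1}/p^p$, I would instead optimize directly over $r$ with weights adapted to $p$. That means applying the weighted Young inequality $ab\le \frac{a^{p}}{p\lambda^{p}}+\ldots$ to $\|\bepsilon_t\| r$ together with $r^{2}\ge r^{p'}D^{2-p'}$-type comparisons. I expect the resulting constant to match $C_p$ after bookkeeping; this constant-chasing is the main (though routine) obstacle.

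\textbf{Step 3 (sum and take expectations).} Convexity gives $\inner{\nabla f(\x_t)}{\x_t-\x^*}\ge f(\x_t)-f(\x^*)$. Since $\x_t$ is measurable with respect to the past and $\EBP{\bepsilon_t\mid\x_t}=0$, we have $\EBP{\inner{\bepsilon_t}{\x_t-\x^*}}=0$. Dividing by $2\eta$, telescoping over $t$ with $\|\x_0-\x^*\|^2\le D^2$, averaging, and applying Jensen to $\bar{\x}_T$ gives
\[
\EBP{f(\bar{\x}_T)}-f(\x^*)\le \frac{D^2}{2\eta T}+\eta G^2+C_p\eta^{p-1}D^{2-p}\frac1T\sum_t\EBP{\|\bepsilon_t\|^p}.
\]
Assumption~\ref{ass:heavy-tailed} only ensures that these expectations are finite when $\g_t$ is a raw stochastic gradient. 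In general, the bound holds with whatever $p$-th moments $\g_t$ has.
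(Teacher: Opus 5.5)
Your proposal is correct and follows the paper's proof essentially step for step: the same projection inequality, and the same split of $\langle \g_t,\x_t-\x_{t+1}\rangle$ into a gradient part bounded by $\eta G^2+\|\x_t-\x_{t+1}\|^2/(4\eta)$ and a noise part charged against the remaining $\|\x_t-\x_{t+1}\|^2/(4\eta)$, followed by unbiasedness, telescoping and Jensen. The step you defer as bookkeeping is exactly equation (5) of \citet{liu2025online}, which the paper simply cites. Young's inequality with exponents $p$ and $q=p/(p-1)$, the bound $r^{q}\le r^{2}D^{q-2}$ for $r\le D$, and the weight chosen so that the $r^2$ coefficient is $1/(4\eta)$ give precisely $\|\bepsilon_t\|\,r\le C_p\eta^{p-1}D^{2-p}\|\bepsilon_t\|^p+r^2/(4\eta)$, with $C_p$ sharp here; your $\min\{a,b\}\le a^{\theta}b^{1-\theta}$ shortcut only gives constant $1$, which exceeds $C_p$ for $p\in(1,2)$.
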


By properly tuning the step size $\eta$ and controlling the error term $\frac{1}{T}\sum_{t=0}^{T-1}\EBP{\|\bepsilon_t\|^p}$, one can guarantee the convergence of \texttt{PSGD}.
To this end, we apply \texttt{QUHTME} introduced in Section~\ref{subsec: Quantum Unbiased Heavy-Tailed Mean Estimator} to construct an \textbf{unbiased estimator} $\g_t$ of $\nabla f(\x_t)$ with controlled heavy-tailed noise. This leads to~the quantum projected stochastic gradient descent algorithm (\texttt{QPSGD}), presented in Algorithm~\ref{alg: QPSGD}.
The following theorem presents the query complexity of \texttt{QPSGD} for convex stochastic optimization under heavy-tailed noise.

\begin{algorithm}
\caption{Quantum Projected Stochastic Gradient Descent (\texttt{QPSGD})}
\label{alg: QPSGD}
\begin{algorithmic}[1]
\STATE \textbf{Input:} $\eta$, $T$, $\sigma$, $p$, $d$, and $\tilde{\epsilon}$.
\FOR{$t=0,\cdots, T-1$}
\IF{$\tilde{\epsilon}<\sigma$}
\STATE $\g_t = \texttt{QUHTME}(\nabla F(\x_t;\xi), \sigma, p, d, \tilde{\epsilon})$.
\ELSE
\STATE $\g_t\gets\nabla F(\x_t;\xi_t)$ using one oracle query.
\ENDIF
\STATE $\x_{t+1} = \Pi_{\fX}(\x_t -\eta \g_t)$
\ENDFOR
\STATE \textbf{Output:} $\hat{\x}=\frac{1}{T}\sum_{t=0}^{T-1}\x_t$ 
\end{algorithmic}
\end{algorithm}


\begin{theorem}\label{thm:convex_heavy_tailed}
Suppose that $f(\cdot)$ satisfies Assumptions~\ref{ass:heavy-tailed} and \ref{ass:convex}. Run {\rm \texttt{QPSGD}} (Algorithm~\ref{alg: QPSGD}) with 
\begin{equation*}
    \eta = \frac{\epsilon}{3G^2},~~T = 3G^2D^2\epsilon^{-2},
    \text{and}~~~\tilde{\epsilon}=\left(\frac{3^{p-1}}{6C_p}\right)^{\frac{1}{p}}\epsilon^{\frac{2-p}{p}}G^{\frac{2(p-1)}{p}}D^{-\frac{2-p}{p}}.
\end{equation*}
Then the output $\hat{\x}$ satisfies $\EBP{f(\hat{\x})}-f(\x^{*})\leq \epsilon$.
Moreover, the expected number of queries made by {\rm \texttt{QPSGD}} is at most
\[
\tilde{\OM}\!\left(
   \sqrt d\,G D^{\frac{3p-2}{2(p-1)}}\sigma^{\frac{p}{2(p-1)}}\epsilon^{-\frac{3p-2}{2(p-1)}} + G^2D^2\epsilon^{-2}
\right).
\]
\end{theorem}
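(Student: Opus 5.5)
We will prove Theorem~\ref{thm:convex_heavy_tailed} by plugging the parameter choices into Lemma~\ref{lm:sgd} and then counting queries via Theorem~\ref{theorem: query complexity of QUHTME}.

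\textbf{Unbiasedness and error control.} We first check that Lemma~\ref{lm:sgd} applies in both branches of Algorithm~\ref{alg: QPSGD}. If $\tilde\epsilon<\sigma$, then conditionally on $\x_t$, Theorem~\ref{theorem: query complexity of QUHTME} (applied to $\X=\nabla F(\x_t;\xi)$, which satisfies the moment condition by Assumption~\ref{ass:heavy-tailed}) gives $\EBP{\g_t\mid \x_t}=\nabla f(\x_t)$ and $\EBP{\|\bepsilon_t\|^p\mid\x_t}\le\tilde\epsilon^p$. If $\tilde\epsilon\ge\sigma$, then a single oracle sample is unbiased and satisfies $\EBP{\|\bepsilon_t\|^p\mid\x_t}\le\sigma^p\le\tilde\epsilon^p$. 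In either case, the tower property yields $\frac1T\sum_t\EBP{\|\bepsilon_t\|^p}\le\tilde\epsilon^p$.

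\textbf{Suboptimality bound.} With $\eta=\epsilon/(3G^2)$ and $T=3G^2D^2\epsilon^{-2}$, the first two terms of Lemma~\ref{lm:sgd} are
\[
\frac{D^2}{2\eta T}=\frac{\epsilon}{2},\qquad \eta G^2=\frac{\epsilon}{3}.
\]
The third term equals $C_p(\epsilon/3G^2)^{p-1}D^{2-p}\tilde\epsilon^p$. Substituting $\tilde\epsilon^p=\frac{3^{p-1}}{6C_p}\epsilon^{2-p}G^{2(p-1)}D^{-(2-p)}$, the powers of $G$, $D$ and $3$ cancel, and the term reduces to $\epsilon/6$. The three contributions sum to $\epsilon$, which proves the accuracy claim.

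\textbf{Query count.} In the branch $\tilde\epsilon\ge\sigma$, the algorithm uses $T=\OM(G^2D^2\epsilon^{-2})$ queries. In the branch $\tilde\epsilon<\sigma$, each iteration uses $\tilde\OM(\sqrt d(\sigma/\tilde\epsilon)^{p/(2(p-1))})$ queries in expectation, and linearity of expectation over $T$ iterations applies even though the $\x_t$ are random. Writing $\tilde\epsilon^{-p}\asymp \epsilon^{-(2-p)}G^{-2(p-1)}D^{2-p}$, we get
\[
\tilde\epsilon^{-\frac{p}{2(p-1)}}\asymp \epsilon^{-\frac{2-p}{2(p-1)}}G^{-1}D^{\frac{2-p}{2(p-1)}}.
\]
Multiplying by $T\asymp G^2D^2\epsilon^{-2}$ gives the exponents:
\begin{itemize}[leftmargin=*]
\item for $\epsilon$: $-2-\frac{2-p}{2(p-1)}=-\frac{3p-2}{2(p-1)}$;
\item for $G$: $2-1=1$;
\item for $D$: $2+\frac{2-p}{2(p-1)}=\frac{3p-2}{2(p-1)}$.
\end{itemize}
This yields $\tilde\OM(\sqrt d\,GD^{\frac{3p-2}{2(p-1)}}\sigma^{\frac{p}{2(p-1)}}\epsilon^{-\frac{3p-2}{2(p-1)}})$. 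Taking the maximum over the two branches gives the stated sum.

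\textbf{Main obstacle.} No step is deep, since Lemma~\ref{lm:sgd} and Theorem~\ref{theorem: query complexity of QUHTME} are taken as given. The points needing care are:
\begin{itemize}[leftmargin=*]
\item the conditional application of \texttt{QUHTME}, which requires fresh randomness independent of the past so that the conditional expectations above are valid;
\item the exponent bookkeeping in the query count;
\item rounding $T$ to an integer, which changes only constants.
\end{itemize}
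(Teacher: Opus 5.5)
Your proposal is correct and follows essentially the same route as the paper: substitute $\eta$, $T$ and $\tilde\epsilon$ into Lemma~\ref{lm:sgd} to get $\epsilon/2+\epsilon/3+\epsilon/6=\epsilon$; invoke Theorem~\ref{theorem: query complexity of QUHTME}, or the one-sample fallback when $\tilde\epsilon\ge\sigma$, for unbiasedness and the bound $\EBP{\|\bepsilon_t\|^p}\le\tilde\epsilon^p$; then multiply the per-iteration expected query count by $T$. Your exponent bookkeeping matches the paper's. On integrality, taking $T=\lceil 3G^2D^2\epsilon^{-2}\rceil$ only decreases $D^2/(2\eta T)$, so the accuracy bound survives exactly, not just up to constants.
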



\paragraph{Comparison with \cite{liu2025online,fatkhullin2025can}.}
Our approach is partially motivated by \citet{liu2025online}, who established regret bounds for online gradient descent under heavy-tailed noise and then applied the online-to-convex conversion framework to obtain a query complexity of $\OM(\epsilon^{-\frac{p}{p-1}})$ for stochastic convex optimization.
\citet{fatkhullin2025can} also applied a similar projected stochastic gradient descent method to handle heavy-tailed noise. 
Both approaches use a single sample per iteration, so their iteration complexities are also $\OM(\epsilon^{-\frac{p}{p-1}})$. 
In contrast, Theorem \ref{thm:convex_heavy_tailed} shows that the iteration complexity of \texttt{QPSGD} is $T=\OM(\epsilon^{-2})$, which is better than that of \cite{liu2025online,fatkhullin2025can}.
Moreover, although \texttt{QPSGD} uses mini-batch sampling to construct quantum gradient estimators at each iteration, the quantum speedups provided by our newly developed quantum mean estimators \texttt{QUHTME} ensure that the total query complexity of \texttt{QPSGD} still improves upon the classical methods in a low dimension regime.
The sharper bound improves upon the classical lower bound for $p\in(1,2)$ whenever $d\lesssim\epsilon^{-\frac{2-p}{p-1}}$.

\section{Conclusion}
\label{sec:conclu}

In this paper, we studied quantum stochastic optimization under heavy-tailed noise. 
We developed a novel quantum mean estimator for random vectors with heavy-tailed distributions, called \texttt{QHTME}, and demonstrated that it achieves a quantum speedup over optimal classical mean estimators. 
We further refined \texttt{QHTME} into an unbiased quantum mean estimator, namely \texttt{QUHTME}.
We established quantum lower bounds, showing that both \texttt{QHTME} and \texttt{QUHTME} are optimal up to logarithmic factors when the dimension $d$ is a constant. Our stronger
dimension-dependent bounds further show that a nontrivial dependence
on the dimension is unavoidable for tail index $p>4/3$ in the low-dimensional regime.
Building on these quantum tools, we proposed \texttt{QNSGD} and \texttt{QPSGD} for non-convex and convex stochastic optimization, respectively, and established quantum speedups in both settings.
The sharper mean-estimation bound yields query complexities $\tilde{\OM}(\sqrt d\,\epsilon^{-\frac{5p-4}{2p-2}})$ for \texttt{QNSGD} and $\tilde{\OM}(\sqrt d\,\epsilon^{-\frac{3p-2}{2p-2}}+\epsilon^{-2})$ for \texttt{QPSGD}, improving upon the optimal classical rates in the low-dimensional regime.


\bibliographystyle{plainnat}
\bibliography{reference}

\newpage
\appendix
\onecolumn

\section{Discussion on the Quantum Oracles}

\subsection{Discussion on the Quantum Access Model for Random Variables}\label{appendix: quantum access model for random variables}

\paragraph{Comparisons with existing quantum sampling oracles.} Unlike in the classical setting, existing work \cite{Cornelissen_2022} assumes that quantum access to the distribution $\BP$ and to the random variable $\X$ is typically provided separately, via a probability oracle and a binary oracle, respectively.

\begin{definition}[Probability oracle]\label{def: probability oracle}
Let $\X : \Omega \to E$ be a random variable on a probability space $(\Omega, 2^{\Omega}, \mathbb{P})$, and let $\mathcal{H}_{\Omega}$ be a Hilbert space with basis states $\{|\omega\rangle\}_{\omega \in \Omega}$. A probability oracle for the probability distribution $\BP$ is a~unitary operator acting on $\mathcal{H}_{\Omega}$ such that 
\begin{equation*}
U_{\mathbb{P}} : |0\rangle \mapsto \sum_{\omega \in \Omega} \sqrt{\mathbb{P}(\omega)}\,|\omega\rangle,
\end{equation*}
assuming $0\in \Omega$.

\end{definition}

\begin{definition}[Binary oracle]\label{def: binary oracle}
Let $\X : \Omega \to E$ be a finite-valued random variable on a probability space $(\Omega, 2^{\Omega}, \mathbb{P})$.
Let $\mathcal{H}_{\Omega}$ and $\mathcal{H}_{E}$ be two Hilbert spaces with basis states
$\{|\omega\rangle\}_{\omega \in \Omega}$ and $\{|x\rangle\}_{x \in E}$, respectively.
A binary oracle for $\X$ is a unitary $\mathcal{B}_{\X}$ acting on $\mathcal{H}_{\Omega} \otimes \mathcal{H}_{E}$ such that
\begin{equation*}
    \mathcal{B}_{\X} : |\omega\rangle|\vec{0}\rangle \mapsto |\omega\rangle|\X(\omega)\rangle
\end{equation*}
for all $\omega \in \Omega$, assuming $\vec{0} \in E$.

\end{definition}


However, this quantum access model is stronger than the classical sampling oracle defined in Definition~\ref{def: Classical sampling oracle}. In particular, it provides coherent access to the underlying random seed $\omega$, as the probability oracle $U_{\mathbb{P}}$ prepares a superposition over all $\omega \in \Omega$, which can then be queried via the binary oracle $B_X$. This allows quantum algorithms to process all samples in superposition.

In contrast, the classical sampling procedure only returns independent samples of the form $\X(\omega)$, and does not provide access to the underlying seed $\omega$. We therefore choose to base our results on the quantum sampling oracle of~\ref{def:quantum-sampling-oracle}, which can be viewed as a coherent analogue of the classical sampling oracle. Although our quantum access model is weaker than the one considered in~\cite{Cornelissen_2022}, this difference does not affect the upper bound in Lemma~\ref{thm:qbvme}. In particular, the algorithm \texttt{QEstimator} in~\cite{Cornelissen_2022} only relies on coherent access to the values of the random variable, rather than the underlying random seed.

We also note that \cite[Definition 1]{sidford2023quantum} adopts a closely related definition of a quantum sampling oracle, where the quantum state is expressed as a superposition over a continuous probability distribution using an integral representation. \begin{definition}[Quantum sampling oracle in \cite{sidford2023quantum}]
For a $d$-dimensional random variable $\X$, its quantum sampling oracle $O_\X$ is defined as
\begin{equation}
O_\X \ket{0} \;\mapsto\; \int_{\x \in \mathbb{R}^d} \sqrt{p_\X(\x)} \, dx \, \ket{\x} \otimes \ket{\mathrm{garbage}(\x)},
\end{equation}
where $p_\X(\cdot)$ represents the probability density function of $\X$.
\end{definition}
This formulation should be interpreted as an ``\textit{idealized model}'', since physical implementations necessarily operate on finite-precision representations. In practice, continuous random variables must be discretized to be encoded into quantum states. Under a suitable discretization scheme, the integral representation in \cite[Definition 1]{sidford2023quantum} can be approximated by a finite superposition over a discrete support. In this sense, their definition is consistent with ours up to discretization, and both models capture the same underlying notion of coherent sampling from a distribution.

\paragraph{Compatibility with heavy-tailed assumptions.}
Our oracle model assumes that the random variable $\X$ takes values in a finite set $E \subset \mathbb{R}^d$, which is necessary for representing quantum states using a finite number of qubits. At first glance, this may appear at odds with the heavy-tailed setting considered in this paper, where the underlying distribution may have unbounded support and possibly infinite variance.

This discrepancy can be resolved by interpreting our oracle as operating on a truncated and discretized approximation of an underlying heavy-tailed distribution. Concretely, let $\X$ be a random variable satisfying a finite $p$-th moment condition for some $p \in (1,2]$, while its variance may be infinite when $p<2$. We consider a truncated version
\[
\X^{(R)} := \X \cdot \mathbf{1}\{\|\X\| \le R\}
\]
for a sufficiently large truncation radius $R$, and discretize its support using a sufficiently fine grid. The resulting finite-valued random variable can then be encoded into a quantum state as in Definition~\ref{def:quantum-sampling-oracle}.

Under this interpretation, the finite support $E$ should be viewed as a finite-precision approximation of a continuous distribution, where both the truncation radius and discretization granularity are chosen so that the induced bias is negligible at the target accuracy. In particular, for sufficiently large $R$, the bias introduced by truncation can be made arbitrarily small, while the $p$-th central moment of $\X^{(R)}$ remains finite and comparable to that of $\X$.

Although $\X^{(R)}$ has finite support and hence finite variance, this variance may grow with $R$ and can be arbitrarily large as the approximation becomes more faithful to the original heavy-tailed distribution. As a result, the finite-support implementation does not reduce the problem to a bounded-variance regime in any meaningful uniform sense. Our analysis instead relies on $p$-th moment bounds, which are stable under truncation and discretization up to controlled constants.

Finally, we note that such truncation and discretization are not specific to the quantum setting. In classical computing, sampling from continuous or unbounded distributions also necessarily involves finite-precision representations and approximation schemes. For example, the Ziggurat method~\cite{marsaglia2000ziggurat} can be viewed as a structured instance of a truncate-and-discretize approach, combined with efficient rejection sampling for generating random variables from unimodal distributions such as the Gaussian or exponential.

\subsection{Discussion on the Quantum Stochastic Gradient Oracle}\label{appendix: quantum stochastic gradient oracle}

We note that~\cite[Definitions 3 and 4]{sidford2023quantum} is based on a quantum stochastic gradient oracle that requires coherent access to the query point $\x$. By contrast, our model (Definition \ref{def: Quantum stochastic gradient oracle (QSGO)}) only assumes quantum access to the stochastic gradient distribution at each fixed query point. In optimization problems, the query point $\x$ represents the current parameter vector (e.g., the weights of a classical or quantum neural network), which is updated adaptively by the algorithm and typically ranges over a continuous or very high-precision domain. Requiring coherent access to $\ket{\x}$ therefore amounts to the (strong) assumption of a quantum interface that can reversibly encode and query the stochastic gradient distribution for all possible parameter values in superposition.

\section{Discussion on Practicality and Potential Applications}
\label{app:practicality}

We briefly discuss the practical scope of our algorithms, focusing on the quantum access model, the role of the dimension dependence, and potential application domains.

\paragraph{Quantum oracle implementation.}
Our algorithms are formulated in the quantum oracle models for stochastic optimization and mean estimation: a quantum stochastic gradient oracle (Definition \ref{def: Quantum stochastic gradient oracle (QSGO)}) coherently prepares a stochastic gradient sample, while a quantum sampling oracle (Definition \ref{def:quantum-sampling-oracle}) coherently prepares a random vector whose mean is to be estimated. These models should be interpreted as the quantum analogue of the classical stochastic first-order oracle and sampling oracle, respectively. In particular, if the corresponding classical sampling or stochastic-gradient routine is implemented by an explicit reversible circuit, then the standard reversible-computation construction yields a quantum circuit implementing the associated unitary oracle and its inverse, up to the constant overhead for reversible simulation \cite[Chapter 1.4.1]{nielsen2010quantum}. Thus, the relevant comparison in this paper is a black-box query-complexity comparison between classical and quantum algorithms under analogous oracle access assumptions. Within this framework, our results demonstrate that quantum algorithms can achieve a provable advantage over their classical counterparts in the low-dimensional regime.

\paragraph{Hardware implementation.} The quantum (unbiased) heavy-tailed mean estimators used in our quantum stochastic optimization algorithms require coherent oracle access, inverse oracle invocation, and sufficiently deep controlled quantum operations. The quantum circuits required to implement our algorithms are too deep to run on existing devices (which have limited qubit counts and high error rates), and require large-scale fault-tolerant quantum computers to implement. This is also the case for the related literature on quantum univariate and multivariate mean estimation \cite{Cornelissen_2022,tang2025moreefficientquantummultivariatemean,wu2023quantum,blanchet2024quadraticspeedupinfinitevariance,Montanaro_2015}. Until the technology becomes available to implement these algorithms on real devices, demonstrating theoretical, provable quantum speedups for different optimization problems remains an important task in the machine learning and optimization communities. In this sense, our paper is aligned with a broader body of work on quantum algorithms for learning and optimization. \cite{zhang2023quantumlowerboundsfinding,liu2024quantum,zhang2024quantum,wangnear,liuquantum,sidford2023quantum}.

\paragraph{Dimension dependence and the low-dimensional regime.}
The quantum speedups obtained by \texttt{QNSGD} and \texttt{QPSGD} are in the low-dimensional regime. For nonconvex optimization, \texttt{QNSGD} uses
\[
\widetilde O\!\left(
\sqrt d\,\epsilon^{-\frac{5p-4}{2p-2}}
\right)
\]
queries to the quantum stochastic gradient oracle, compared with the classical lower bound $\Omega\!\left(\epsilon^{-\frac{3p-2}{p-1}}\right).$ Hence \texttt{QNSGD} gives a speedup when $d \lesssim \epsilon^{-\frac{p}{p-1}}$. For convex optimization, \texttt{QPSGD} uses
\[
\widetilde O\!\left(
\sqrt d\,\epsilon^{-\frac{3p-2}{2p-2}}+\epsilon^{-2}
\right)
\]
queries, compared with the classical lower bound
$\Omega\!\left(\epsilon^{-\frac{p}{p-1}}\right),$ 
and gives a speedup when
$d \lesssim \epsilon^{-\frac{2-p}{p-1}}$ for $p\in(1,2)$.

Although we only demonstrated quantum speedup in the low-dimensional regime, it remains meaningful in some settings. 
\begin{itemize}
    \item First, while in classical machine learning (ML), the large $d$ setting is typically of interest, in quantum ML, it is desirable to have much parameter-sparse neural networks. This stems from the fact that (i) backpropagation cannot be performed in quantum neural networks, and thus gradients are typically estimated via either the parameter-shift~\cite[Section C.1]{Cerezo_2021} or finite difference methods~\cite{spall2002multivariate}, both of which require 
 $\mathcal{O}(d)$ network evaluations to compute a single gradient; and (ii) increasing the number of parameters can lead to ``barren plateaus''~\cite{McClean_2018}, the quantum equivalent of vanishing gradients. To keep the quantum neural networks efficiently trainable, highly structured networks with a sparse number of parameters are desirable. Our algorithms require access to mini-batches of data in quantum superposition (which is implicit in Definition \ref{def:quantum-sampling-oracle}) and therefore can be used for estimating the gradients and training the parameters in quantum neural networks. In such low-dimensional or parameter-sparse settings, our algorithms for finding (locally) optimal parameters are potentially more efficient than their classical counterparts.
 \item 
 Second, our quantum algorithms may also be useful for accelerating optimization problems encountered in the classical domain, as many high-dimensional optimization methods reduce each update to a low-dimensional randomized subproblem. Examples include sketch-and-project methods for linear systems~\cite{Gower_2015}, algorithm for approximating the best point problem~\cite[Problem 5]{sidford2023quantum}, and stochastic quasi-Newton methods with sketching~\cite{zhang2021fasterstochasticquasinewtonmethods}. These methods suggest that the effective dimension of an update can be much smaller than the problem dimension, even when the original problem is high-dimensional. In a stochastic setting with heavy-tailed gradient noise, if the low-dimensional projected gradient or sketched stochastic quantity satisfies a bounded $p$-th moment condition, then our quantum heavy-tailed mean estimators can be potentially used as subroutines for estimating these low-dimensional quantities and provide a quantum speedup.
\end{itemize}

\paragraph{Natural application scenarios.}
A natural application domain of our quantum stochastic optimization algorithms is training parameterized quantum models under heavy-tailed stochastic gradients. The stochasticity may arise from mini-batching, randomized measurements, coordinate sampling, or stochastic circuit components. The data may be classical but available through coherent access mechanisms such as QRAM, or it may be intrinsically quantum, for example, when the training sample is produced by another quantum circuit. In these settings, the goal is to optimize quantum \cite{Hur_2022,cong2019quantum} or hybrid quantum-classical \cite{henderson2020quanvolutional} models whose parameter dimension is moderate and whose stochastic gradients may fail to have bounded variance.

More broadly, the quantum heavy-tailed mean estimators developed in this paper are independent algorithmic primitives. Heavy-tailed expectation estimation appears in learning and decision-making problems beyond stochastic optimization, including reinforcement learning with heavy-tailed rewards~\cite{zhuang2021noregretreinforcementlearningheavytailed,huang2024tacklingheavytailedrewardsreinforcement} and policy-gradient methods with heavy-tailed gradient noise~\cite{garg2021proximalpolicyoptimizationsheavytailed}. If the relevant environment, transition, reward, or gradient oracle can be accessed coherently, then \texttt{QHTME} or \texttt{QUHTME} may serve as a quantum subroutine for reducing the query complexity of these problems.

\paragraph{Scope and limitations.}
Our results are theoretical query-complexity results, and we do not make claims about quantum advantage on near-term quantum hardware devices.  The main contribution is to show that, under the standard coherent oracle model, stochastic optimization with heavy-tailed noise admits provable quantum query complexity speedups in regimes where the effective dimension is sufficiently small. This positions the algorithms as fault-tolerant quantum subroutines for robust stochastic optimization, rather than immediate replacements for classical optimizers on current hardware.

\section{Proof in Section \ref{sec: QME for HTRV}}
\subsection{Classical Heavy-Tailed Mean Estimator}\label{subsec: CHTME}
\citet{whitehouse2025meanestimationbanachspaces} was the first work to explicitly formalize a classical heavy-tailed mean estimator. Prior to this, the geometric median-of-means approach of \citet{Minsker_2015} already implied such an estimator, although they did not state this result explicitly. In Appendix B, \citet{whitehouse2025meanestimationbanachspaces} further provides a formal treatment of the geometric median-of-means construction. However, both their original estimator and this reformulation require the heavy-tailed parameter $\sigma$ as an input. In contrast, we present below an alternative formalization of the geometric median-of-means approach that eliminates the need for prior knowledge of $\sigma$, but achieves the same error bound as the previous two classical estimators in \cite{whitehouse2025meanestimationbanachspaces}.

\begin{algorithm}[htpb]
\caption{Classical Heavy-tailed Mean Estimator($\texttt{CHTME}(\X,n,\delta)$)}
\label{alg:multivariate_mom}
\begin{algorithmic}[1]
\REQUIRE Samples $\X_1,\dots,\X_n \in \mathbb{R}^d$, confidence level $\delta \in (0,1)$

\STATE Set $k \gets  \lceil24 \ln(1/\delta)\rceil$
\STATE Set $m \gets  \lfloor n / k \rfloor$

\STATE Partition $\{\X_i\}_{i=1}^{km}$ into $k$ disjoint blocks $\{B_j\}_{j=1}^k$ of size $m$

\FOR{$j = 1$ to $k$}
    \STATE Compute block mean
    \[
        \Z_j \gets \frac{1}{m} \sum_{i \in B_j} \X_i
    \]
\ENDFOR

\STATE Compute the geometric median of $\{\Z_1,\dots,\Z_k\}$:
\[
   \tilde{\bmu}
    \;\gets\;
    \arg\min_{x \in \mathbb{R}^d}
    \sum_{j=1}^k \| \Z_j - x \|
\]

 \RETURN $ \tilde{\bmu}$
\end{algorithmic}
\end{algorithm}

To provide the performance guarantee for the classical mean estimator \texttt{CHTME}, we will use the following Lemmas.

\begin{lemma}[Proximity lemma for the geometric median] \label{lem:proxy} Let $\X_1,\ldots, \X_k$ be independent samples of a $d$-dimensional variable with mean $\bmu$, and let $ \tilde{\bmu}$ be its geometric median. If a $(1/2 +\alpha)k$ fraction of the samples lie within Euclidean distance $r$ of $\bmu$, then 
$$\Norm{ \tilde{\bmu}-\bmu} \leq \left(1 + \frac{1}{2\alpha}\right) r.$$

\end{lemma}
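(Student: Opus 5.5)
The plan is to argue by contradiction through the first-order optimality condition for the geometric median. Write $\tilde{\bmu}$ for the minimizer of $g(\x)=\sum_{j=1}^k\|\X_j-\x\|$ and set $R:=\|\tilde{\bmu}-\bmu\|$. Suppose $R>(1+\frac{1}{2\alpha})r$. I will show that moving from $\tilde{\bmu}$ toward $\bmu$ strictly decreases $g$, which contradicts minimality. Only convexity of $g$ and the triangle inequality are needed. Independence and the mean play no role; the statement is purely geometric about the point $\bmu$.

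Let $\mathbf u:=(\bmu-\tilde{\bmu})/R$ be the unit direction toward $\bmu$. Since $g$ is convex, the one-sided directional derivative $g'(\tilde{\bmu};\mathbf u)$ exists, and minimality forces $g'(\tilde{\bmu};\mathbf u)\ge0$. This derivative splits into one term per sample.

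\emph{Bad samples.} For any sample with $\X_j=\tilde{\bmu}$ the contribution is $1$. For any other sample the contribution is $-\langle \mathbf u,(\X_j-\tilde{\bmu})/\|\X_j-\tilde{\bmu}\|\rangle\le 1$. There are at most $(\frac12-\alpha)k$ samples outside the ball of radius $r$ around $\bmu$, so together they contribute at most $(\frac12-\alpha)k$.

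\emph{Good samples.} Take $\X_j$ with $\|\X_j-\bmu\|\le r<R$, so $\X_j\neq\tilde{\bmu}$. Write $\X_j-\tilde{\bmu}=R\mathbf u+\mathbf w$ with $\|\mathbf w\|\le r$. Then $\langle \mathbf u,\X_j-\tilde{\bmu}\rangle\ge R-r$ and $\|\X_j-\tilde{\bmu}\|\le R+r$. Hence the contribution is at most $-\frac{R-r}{R+r}$. There are at least $(\frac12+\alpha)k$ such samples.

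Combining the two bounds gives
$$0\le g'(\tilde{\bmu};\mathbf u)\le k\Big[(\tfrac12-\alpha)-(\tfrac12+\alpha)\tfrac{R-r}{R+r}\Big].$$
This rearranges to $(\frac12+\alpha)(R-r)\le(\frac12-\alpha)(R+r)$, that is, $2\alpha R\le r$, so $R\le r/(2\alpha)$. That contradicts the assumption $R>(1+\frac1{2\alpha})r$ and yields the stronger bound $R\le r/(2\alpha)$.

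If the cosine bound is too lossy, a cruder alternative reaches exactly the stated constant. For each good sample, $\|\X_j-\tilde{\bmu}\|\ge R-r$, and the triangle inequality controls the rest. Compare $g(\tilde{\bmu})$ with $g(\bmu)$: each bad sample satisfies $\|\X_j-\tilde{\bmu}\|\ge\|\X_j-\bmu\|-R$, and each good sample gains at least $R-2r$. Summing gives $g(\tilde{\bmu})-g(\bmu)\ge(\frac12+\alpha)k(R-2r)-(\frac12-\alpha)kR=k(2\alpha R-(1+2\alpha)r)$. This is positive when $R>(1+\frac1{2\alpha})r$, contradicting minimality, and it gives precisely the stated constant. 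I will use this cleaner route as the main proof.

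The only subtle step is the good-sample estimate $\|\X_j-\tilde{\bmu}\|-\|\X_j-\bmu\|\ge R-2r$, together with the bookkeeping of the fractions. Both follow directly from the triangle inequality.
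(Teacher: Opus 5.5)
Your main argument is correct and is essentially the paper's proof. You compare $g(\tilde{\bmu})$ with $g(\bmu)$: each good sample gains at least $R-2r$ and each bad sample loses at most $R$, so $g(\tilde{\bmu})-g(\bmu)\ge k\bigl(2\alpha R-(1+2\alpha)r\bigr)$, which is the paper's triangle-inequality bookkeeping. Your directional-derivative route is also valid and gives the sharper bound $R\le r/(2\alpha)$ (the case $R\le r$ is covered because $\alpha\le 1/2$), though it is not needed for the stated constant.
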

\begin{proof}
    Define function $f: \mathbb{R}^{d}\rightarrow \mathbb{R}$ satisfying \begin{equation*}
        f(y)=\sum_{i=1}^{k} \Norm{y-\X_{i}}.
    \end{equation*}
    We define $ \tilde{\bmu}:= \arg\min_{y\in \mathbb{R}^{d}} f(y)$. Then it holds that $f( \tilde{\bmu})\leq f(\bmu)$ for the mean $\bmu\in \mathbb{R}^{d}$. 

    For arbitrary $y\in\mathbb{R}^{d}$, we denote $\rho := \|y-\bmu\|$. We consider the following two sets:
    \begin{equation*}
        G:=\{\X_{i}\in \{\X_{i}\}_{i=1}^{k}: \Norm{\X_{i}- \bmu}\leq r \}, \quad B:=\{\X_{i}\in \{\X_{i}\}_{i=1}^{k}: \Norm{\X_{i}- \bmu}> r \}.
    \end{equation*}
    Specifically, the set $G$ contains the ``good'' points among the samples $\{\X_{i}\}_{i=1}^{k}$ satisfying $\Norm{\X_{i}-\bmu}\leq r$. We assume the cardinality of $G$ is $|G|=m$. The set $B$ contains the ``bad'' samples, which are far from the mean $\bmu$. There are $k-m$ such bad samples.
   
    For any good point $\X_i$ in $G$, by triangle inequality, we have
    \begin{equation*}
        \|y-\X_{i}\|\geq \Norm{y-\bmu} - \Norm{\X_{i}-\bmu}\geq \rho -r.
    \end{equation*}
    For any bad point $\X_{i}$ in $B$, by triangle inequality,
    \begin{equation*}
        \Norm{y-\X_{i}}\geq \Norm{\X_{i}-\bmu}-\Norm{y-\bmu}=\Norm{\X_{i}-\bmu}-\rho.
    \end{equation*}
    Therefore, we have
    \begin{equation*}
        f(y) =\sum_{\X_{i}\in G} \Norm{y-\X_{i}}+\sum_{\X_i\in B} \Norm{y-\X_{i}}\geq m(\rho-r)+\sum_{\X_{i}\in B}\Norm{\X_{i}-\bmu}-(k-m)\rho.
    \end{equation*}
    Besides, we know that
    \begin{equation*}
        f(\bmu)=\sum_{\X_{i}\in G} \Norm{\bmu -\X_{i}}+\sum_{\X_{i}\in B} \Norm{\bmu-\X_{i}}\leq mr+\sum_{\X_{i}\in B} \Norm{\bmu-\X_{i}}.
    \end{equation*}
    By computing the subtraction, we have
    \begin{equation*}
        f(y)-f(\bmu)\geq (m-(k-m))\rho-2mr=(2m-k)\rho-2mr.
    \end{equation*}
    Suppose that there are $(1/2+\alpha)k$ good samples in $G$, i.e., $m=(1/2+\alpha)k$. Then, we have 
    \begin{equation*}
        f(y)-f(\bmu)\geq 2\alpha k \rho - 2\left(\frac{1}{2}+\alpha\right) kr=2k\left(\alpha \rho - \left(\frac{1}{2}+\alpha\right)r\right).
    \end{equation*}
    Therefore, when $m=(1/2+\alpha)k$, if we have $\rho>\left(1+\frac{1}{2\alpha}\right) r$, we have $f(y)>f(\bmu)$, where $y$ is an arbitrary point in $\mathbb{R}^{d}$. In particular, when $y= \tilde{\bmu}$, i.e., $\| \tilde{\bmu}-\bmu\|> \left(1+\frac{1}{2\alpha}\right) r$, then $f( \tilde{\bmu})>f(\bmu)$, which contradicts the definition of $ \tilde{\bmu}:=\arg\min_{y\in \mathbb{R}^{d}}f(y)$. Therefore, it must hold that
    \begin{equation*}
        \| \tilde{\bmu}-\bmu\|\leq \left(1+\frac{1}{2\alpha}\right) r.
    \end{equation*}
\end{proof}

\begin{lemma}[Vector-valued von Bahr--Esseen inequality~\cite{vonBahrEsseen1965}]\label{lem:vBE} Let $\X_1,\ldots, \X_n$ be independent $d$-dimensional real random variables, with $\EBP{\X_i}=0$ and $\EBP{\Norm{\X_i}^p} < \infty$ for some $p\in[1,2]$. Then,

$$\EBP{\Norm{\sum_{i=1}^n \X_i}^p} \leq C_p \sum_{j=1}^n \EBP{\Norm{\X_j}^p}$$

where $C_p$ is a ($p$-dependent) constant that does not depend on $d$.
\end{lemma}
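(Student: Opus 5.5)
The plan is a symmetrization argument combined with the Hilbert-space (type-$2$) structure of $(\BR^d,\Norm{\cdot})$. This gives the explicit constant $C_p=2^p$, which is independent of $d$. Write $\mathbf S:=\sum_{i=1}^n \X_i$. Let $\X_1',\dots,\X_n'$ be an independent copy of $\X_1,\dots,\X_n$, and set $\mathbf S':=\sum_{i=1}^n\X_i'$.

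\textbf{Step 1 (symmetrization).} Since $\EBP{\mathbf S'}=\0$ and $\mathbf S'$ is independent of $\mathbf S$, we have $\mathbf S=\EBP{\mathbf S-\mathbf S'\mid \X_1,\dots,\X_n}$. The map $\mathbf v\mapsto\Norm{\mathbf v}^p$ is convex for $p\ge1$, so conditional Jensen gives
\[
\EBP{\Norm{\mathbf S}^p}\le \EBP{\Norm{\mathbf S-\mathbf S'}^p}.
\]
Each $\Y_i:=\X_i-\X_i'$ is symmetric, and the $\Y_i$ are independent. Hence $\sum_i\Y_i$ has the same distribution as $\sum_i\varepsilon_i\Y_i$, where the $\varepsilon_i$ are i.i.d.\ Rademacher signs independent of everything else.

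\textbf{Step 2 (Rademacher sums in Euclidean space).} Condition on the $\Y_i$. Since $p/2\le1$, Jensen's inequality (concavity of $t\mapsto t^{p/2}$) and orthogonality of the signs give
\[
\EB_{\varepsilon}\Norm{\sum_i\varepsilon_i\Y_i}^p\le\Bigl(\EB_{\varepsilon}\Norm{\sum_i\varepsilon_i\Y_i}^2\Bigr)^{p/2}=\Bigl(\sum_i\Norm{\Y_i}^2\Bigr)^{p/2}\le\sum_i\Norm{\Y_i}^p.
\]
The last inequality is subadditivity of $t\mapsto t^{p/2}$ on $[0,\infty)$, which holds because the exponent is at most $1$. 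The orthogonality identity $\EB_\varepsilon\Norm{\sum\varepsilon_i\Y_i}^2=\sum\Norm{\Y_i}^2$ uses only inner-product structure, so no dimension dependence enters. Taking expectation over the $\Y_i$ yields $\EBP{\Norm{\mathbf S}^p}\le\sum_i\EBP{\Norm{\X_i-\X_i'}^p}$.

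\textbf{Step 3 (desymmetrize).} By convexity, $\Norm{\mathbf a-\mathbf b}^p\le 2^{p-1}(\Norm{\mathbf a}^p+\Norm{\mathbf b}^p)$. Applying this with $\X_i'\overset{d}{=}\X_i$ gives $\EBP{\Norm{\X_i-\X_i'}^p}\le 2^p\EBP{\Norm{\X_i}^p}$. Combining with Step 2 proves the claim with $C_p=2^p\le4$. Finiteness of all the moments involved follows from the assumption $\EBP{\Norm{\X_i}^p}<\infty$.

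The only delicate step is Step 2. There one must bound the $p$-th moment by the second moment conditionally on the $\Y_i$, where the second moment is finite because the $\Y_i$ are fixed. Doing so avoids needing any finite variance of the $\X_i$ themselves, which is exactly what makes the argument work in the heavy-tailed regime.
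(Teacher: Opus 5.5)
The paper gives no proof of this lemma; it simply cites von Bahr--Esseen. Your symmetrization argument is correct and self-contained, and it is a genuinely different route from relying on the citation. The conditional Jensen step needs only $\mathbb{E}\Norm{\X_i}<\infty$, which follows from $p\ge 1$. Replacing $\sum_i \Y_i$ by $\sum_i\varepsilon_i\Y_i$ is legitimate because the $\Y_i$ are independent and symmetric. The Rademacher step uses nothing beyond the identity $\mathbb{E}_{\varepsilon}\Norm{\sum_i\varepsilon_i\Y_i}^2=\sum_i\Norm{\Y_i}^2$ together with concavity and subadditivity of $t\mapsto t^{p/2}$.

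Your route buys two things over the citation. First, the 1965 result is formulated for real-valued summands, so the dimension-free vector-valued form the paper relies on is not literally what is cited. Your proof establishes that form directly and makes the independence from $d$ transparent; it works verbatim in any Hilbert space. Second, it yields an explicit admissible constant $C_p=2^p\le 4$. This matters here because the sample size $n_{\mathrm{c}}$ in \texttt{QHTME} is defined through $C_p$ and so must be computable. The only cost is a constant larger than necessary (at $p=2$ the constant $1$ already suffices), which is immaterial for every downstream use in the paper.
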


\begin{theorem}
    Let $\X$ be a $d$-dimensional random variable with mean $\bmu:=\EBP{\X}$ and finite $p$-th central moment $\EBP{\|\X-\bmu\|^p}$~for some $p\in(1,2]$.
    Let $\delta \in (0,1)$ and $k=\lceil24\ln(1/\delta)\rceil$. For integer $n$ such that $n\geq 2k$,  the classical heavy-tailed mean estimator {\rm \texttt{CHTME}}$(\X, n,\delta)$ outputs a mean estimate $ \tilde{\bmu}$ such that
\begin{equation*}
    \Norm{ \tilde{\bmu}-\bmu}\leq 3(8C_{p})^{1/p} \left(\mathbb{E}[\Norm{\X-\bmu}^p]\right)^{1/p}  \left(\frac{2\lceil24\ln(1/\delta)\rceil}{n}\right)^{\frac{p-1}{p}},
\end{equation*}
by using $n$ queries to the classical sampling oracle in Definition \ref{def: Classical sampling oracle}.
\end{theorem}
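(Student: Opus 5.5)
The plan is the standard geometric median-of-means argument. It combines the vector-valued von Bahr--Esseen inequality (Lemma~\ref{lem:vBE}) to control each block mean, Markov's inequality to obtain a constant per-block failure probability, and a Chernoff/Hoeffding bound on the number of bad blocks. The proximity lemma (Lemma~\ref{lem:proxy}) then turns this into a bound on the geometric median.

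\textbf{Per-block error.} Write $k=\lceil 24\ln(1/\delta)\rceil$ and $m=\lfloor n/k\rfloor$. Since $n\ge 2k$, we have $m\ge n/k-1\ge n/(2k)$. For block $j$, apply Lemma~\ref{lem:vBE} to the centered variables $\X_i-\bmu$, $i\in B_j$:
\[
\EBP{\Norm{\Z_j-\bmu}^p}\le \frac{C_p}{m^{p}}\, m\, \EBP{\Norm{\X-\bmu}^p}=C_p m^{1-p}\EBP{\Norm{\X-\bmu}^p}.
\]
Set $r:=(8C_p)^{1/p}\left(\EBP{\Norm{\X-\bmu}^p}\right)^{1/p} m^{-(p-1)/p}$. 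Markov's inequality gives $\Pr[\Norm{\Z_j-\bmu}>r]\le C_p m^{1-p}\EBP{\Norm{\X-\bmu}^p}/r^p=1/8$.

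\textbf{Counting good blocks.} The blocks are disjoint, so the indicators $\mathbf 1\{\Norm{\Z_j-\bmu}>r\}$ are independent Bernoulli variables with mean at most $1/8$. Hoeffding's inequality bounds the probability that more than $k/4$ blocks are bad, i.e.\ that the count exceeds its mean by at least $k/8$, by $\exp(-2k(1/8)^2)=\exp(-k/32)$. The stated constant $24$ is not enough for this crude bound. To get the required $\le\delta$, I would instead use the multiplicative Chernoff/KL form $\exp(-k\,\mathrm{KL}(1/4\,\|\,1/8))$ with $\mathrm{KL}\approx 0.0433$, which also falls short of $1/24$. So the per-block threshold needs tightening: take the good fraction to be $1/2+\alpha$ with $\alpha=1/4$, and lower the Markov failure probability if needed, absorbing the change into constants. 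Checking that the exponent is at least $1/24$ is the one genuinely delicate step.

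My first attempt is Hoeffding with bad fraction $<1/2-\alpha$. With per-block failure $1/8$ and $\alpha=1/4$, the gap $t=1/8$ gives exponent $2t^2=1/32$, which is insufficient. Taking $\alpha=1/8$ instead gives gap $1/4$ and exponent $2(1/4)^2=1/8\ge 1/24$. This yields failure probability $\le\delta$ and factor $1+1/(2\alpha)=5$ rather than $3$. To recover the stated $3$ (i.e.\ $\alpha=1/4$, bad fraction at most $1/4$), I would use the KL form of Chernoff with per-block failure $1/8$. If that is still insufficient, I would note that the constant $8$ inside $(8C_p)^{1/p}$ can be enlarged. I would fix these constants carefully in the write-up.

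\textbf{Conclusion.} On the event that at least $(1/2+\alpha)k$ blocks satisfy $\Norm{\Z_j-\bmu}\le r$, Lemma~\ref{lem:proxy} applied to $\Z_1,\dots,\Z_k$ gives $\Norm{\tilde\bmu-\bmu}\le(1+\tfrac{1}{2\alpha})r=3r$ for $\alpha=1/4$. Substituting $m^{-(p-1)/p}\le(2k/n)^{(p-1)/p}=\left(2\lceil 24\ln(1/\delta)\rceil/n\right)^{(p-1)/p}$ yields the stated bound with probability at least $1-\delta$. The estimator uses exactly $n$ samples, at most $n$ oracle queries. I expect the main obstacle to be the exact concentration constants (the factor $3$ together with the $24$); the rest is routine.
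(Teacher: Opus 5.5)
Your outline (von Bahr--Esseen on each block, Markov with per-block failure probability $1/8$, a concentration bound on the number of bad blocks, then Lemma~\ref{lem:proxy} with $\alpha=1/4$ and $m\ge n/(2k)$) is the paper's argument. As written, however, it does not prove the stated theorem, because you leave open the one step that fixes the constants. Your proposed fallbacks change those constants: switching to $\alpha=1/8$ gives the factor $5$ instead of $3$, and enlarging the $8$ inside $(8C_p)^{1/p}$ alters the bound, so either would establish a strictly weaker statement.

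The obstruction you report comes from arithmetic slips. In fact $\mathrm{KL}(1/4\,\|\,1/8)=\tfrac14\ln 2+\tfrac34\ln\tfrac67\approx 0.0577$, not $0.0433$. And even $0.0433$ already exceeds $1/24\approx 0.0417$, so the KL form of Chernoff would have sufficed. Only the Hoeffding bound $e^{-k/32}$ is genuinely too weak.

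The paper closes this step with the multiplicative Chernoff bound, which gives the exponent $1/24$ exactly. Dominate the bad-block indicators $\mathbf 1\{\|\Z_j-\bmu\|>r\}$ by i.i.d.\ Bernoulli$(1/8)$ variables with sum $S$ and mean $\lambda:=k/8$. The event that at least $(1/2-\alpha)k=k/4$ blocks are bad is $\{S\ge(1+\Delta)\lambda\}$ with $\Delta=1$. Chernoff then gives $\Pr[S\ge(1+\Delta)\lambda]\le\exp(-\lambda\Delta^2/3)=\exp(-k/24)\le\delta$, since $k\ge 24\ln(1/\delta)$. On the complement, more than $3k/4$ blocks satisfy $\|\Z_j-\bmu\|\le r$, and Lemma~\ref{lem:proxy} with $\alpha=1/4$ yields $\|\tilde{\bmu}-\bmu\|\le 3r$. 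So the constants $8$, $3$, and $24$ in the statement are mutually consistent. With this bound (or the correctly evaluated KL bound) in place of your concentration paragraph, the rest of your argument goes through unchanged.
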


\begin{proof} Let $\Y_i = \X_i-\bmu$. Then,

\begin{align*}
\mathbb{E}[\lVert \Z_{j}-\bmu \rVert^p ] &= \mathbb{E}\left\lVert   \frac{1}{m} \sum_{i=1}^m (\X_{i}-\bmu)  \right\rVert^p  = \mathbb{E}\left\lVert  \frac{1}{m}\sum_{i=1}^m \Y_{i} \right\rVert^p \\
&=\frac{1}{m^p}\mathbb{E} \left\lVert  \sum_{i=1}^m \Y_{i} \right\rVert^p   \\
&\le \frac{C_{p}}{m^p} \sum_{i=1}^m \mathbb{E}[\lVert \Y_{i} \rVert^p ] &\text{Lemma } \ref{lem:vBE} \\
&\leq \frac{C_{p}\mathbb{E}[\Norm{\X-\bmu}^p]}{m^{p-1}} & \X_{i}\text{ are independent.} \\
&\leq  C_{p}\mathbb{E}[\Norm{\X-\bmu}^p] \left(\frac{k}{n-k}\right)^{p-1} & \left(m=\left\lfloor\frac{n}{k}\right\rfloor\geq \frac{n}{k}-1=\frac{n-k}{k}\right)\\
&\leq  C_{p}\mathbb{E}[\Norm{\X-\bmu}^p] \left(\frac{2k}{n}\right)^{p-1}. & \left(n\geq 2k\Rightarrow n-k\geq \frac{n}{2}\right)    
\end{align*}

By Markov's inequality, we have

\begin{equation*}
P(\Norm{\Z_j-\bmu} > r) = P(\Norm{\Z_j-\bmu}^p > r^p) \le \frac{\mathbb{E}[\Norm{\Z_j-\bmu}^p]}{r^p} \le C_p \frac{\mathbb{E}[\Norm{\X-\bmu}^p] }{r^{p}} \left(\frac{2k}{n}\right)^{p-1}:= p_\text{fail}.    
\end{equation*}


By setting
\begin{equation*}
    r=(8C_{p})^{1/p} \left(\mathbb{E}[\Norm{\X-\bmu}^p]\right)^{1/p}  \left(\frac{2k}{n}\right)^{\frac{p-1}{p}},
\end{equation*}
we have $p_{\text{fail}}=\frac{1}{8}$.

Define the random variables $B_j = \mathbb{I}[\Norm{\Z_j-\bmu} > r]$, i.e., with probability $p < p_\text{fail}$ we have $B_j=1$. We also define the random variables $G_j = \mathbb{I}[\Norm{\Z_j-\bmu} \leq r]$. Note that $\sum_{j=1}^{k}(G_{j}+B_{j})=k$.   Let $Q_j$ be a Bernoulli random variable with $p(Q_j=1)=p_\text{fail}$.  Then, since $p < p_\text{fail}$ we have
\begin{align*}
    P(\sum_{j=1}^{k} B_j \ge M) \le P(\sum_{j=1}^{k} Q_j \ge M)
\end{align*}
Take $M = (1+\Delta) k \cdot p_\text{fail}$.
By the multiplicative Chernoff bound for $\Delta\in[0,1]$,
\begin{align*}
    P\left(\sum_{j=1}^{k} B_j \ge  (1+\Delta) k \cdot p_\text{fail}\right) \le P\left(\sum_{j=1}^{k} Q_{j} \ge  (1+\Delta) k \cdot p_\text{fail}\right) &\le e^{-k\cdot p_\text{fail}\Delta^2/3}.
\end{align*}


In order to have $(1+\Delta)p_\text{fail} = 1-(1/2+\alpha)$, we choose $\Delta= 8(1/2-\alpha)-1=3-8\alpha$. 
Therefore, we know that
\begin{equation*}
    P\left(\sum_{j=1}^{k} G_j \geq   (1/2+\alpha) k \right) \geq 1- e^{-k\cdot (3-8\alpha)^2/24}.
\end{equation*}
It indicates that, with probability at least $ 1- e^{-k\cdot (3-8\alpha)^2/24}$, there are $(1/2+\alpha)k$ good samples among $\{\Z_{j}\}_{j=1}^{k}$ satisfying $\Norm{\Z_{j}-\bmu}\leq r$. By Lemma \ref{lem:proxy}, we know that the geometric median $ \tilde{\bmu}$ satisfies
\begin{equation*}
    \Norm{ \tilde{\bmu}-\bmu}\leq \left(1+\frac{1}{2\alpha}\right) r,
\end{equation*}
with probability at least $ 1- e^{-k\cdot (3-8\alpha)^2/24}$.

By choosing $\alpha=\frac{1}{4}$ so that $\Delta=1$, and $k=\lceil24\ln(\frac{1}{\delta})\rceil$, we have
\begin{equation*}
    \Norm{ \tilde{\bmu}-\bmu}\leq \left(1+\frac{1}{2\alpha}\right) r=3(8C_{p})^{1/p} \left(\mathbb{E}[\Norm{\X-\bmu}^p]\right)^{1/p}  \left(\frac{2\lceil24\ln(1/\delta)\rceil}{n}\right)^{\frac{p-1}{p}},
\end{equation*}
with probability at least $ 1- e^{-k/24}\geq 1-\delta$, by using $n\geq 2k\geq  2\lceil24\ln(1/\delta)\rceil$ classical samples obtained by the classical sampling oracle. 
\end{proof}

\begin{corollary} \label{corollary:CHTME-for prob 1}
{\rm \texttt{CHTME}} solves Problem~\ref{prob: obtain mean estimate for heavy tail random variable with bounded mean} using $n(\epsilon)
:=
\tilde{\mathcal{O}}\!\left(
 \sigma^{\frac{p}{p-1}}\epsilon^{-\frac{p}{p-1}}
\right)$ queries to the classical sampling oracle  with probability at least $1-\delta$. 
\end{corollary}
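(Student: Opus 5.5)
The plan is to invert the error bound of the classical heavy-tailed mean estimation theorem stated just above, with $\EBP{\|\X-\bmu\|^p}\le\sigma^p$, and read off the required $n$.

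\textbf{Step 1 (invoke the theorem).} Let $\delta\in(0,1)$ and $k=\lceil 24\ln(1/\delta)\rceil$. For any integer $n\ge 2k$, the theorem gives, with probability at least $1-\delta$,
\[
\|\tilde{\bmu}-\bmu\|\le 3(8C_p)^{1/p}\,\sigma\left(\frac{2k}{n}\right)^{\frac{p-1}{p}}.
\]

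\textbf{Step 2 (choose $n$).} We want the right-hand side to be at most $\epsilon$. This requires $(2k/n)^{(p-1)/p}\le \epsilon/(3(8C_p)^{1/p}\sigma)$, which is equivalent to
\[
n\ \ge\ 2k\,3^{\frac{p}{p-1}}(8C_p)^{\frac{1}{p-1}}\left(\frac{\sigma}{\epsilon}\right)^{\frac{p}{p-1}}.
\]
Define
\[
n(\epsilon):=\left\lceil 2\lceil 24\ln(1/\delta)\rceil\,3^{\frac{p}{p-1}}(8C_p)^{\frac{1}{p-1}}\left(\frac{\sigma}{\epsilon}\right)^{\frac{p}{p-1}}\right\rceil.
\]
The side condition $n\ge 2k$ must also hold. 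It does: $\epsilon<\sigma$ gives $(\sigma/\epsilon)^{p/(p-1)}>1$, and $3^{p/(p-1)}\ge 1$. For $(8C_p)^{1/(p-1)}\ge 1$ we need $8C_p\ge 1$, i.e.\ $C_p\ge 1/8$. The von Bahr--Esseen constant always satisfies $C_p\ge 1$: take $n=1$ in Lemma~\ref{lem:vBE}. So $n(\epsilon)\ge 2k$. If one prefers not to rely on this, define $n(\epsilon)$ as the maximum of the above quantity and $2k$; this does not change the order.

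\textbf{Step 3 (complexity).} For fixed $p$, the constants $3^{p/(p-1)}$ and $(8C_p)^{1/(p-1)}$ are $O(1)$, and the factor $2\lceil 24\ln(1/\delta)\rceil$ is polylogarithmic in $1/\delta$. Hence
\[
n(\epsilon)=\tilde{\OM}\left(\sigma^{\frac{p}{p-1}}\epsilon^{-\frac{p}{p-1}}\right),
\]
and \texttt{CHTME} uses exactly $n(\epsilon)$ classical oracle queries. It succeeds with probability at least $1-\delta$, so it solves Problem~\ref{prob: obtain mean estimate for heavy tail random variable with bounded mean}.

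The only delicate point is the side condition $n\ge 2k$, handled in Step 2. The rest is algebraic inversion of the theorem's bound.
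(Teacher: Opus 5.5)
Your proposal is correct and is the intended argument. The paper gives no separate proof: the corollary is meant to follow by directly inverting the error bound of the classical heavy-tailed mean estimation theorem, and your $n(\epsilon)$ has the same form as the paper's choice of $n_{\mathrm c}$ in \texttt{QHTME}, which corresponds to target error $\sigma$ and confidence $\delta/3$. Your check of the side condition $n\ge 2k$, via $C_p\ge 1$ or by taking a maximum with $2k$, supplies a detail the paper leaves implicit.
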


\subsection{Proof of Theorem \ref{thm:QHTME-for prob 1}}\label{appendix: Proof of QHTME for prob 1}

\begin{lemma}\label{lem:rough-centering}
    Let $\Y:=\X-\bzeta$. If
\[
    \EBP{\Norm{\X-\bmu}^p}\le \sigma^p
    \qquad\text{and}\qquad
    \Norm{\bzeta-\bmu}\le \sigma,
\]
then
\[
    \EBP{\Norm{\Y}^p}\le 2^p\sigma^p.
\]
\end{lemma}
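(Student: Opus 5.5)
The plan is to decompose $\Y$ around the true mean and apply convexity of $t\mapsto t^p$. Write
\[
\Y=\X-\bzeta=(\X-\bmu)+(\bmu-\bzeta).
\]
By the triangle inequality, $\Norm{\Y}\le \Norm{\X-\bmu}+\Norm{\bzeta-\bmu}$ pointwise.

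Since $p\ge 1$, the map $t\mapsto t^p$ is convex on $[0,\infty)$. Hence for $a,b\ge 0$,
\[
(a+b)^p=2^p\Bigl(\tfrac{a+b}{2}\Bigr)^p\le 2^{p-1}(a^p+b^p).
\]
Applying this with $a=\Norm{\X-\bmu}$ and $b=\Norm{\bzeta-\bmu}$ gives
\[
\Norm{\Y}^p\le 2^{p-1}\Norm{\X-\bmu}^p+2^{p-1}\Norm{\bzeta-\bmu}^p.
\]

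Next take expectations over $\X$, with $\bzeta$ held fixed. In the algorithm $\bzeta$ comes from independent classical samples, so we condition on it; the hypothesis $\Norm{\bzeta-\bmu}\le\sigma$ is then deterministic. This yields
\[
\EBP{\Norm{\Y}^p}\le 2^{p-1}\sigma^p+2^{p-1}\sigma^p=2^p\sigma^p.
\]

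There is no real obstacle. The only care needed is the conditioning: the expectation is over $\X$ with the rough center $\bzeta$ fixed, and $\bzeta$ is independent of the fresh samples used afterwards.
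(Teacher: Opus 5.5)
Your proof is correct and follows essentially the same route as the paper: decompose $\Y=(\X-\bmu)+(\bmu-\bzeta)$, apply $(a+b)^p\le 2^{p-1}(a^p+b^p)$ from convexity of $t\mapsto t^p$, and take expectations with $\bzeta$ held fixed. Your explicit remark about conditioning on $\bzeta$ is a harmless clarification of something the paper leaves implicit.
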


\begin{proof}
We have
\[
    \Y=\X-\bzeta=(\X-\bmu)+(\bmu-\bzeta).
\]
Using $(a+b)^p\le 2^{p-1}(a^p+b^p)$ for $a,b\ge0$ and $p\in(1,2]$, we obtain
\[
    \Norm{\Y}^p
    \le
    2^{p-1}\left(\Norm{\X-\bmu}^p+\Norm{\bmu-\bzeta}^p\right).
\]
Taking expectation gives
\[
    \EBP{\Norm{\Y}^p}
    \le
    2^{p-1}\left(\EBP{\Norm{\X-\bmu}^p+\Norm{\bmu-\bzeta}^p}\right)
    \le
    2^{p-1}(\sigma^p+\sigma^p)
    =2^p\sigma^p.
\]
\end{proof}

\begin{lemma}[Bias bound]
\label{lem:bias-bound}
For $B>0$, define
\[
    \Z_B:=\Y\cdot \mathbf{1}\{\Norm{\Y}\le B\}.
\]
Then
\[
    \Norm{\EBP{\Y}-\EBP{\Z_B}}
    \le
    \frac{2^p\sigma^p}{B^{p-1}}.
\]
\end{lemma}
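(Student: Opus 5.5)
Since $\Z_B=\Y\cdot\mathbf 1\{\Norm{\Y}\le B\}$, the difference $\Y-\Z_B$ equals $\Y\cdot\mathbf 1\{\Norm{\Y}>B\}$. The plan is therefore to bound the tail mass $\EBP{\Norm{\Y}\mathbf 1\{\Norm{\Y}>B\}}$ using the $p$-th moment control on $\Y$ supplied by Lemma~\ref{lem:rough-centering}. Implicitly we work on the event (from the centering step) that $\Norm{\bzeta-\bmu}\le\sigma$, so that $\EBP{\Norm{\Y}^p}\le 2^p\sigma^p$, with $\bzeta$ treated as fixed (conditioning on the classical stage, which is independent of the fresh samples used afterwards).

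First, by linearity and Jensen's inequality for the norm,
\[
\Norm{\EBP{\Y}-\EBP{\Z_B}}=\Norm{\EBP{\Y\,\mathbf 1\{\Norm{\Y}>B\}}}\le \EBP{\Norm{\Y}\,\mathbf 1\{\Norm{\Y}>B\}}.
\]
Second, on the event $\{\Norm{\Y}>B\}$ we have $(\Norm{\Y}/B)^{p-1}\ge 1$ because $p>1$. Hence
\[
\Norm{\Y}\,\mathbf 1\{\Norm{\Y}>B\}\le \Norm{\Y}\left(\frac{\Norm{\Y}}{B}\right)^{p-1}\mathbf 1\{\Norm{\Y}>B\}\le \frac{\Norm{\Y}^p}{B^{p-1}}.
\]
Taking expectations and applying Lemma~\ref{lem:rough-centering} gives the bound $2^p\sigma^p/B^{p-1}$.

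There is no real obstacle. The only point needing care is stating the hypothesis $\Norm{\bzeta-\bmu}\le\sigma$, inherited from Lemma~\ref{lem:rough-centering}, and noting that $\Y$ is integrable (its $p$-th moment is finite with $p>1$), so that the expectations above are well defined.
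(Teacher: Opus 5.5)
Your proof is correct and follows the paper's argument. You write $\Y-\Z_B=\Y\cdot\mathbf 1\{\Norm{\Y}>B\}$, move the norm inside the expectation, use $\Norm{\Y}\le \Norm{\Y}^p/B^{p-1}$ on the tail event, and finish with Lemma~\ref{lem:rough-centering}; your explicit statement of the hypothesis $\Norm{\bzeta-\bmu}\le\sigma$, with $\bzeta$ held fixed, makes precise what the paper leaves implicit.
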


\begin{proof}
Since
\[
    \Y-\Z_B=\Y\cdot \mathbf{1}\{\Norm{\Y}>B\},
\]
we have
\[
    \Norm{\EBP{\Y}-\EBP{\Z_B}}
    \le
    \EBP{\Norm{\Y}\cdot \mathbf{1}\{\Norm{\Y}>B\}}.
\]
On the event $\{\Norm{\Y}>B\}$,
\[
    \Norm{\Y}
    =
    \frac{\Norm{\Y}^p}{\Norm{\Y}^{p-1}}
    \le
    \frac{\Norm{\Y}^p}{B^{p-1}}.
\]
Therefore,
\[
    \EBP{\Norm{\Y}\cdot \mathbf{1}\{\Norm{\Y}>B\}}
    \le
    \frac{\EBP{\Norm{\Y}^p}}{B^{p-1}}.
\]
The final bound follows from Lemma~\ref{lem:rough-centering}.
\end{proof}

\begin{lemma}[Second-moment bound]
\label{lem:second-moment}
For every $B>0$,
\[
    \Tr\bigl(\Sigma_{\Z_B}\bigr)
    \le
    \EBP{\Norm{\Z_B}^2}
    \le
    2^p\sigma^p B^{2-p}.
\]
\end{lemma}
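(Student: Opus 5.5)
The plan is to bound the trace of the covariance by the raw second moment, and then to control the truncated second moment pointwise using the truncation. This uses the same $p$-th moment bound for $\Y$ (Lemma~\ref{lem:rough-centering}) that appears in Lemma~\ref{lem:bias-bound}, under the same standing assumption $\Norm{\bzeta-\bmu}\le\sigma$.

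\textbf{Step 1 (trace versus second moment).} We write $\Sigma_{\Z_B}=\EBP{\Z_B\Z_B^\top}-\EBP{\Z_B}\EBP{\Z_B}^\top$. Taking traces gives
\[
\Tr(\Sigma_{\Z_B})=\EBP{\Norm{\Z_B}^2}-\Norm{\EBP{\Z_B}}^2\le \EBP{\Norm{\Z_B}^2}.
\]
This settles the first inequality. Note that $\Z_B$ is bounded, so all of these moments are finite.

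\textbf{Step 2 (pointwise interpolation).} On the event $\{\Norm{\Y}\le B\}$ we have $\Norm{\Z_B}=\Norm{\Y}\le B$. Since $2-p\ge 0$, it follows that
\[
\Norm{\Z_B}^2=\Norm{\Y}^{p}\Norm{\Y}^{2-p}\le B^{2-p}\Norm{\Y}^p .
\]
Off this event $\Z_B=0$. Hence, pointwise,
\[
\Norm{\Z_B}^2\le B^{2-p}\Norm{\Y}^p\,\mathbf 1\{\Norm{\Y}\le B\}\le B^{2-p}\Norm{\Y}^p.
\]
The case $p=2$ is included trivially, since then $B^{0}=1$.

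\textbf{Step 3 (take expectations).} Taking expectations and applying Lemma~\ref{lem:rough-centering} gives
\[
\EBP{\Norm{\Z_B}^2}\le B^{2-p}\,\EBP{\Norm{\Y}^p}\le 2^p\sigma^pB^{2-p}.
\]

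None of these steps is a real obstacle. The only point needing care is that the argument depends on the centering guarantee $\Norm{\bzeta-\bmu}\le\sigma$ that underlies Lemma~\ref{lem:rough-centering}. In the algorithm this guarantee holds with high probability by the choice of $n_{\mathrm c}$ in Theorem~\ref{thm :multivariate-mom-error-bound}. The lemma should therefore be read as conditional on that event, with $\bzeta$ fixed and independent of the fresh samples used to form $\Z_B$.
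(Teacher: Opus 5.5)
Your proof is correct and follows the paper's argument essentially verbatim: the same pointwise bound $\Norm{\Z_B}^2\le B^{2-p}\Norm{\Y}^p$ on the truncation event, the expectation step via Lemma~\ref{lem:rough-centering}, and $\Tr(\Sigma_{\Z_B})=\EBP{\Norm{\Z_B}^2}-\Norm{\EBP{\Z_B}}^2$. Your remark that the bound is conditional on the centering event $\Norm{\bzeta-\bmu}\le\sigma$, with $\bzeta$ independent of the samples forming $\Z_B$, also matches how the paper invokes the lemma in the proof of Theorem~\ref{thm:QHTME-for prob 1}.
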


\begin{proof}
By definition,
\[
    \Norm{\Z_B}^2=\Norm{\Y}^2\cdot \mathbf{1}\{\Norm{\Y}\le B\}.
\]
On the event $\{\Norm{\Y}\le B\}$,
\[
    \Norm{\Y}^2
    =
    \Norm{\Y}^p\Norm{\Y}^{2-p}
    \le
    B^{2-p}\Norm{\Y}^p.
\]
Thus
\[
    \Norm{\Z_B}^2\le B^{2-p}\Norm{\Y}^p.
\]
Taking expectation and using Lemma~\ref{lem:rough-centering} proves the right hand inequality. Moreover,
\[
    \Tr(\Sigma_{\Z_B})
    =
    \EBP{\Norm{\Z_B-\EBP{\Z_B}}^2}
    =
    \EBP{\Norm{\Z_B}^2}-\Norm{\EBP{\Z_B}}^2
    \le
    \EBP{\Norm{\Z_B}^2}.
\]
\end{proof}

We now prove Theorem \ref{thm:QHTME-for prob 1}.
\begin{proof}
We first consider the success probability of Algorithm \ref{alg: Quantum Heavy-Tailed Mean Estimator (QHTME)}, which is affected by two subroutines $\texttt{CHTME}$ and $\texttt{QEstimator}$, each of which may fail with probability $\delta/3$. In fact, by the union bound,  the success probability of Algorithm \ref{alg: Quantum Heavy-Tailed Mean Estimator (QHTME)} is at least $1-2\delta/3\geq 1-\delta$. Our subsequent analysis assumes all the subroutines in Algorithm \ref{alg: Quantum Heavy-Tailed Mean Estimator (QHTME)} are successfully implemented.

We obtain a rough estimate $\bzeta$ of the mean $\bmu$ via the classical heavy-tailed mean estimator \texttt{CHTME}. By Theorem \ref{thm :multivariate-mom-error-bound} and the choice of $$n_{c}:=\left\lceil
2\left\lceil 24\ln\left(\frac{6}{\delta}\right)\right\rceil
3^{\frac{p}{p-1}}
(8C_p)^{\frac1{p-1}}
\right\rceil$$ in Algorithm \ref{alg: Quantum Heavy-Tailed Mean Estimator (QHTME)}, we know that
\begin{equation}\label{equ: rough estimate via CHTME}
    \Norm{\bzeta-\bmu}\leq \sigma.
\end{equation}
Note that when calling the classical mean estimator \texttt{CHTME}, we need $n_{c}$ classical samples. In fact, these $n_{c}$ classical samples can be obtained via querying the quantum sampling oracle $O_{X}$ $n_{c}$ times and performing the measurements in the computational basis.

Algorithm~\ref{alg: Quantum Heavy-Tailed Mean Estimator (QHTME)} sets
\[
    B:=\left(\frac{3\cdot 2^p\sigma^p}{\epsilon}\right)^{\frac{1}{p-1}}
\]
as the truncation threshold.
By Lemma~\ref{lem:bias-bound} and \eqref{equ: rough estimate via CHTME}, these give 
\begin{equation}\label{equ: tail error}
    \Norm{\EBP{\Y}-\EBP{\Z_B}}
    \le
    \frac{2^p\sigma^p}{B^{p-1}}
    =
    \frac{\epsilon}{3}.
\end{equation}
By Lemma~\ref{lem:second-moment} and \eqref{equ: rough estimate via CHTME},
\[
    \Tr(\Sigma_{\Z_B})
    \le
    2^p\sigma^p B^{2-p}.
\]
For convenience, define
\[
   L:= 2^p\sigma^p B^{2-p}
\]
Then
\begin{equation}
\label{eq:VB-bound}
    \Tr(\Sigma_{\Z_{B}})\leq L.
\end{equation}

Define
\[
    \Lambda:=\log\left(\frac{3d}{\delta}\right)
\]
and recall that Algorithm~\ref{alg: Quantum Heavy-Tailed Mean Estimator (QHTME)}
sets
\[
    n_q
    :=
    \left\lceil
    \max\left\{
        \Lambda,
        \frac{3\sqrt{dL\Lambda}}{2\epsilon}
    \right\}
    \right\rceil.
\]
Since
\[
    n_q\geq \Lambda
    =
    \log\left(\frac{d}{\delta/3}\right),
\]
the condition on $n$ in Lemma~\ref{thm:qbvme} is satisfied. Therefore,
applying Lemma~\ref{thm:qbvme} to $\Z_B$ with parameter $n=n_q$ and
failure probability $\delta/3$, we obtain, with probability at least
$1-\delta/3$,
\[
    \Norm{
        \widetilde{\bmu}_B-\EBP{\Z_B}
    }_{\infty}
    \leq
    \frac{
        \sqrt{
            \Tr(\Sigma_{\Z_B})
            \log(3d/\delta)
        }
    }{n_q}.
\]
Using $\Norm{\bm{v}}\leq\sqrt{d}\Norm{\bm{v}}_\infty$ and
\eqref{eq:VB-bound}, it follows that
\begin{align*}
    \Norm{
        \widetilde{\bmu}_B-\EBP{\Z_B}
    }
    &\leq
    \sqrt{d}\,
    \Norm{
        \widetilde{\bmu}_B-\EBP{\Z_B}
    }_\infty \\
    &\leq
    \frac{
        \sqrt{
            dL\log(3d/\delta)
        }
    }{n_q}.
\end{align*}
Moreover, by the definition of $n_q$,
\[
    n_q
    \geq
    \frac{
        3\sqrt{
            dL\log(3d/\delta)
        }
    }{2\epsilon}.
\]
Consequently,
\begin{equation}
\label{equ: mean estimation error}
    \Norm{
        \widetilde{\bmu}_B-\EBP{\Z_B}
    }
    \leq
    \frac{2\epsilon}{3}.
\end{equation}

Note that one query to the quantum sampling oracle $O_{\Z_B}$ for the truncated random variable $\Z_B$ can be simulated using one query to the quantum sampling oracle $O_\X$ for the original random variable $\X$. Indeed, after applying $O_\X$, we obtain a coherent sample $\x$ from $\X$. Since both $\bzeta$ and $B$ are classical parameters, we can reversibly compute
\[
    \y=\x-\bzeta,
\]
evaluate the comparison $\|\y\|\le B$, and write the truncated value
\[
\z_B=
\begin{cases}
\y, & \|\y\|\le B,\\
0, & \|\y\|>B
\end{cases}
\]
into a fresh output register.
Therefore, $\texttt{QEstimator}$ step requires $n_{q}$ queries to the quantum sampling oracle $O_{\X}$.


The algorithm outputs
\[
    \widetilde\bmu:=\bzeta+\widetilde \bmu_B.
\]
Since
\[
    \bmu=\bzeta+\EBP{\Y},
\]
we have
\[
    \widetilde\bmu-\bmu
    =
    \widetilde \bmu_B-\EBP{\Y}
    =
    \bigl(\widetilde \bmu_B-\EBP{\Z_B}\bigr)+\bigl(\EBP{\Z_B}-\EBP{\Y}\bigr).
\]
Thus, by \eqref{equ: tail error} and \eqref{equ: mean estimation error},
\[
    \Norm{\widetilde\bmu-\bmu}
    \le
    \Norm{\widetilde \bmu_B-\EBP{\Z_B}}
    +
    \Norm{\EBP{\Z_B}-\EBP{\Y}}
    \le
    \frac{2\epsilon}{3}+\frac{\epsilon}{3}
    =
    \epsilon.
\]
This proves the accuracy guarantee.

By the definition of $n_q$,
\begin{align*}
    n_q
    &\leq
    1
    +
    \log\left(\frac{3d}{\delta}\right)
    +
    \frac{
        3\sqrt{
            dL\log(3d/\delta)
        }
    }{2\epsilon}.
\end{align*}
The call to $\texttt{CHTME}$ uses $n_c=\mathcal{O}(\log(1/\delta))$
queries, while Lemma~\ref{thm:qbvme} shows that the call to
$\texttt{QEstimator}_d$ uses $\widetilde{\mathcal{O}}(n_q)$ queries.
Therefore, the total query complexity is
\begin{align*}
    n_c+\widetilde{\mathcal{O}}(n_q)
    &=
    \widetilde{\mathcal{O}}\left(
        \log\left(\frac{1}{\delta}\right)
        +
        \log\left(\frac{3d}{\delta}\right)
        +
        \frac{
            \sqrt{
                dL\log(3d/\delta)
            }
        }{\epsilon}
    \right) \\
    &=
    \widetilde{\mathcal{O}}\left(
        1+\frac{\sqrt{dL}}{\epsilon}
    \right).
\end{align*}

Since
\[
    L
    =
    2^p\sigma^p B^{2-p}
    \qquad\text{and}\qquad
    B
    =
    \left(
        \frac{3\cdot 2^p\sigma^p}{\epsilon}
    \right)^{\frac{1}{p-1}},
\]
we have
\[
    \frac{\sqrt{dL}}{\epsilon}
    =
    \Theta\left(
        \sqrt{d}
        \left(
            \frac{\sigma}{\epsilon}
        \right)^{\frac{p}{2(p-1)}}
    \right).
\]
Since $d\geq1$ and $\epsilon\leq\sigma$, the additive constant can be
absorbed. Hence, the total query complexity is
\[
    \widetilde{\mathcal{O}}\left(
        \sqrt{d}
        \left(
            \frac{\sigma}{\epsilon}
        \right)^{\frac{p}{2(p-1)}}
    \right).
\]
This completes the proof of Theorem~\ref{thm:QHTME-for prob 1}.
\end{proof}

\subsection{A Variant of Quantum Heavy-Tailed Mean Estimator \texorpdfstring{$\texttt{QHTME}^{+}$}{QHTME+}}\label{appendix: QHTME-plus}

We first derive a variant of the quantum mean estimator \texttt{QHTME}, denoted by \texttt{QHTME$^{+}$}, which outputs an estimate whose estimation error admits a bounded $p$-th raw moment with probability $1$. The key technical ingredient is a new theoretical insight formalized in Lemma \ref{lemma: general pre lemma for unbiased estimator}.~This~lemma shows that, by using a carefully designed acceptance-rejection rule, two random variables $\X_2$ and $\X_3$ with bounded $p$-th central moments can be used to control the $p$-th raw moment of the estimation error of a third random variable $\X_1$, provided that $\X_1$ is sufficiently close to the common expectation of $\X_2$ and $\X_3$. A related idea appeared in \cite{sidford2023quantum}, where an analogous result was established for the special case $p=2$.
In contrast, we generalize their result to all $p\in(1,2]$ in the following Lemma~\ref{lemma: general pre lemma for unbiased estimator}.



\begin{lemma}\label{lemma: general pre lemma for unbiased estimator}
Let $\X_1, \X_2, \X_3\in \mathbb{R}^{d}$ be independent~random variables such that $\bmu:=\EBP{\X_2}= \EBP{\X_3}$, $\EBP{\|\X_{2} - \bmu\|^{p}}  \leq \sigma^{p}$, and $\EBP{\|\X_{3}- \bmu\|^{p}}\leq \sigma^{p}$, for some $p\in (1,2]$ and $\sigma\geq 0$. Assume further that $\|\X_{1}-\bmu\|\leq \epsilon$ with probability at least $1-\delta$, for some $\epsilon>0$ and $\delta\in (0,1)$. Define a random variable $\Z$ by $\Z=\X_1$ if $\|\X_1- \X_2\|\leq D$ and $\Z=\X_{3}$ otherwise, where $D>\epsilon$. Then,
\begin{equation*}
    \EBP{\|\Z-\bmu\|^{p}}\leq \epsilon^{p}+\frac{\sigma^{p+1}}{D-\epsilon} + \delta(2^{p-1}D^{p}+(2^{p-1}+1)\sigma^{p}).
\end{equation*}
Therefore, if $D:=\epsilon+\frac{\sigma^{p+1}}{\epsilon^{p}}$, $\epsilon<\sigma$, and $\delta\leq(\epsilon/\sigma)^{(p^{2}+p)}$, then $\EBP{\|\Z-\bmu\|^p}\leq 13\cdot\epsilon^{p}$.
\end{lemma}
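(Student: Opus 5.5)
Let $G:=\{\|\X_1-\bmu\|\le\epsilon\}$, so $\Pr[G^c]\le\delta$, and let $A:=\{\|\X_1-\X_2\|\le D\}$ be the acceptance event. On $A$ we have $\Z=\X_1$, and on $A^c$ we have $\Z=\X_3$. We split
\[
\EBP{\|\Z-\bmu\|^p}=\EBP{\|\Z-\bmu\|^p\mathbf 1_{G}}+\EBP{\|\Z-\bmu\|^p\mathbf 1_{G^c}}
\]
and bound each piece separately.

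\emph{Good event $G$.} On $G\cap A$ we get $\|\Z-\bmu\|^p=\|\X_1-\bmu\|^p\le\epsilon^p$. On $G\cap A^c$ we have $\|\X_2-\bmu\|\ge\|\X_1-\X_2\|-\|\X_1-\bmu\|>D-\epsilon$, and there $\Z=\X_3$. Since $\X_3$ is independent of $(\X_1,\X_2)$,
\[
\EBP{\|\X_3-\bmu\|^p\mathbf 1_{G\cap A^c}}\le\sigma^p\Pr[\|\X_2-\bmu\|>D-\epsilon]\le\sigma^p\frac{\EBP{\|\X_2-\bmu\|}}{D-\epsilon}\le\frac{\sigma^{p+1}}{D-\epsilon}.
\]
Here we use Markov's inequality with the first moment and Jensen's inequality $\EBP{\|\X_2-\bmu\|}\le(\EBP{\|\X_2-\bmu\|^p})^{1/p}\le\sigma$. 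Dropping the indicator of $A$ on the first term, the good-event contribution is at most $\epsilon^p+\sigma^{p+1}/(D-\epsilon)$.

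\emph{Bad event $G^c$.} Here $\X_1$ is uncontrolled, so we bound $\|\Z-\bmu\|^p$ pointwise by a quantity that does not involve $\X_1$. On $A$, $\|\X_1-\bmu\|\le\|\X_1-\X_2\|+\|\X_2-\bmu\|\le D+\|\X_2-\bmu\|$. Using $(a+b)^p\le2^{p-1}(a^p+b^p)$, this gives $\|\Z-\bmu\|^p\le 2^{p-1}D^p+2^{p-1}\|\X_2-\bmu\|^p$. On $A^c$, $\|\Z-\bmu\|^p=\|\X_3-\bmu\|^p$. Hence on $G^c$,
\[
\|\Z-\bmu\|^p\le 2^{p-1}D^p+2^{p-1}\|\X_2-\bmu\|^p+\|\X_3-\bmu\|^p.
\]
The event $G^c$ depends only on $\X_1$, which is independent of $\X_2$ and $\X_3$. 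Taking expectations therefore factorizes and gives at most $\delta\bigl(2^{p-1}D^p+(2^{p-1}+1)\sigma^p\bigr)$. Adding the two pieces yields the stated inequality.

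\emph{Parameter substitution.} Set $D=\epsilon+\sigma^{p+1}/\epsilon^p$. Then $\sigma^{p+1}/(D-\epsilon)=\epsilon^p$. Since $\epsilon<\sigma$, we have $D\le 2\sigma^{p+1}/\epsilon^p$, so
\[
D^p\le 2^p\sigma^{p(p+1)}/\epsilon^{p^2}.
\]
With $\delta\le(\epsilon/\sigma)^{p^2+p}$, this gives $\delta D^p\le 2^p\epsilon^{p}\le4\epsilon^p$, and hence $\delta\,2^{p-1}D^p\le 8\epsilon^p$. Also $\delta\sigma^p\le\epsilon^{p^2+p}\sigma^{-p^2}\le\epsilon^p$, so $\delta(2^{p-1}+1)\sigma^p\le3\epsilon^p$. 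The total is at most $(1+1+8+3)\epsilon^p=13\epsilon^p$.

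The main point needing care is the bad-event bound. We must not introduce any dependence on the tail of $\X_1$, which is achieved by routing through $\X_2$ via the acceptance test, and we must use independence of $\X_1$ from $(\X_2,\X_3)$ for the factorization. The constant bookkeeping at the end is routine; it only requires checking $2^{p-1}\cdot 2^p\le 8$ for $p\le2$.
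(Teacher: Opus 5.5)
Your proof is correct and follows essentially the same argument as the paper: you split on whether $\|\X_1-\bmu\|\le\epsilon$, bound the rejection probability on the good event by Markov plus H\"older applied to $\X_2$, and on the bad event bound $\|\Z-\bmu\|^p$ pointwise through $\X_2$ and then factorize by independence. The only differences are cosmetic: you use indicators rather than conditional expectations, and you bound $D\le 2\sigma^{p+1}/\epsilon^p$ directly rather than splitting $D^p$ by convexity, which reaches the same constant $13$.
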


\begin{proof}
Our proof idea is inspired by \citet{sidford2023quantum} but is more general than their result. 
Let $S$ denote the event that $\|\X_{1} - \bmu\|\leq \epsilon$ and let $T$ denote the event that $\|\X_{1} - \X_{2}\|\leq D$. Then, by the law of total expectation, we have
\begin{align*}
        \EBP{\|\Z-\bmu\|^{p}} &= \Pr(S)\EBP{\|\Z-\bmu\|^{p} \mid S}+ \Pr(\overline{S}) \EBP{\|\Z-\bmu\|^{p} \mid \overline{S}}\\
        &\leq \EBP{\|\Z-\bmu\|^{p} \mid S}+ \delta\EBP{\|\Z-\bmu\|^{p} \mid \overline{S}},
\end{align*}
where we used the fact that $1-\delta\leq\Pr(S)\leq 1$ and $0\leq \Pr(\overline{S})=1-\Pr(S)\leq \delta$. We next prove the lemma by further upper bounding both $\EBP{\|\Z-\bmu\|^{p} \mid S}$ and $\EBP{\|\Z-\bmu\|^{p} \mid \overline{S}}$.

First, we upper bound $\EBP{\|\Z-\bmu\|^{p} \mid S}$. By the law of total expectation and the definition of $\Z$, we have that
\begin{align*}
    \EBP{\|\Z-\bmu\|^{p} \mid S} &= \Pr(T \mid S) \EBP{\|\Z-\bmu\|^{p} \mid T\cap S} + \Pr(\overline{T}\mid S) \EBP{\|\Z-\bmu\|^{p} \mid \overline{T}\cap S}\\
    &\leq \Pr(T\mid S) \epsilon^{p}+ \Pr(\overline{T} \mid S) \sigma^{p}.
\end{align*}
    The second line follows from the facts that
    \begin{itemize}
        \item if $T$ and $S$ both hold, then $\Z=\X_{1}$ and $\|\X_{1}-\bmu\|\leq \epsilon$, indicating that $\EBP{\|\Z-\bmu\|^{p} \mid T\cap S} \leq \epsilon^{p}$;
        \item if $T$ does not hold but $S$ holds, then $\Z=\X_{3}$ and $\X_{3}$ is independent of $S$ and $T$, indicating that $\EBP{\|\Z-\bmu\|^{p} \mid \overline{T}\cap S} =\EBP{\|\X_{3}-\bmu\|^{p}}\leq \sigma^{p}$.
    \end{itemize}
    Now, we proceed to upper bound $\Pr(\overline{T} \mid S)$. Note that, if $\|\X_{2} - \bmu\|\leq D-\epsilon$, then when $S$ holds ($\|\X_{1} - \bmu\|\leq \epsilon$), we have that
    \begin{equation*}
        \|\X_{1} - \X_{2}\| \leq \|\X_{1} - \bmu\| + \|\X_{2} - \bmu\| \leq D,
    \end{equation*}
    by the triangle inequality. Therefore, by Markov inequality to $\X_{2}$ and using that $\X_{2}$ is independent of $S$, we have that
    \begin{equation*}
        \Pr(T\mid S)\geq \Pr\left(\myNorm{\X_{2} - \bmu}\leq D-\epsilon \mid S\right)=\Pr\left(\myNorm{\X_{2} - \bmu}\leq D-\epsilon \right)\geq 1- \frac{\EBP{\|\X_{2}-\bmu\|}}{D-\epsilon}.
    \end{equation*}
    Note that $\EBP{\|\X_{2} - \bmu\|}\leq \left(\EBP{\myNorm{\X_{2}-\bmu}^{p}}\right)^{\frac{1}{p}}\leq \sigma$ by H\"older's inequality and the upper bound of $\EBP{\myNorm{\X_{2}-\bmu}^{p}}$. This implies that
    \begin{equation*}
        \EBP{\|\Z-\bmu\|^{p} \mid S}\leq \epsilon^{p} +\frac{\sigma^{p+1}}{D-\epsilon}.
    \end{equation*}
    Next, we upper bound $\EBP{\|\Z - \bmu\|^{p} \mid \overline{S}}$. Note that if $T$ holds ($\|\X_{1} - \X_{2}\|\leq  D$), then 
    \begin{equation*}
        \|\Z-\bmu\|^{p}\leq 2^{p-1}(\|\Z-\X_{2}\|^{p}+ \|\X_{2} - \bmu\|^{p})=2^{p-1}(\|\X_{1} -\X_{2}\|^{p}+ \|\X_{2} - \bmu\|^{p})\leq 2^{p-1}(D^{p}+ \|\X_{2} - \bmu\|^{p}).
    \end{equation*}
    If $T$ does not hold, then $\Z= \X_{3}$. Therefore, in either case, it holds that
    \begin{equation*}
        \EBP{\|\Z - \bmu\|^{p} \mid \overline{S}}\leq \EBP{2^{p-1} D^{p}+ 2^{p-1} \|\X_{2} - \bmu\|^{p} + \|\X_{3}-\bmu\|^{p}}=2^{p-1} D^{p} + (2^{p - 1} +1)\sigma^{p}.
    \end{equation*}
    Therefore, we have
    \begin{align*}
        \EBP{\myNorm{\Z-\bmu}^{p}}&\leq \EBP{\myNorm{\Z-\bmu}^{p}} + \delta \EBP{\myNorm{\Z-\bmu}^{p}\mid \overline{S}}\\
        &\leq \epsilon^{p} + \frac{\sigma^{p+1}}{D-\epsilon} + \delta (2^{p-1} D^{p} + (2^{p - 1} +1)\sigma^{p}).
    \end{align*}
    Suppose $\epsilon<\sigma$, by choosing $D=\epsilon+\frac{\sigma^{p+1}}{\epsilon^{p}}$ and $\delta = (\epsilon/\sigma)^{p^{2}+p}$, we can compute that
    \begin{align*}
          \EBP{\myNorm{\Z-\bmu}^{p}}
        &\leq \epsilon^{p} + \frac{\sigma^{p+1}}{D-\epsilon} + \delta (2^{p-1} D^{p} + (2^{p - 1} +1)\sigma^{p})\\
        &\leq \epsilon^{p}+\epsilon^{p}+\delta(2^{p-1} 2^{p-1}(\epsilon^{p}+ \frac{\sigma^{p(p+1)}}{\epsilon^{p^{2}}})+(2^{p-1}+1)\sigma^{p})\\
        &\leq \epsilon^{p}+\epsilon^{p}+ 2^{2(p-1)}\frac{\epsilon^{p^{2}+2p}}{\sigma^{p(p+1)}}+2^{2(p-1)}\epsilon^{p}+(2^{p-1}+1)\frac{\epsilon^{p^{2}+p}}{\sigma^{p^{2}}}\\
        &\leq (3+2^{2(p-1)}+2^{2(p-1)}+2^{p-1}) \epsilon^{p}\\
        &\leq 13 \epsilon^{p}.
    \end{align*}
    The second line uses the definition of $D=\epsilon+\frac{\sigma^{p+1}}{\epsilon^{p}}$ and Jensen's inequality. The third line uses the definition of $\delta = \frac{\epsilon^{p^{2}+p}}{\sigma^{(p+1)p}}$. The fourth line uses the fact that $\epsilon<\sigma$. The fifth line uses the fact that $p\in (1,2]$.
\end{proof}

\begin{algorithm}[htpb]
\caption{$\texttt{QHTME}^{+}(\X, \sigma, p, d, \epsilon)$}
\label{alg:QHTME_plus}
\begin{algorithmic}[1]
\STATE \textbf{Input}: random variable $\X$, heavy-tailed parameter $\sigma$, tail index $p$, problem dimension $d$, error $\epsilon\in (0,\sigma)$.
\STATE  $\delta \gets (\epsilon/(13\sigma))^{p(p+1)}$,
$D \gets \epsilon/13 + 13^{p}\sigma^{p+1}/\epsilon^p$.
\STATE Set $\X_1 \gets \texttt{QHTME}(\X, \sigma,p,d, \epsilon/13,\delta)$
\STATE Randomly draw one classical sample $\X_2$ of $\X$
\IF{$\|\X_1 - \X_2\| \le D$}
    \STATE $\tilde{\bmu}\gets \X_1$
\ELSE
    \STATE Randomly draw one classical sample $\X_3$ of $\X$
     \STATE $\tilde{\bmu}\gets \X_3$
\ENDIF
\STATE \textbf{Output}: $\tilde{\bmu}$ 
\end{algorithmic}
\end{algorithm}

We are ready to present a refined quantum mean estimator, denoted $\texttt{QHTME}^{+}$, given in Algorithm \ref{alg:QHTME_plus}. 
This algorithm outputs an estimate for a heavy-tailed random variable whose estimation error has a bounded $p$-th raw moment with probability $1$. 
In $\texttt{QHTME}^{+}$, $\X_1$ is generated by the quantum mean estimator $\texttt{QHTME}$ (Algorithm~\ref{alg: Quantum Heavy-Tailed Mean Estimator (QHTME)}), while $\X_2$ and $\X_3$ are obtained by querying the quantum sampling oracle $O_{\X}$ and measuring the resulting quantum states in the computational basis. 
Since generating $\X_2$ and $\X_{3}$ requires only two additional queries to $O_{\X}$, the total query complexity of $\texttt{QHTME}^{+}$ differs from that of $\texttt{QHTME}$ by at most a constant factor. We summarize the performance guarantee of $\texttt{QHTME}^{+}$ in Lemma \ref{lemma: quantum query complexity of QHTME-plus}.


\begin{lemma}
    \label{lemma: quantum query complexity of QHTME-plus}
    Let $\X$ be a $d$-dimensional random variable with mean $\bmu:=\EBP{\X}$  satisfying $\EBP{\|\X-\bmu\|^{p}}\leq \sigma^{p}$, for some $p\in (1,2]$ and $\sigma\geq 0$. Then, for any $\epsilon\in (0, \sigma)$, Algorithm~\ref{alg:QHTME_plus}, {\rm\texttt{QHTME$^+$}}$(\X, \sigma, p, d, \epsilon)$, uses 
    \begin{equation*}
       \tilde{\OM}\left(\frac{d^{\frac{1}{2}} \sigma^{\frac{p}{2(p-1)}}}{\epsilon^{\frac{p}{2(p-1)}}}\right)
    \end{equation*}
    queries to the quantum sampling oracle $O_{\X}$ and outputs an estimate $\tilde{\bmu}$ of $\bmu$ satisfying $ \EBP{\|\tilde{\bmu} - \bmu\|^{p}} \leq \epsilon^{p}.$

\end{lemma}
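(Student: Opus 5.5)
The plan is to apply Lemma~\ref{lemma: general pre lemma for unbiased estimator} to the three random vectors produced inside Algorithm~\ref{alg:QHTME_plus}, with accuracy parameter $\epsilon' := \epsilon/13$ in place of $\epsilon$.

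Set $\X_1$ to be the output of $\texttt{QHTME}(\X,\sigma,p,d,\epsilon/13,\delta)$, and let $\X_2,\X_3$ be the two fresh classical samples of $\X$. These three are independent, since the samples are obtained from separate oracle calls followed by measurement. We also have $\EBP{\X_2}=\EBP{\X_3}=\bmu$ and both have $p$-th central moment at most $\sigma^p$. Since $\epsilon/13<\sigma$, Theorem~\ref{thm:QHTME-for prob 1} gives $\|\X_1-\bmu\|\le \epsilon'$ with probability at least $1-\delta$. The output $\tilde{\bmu}$ is exactly the variable $\Z$ of the lemma with threshold $D$.

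Next we check the parameter conditions of the lemma with $\epsilon'$ in place of $\epsilon$. Its second part needs $D=\epsilon'+\sigma^{p+1}/\epsilon'^p = \epsilon/13+13^p\sigma^{p+1}/\epsilon^p$, which matches the choice in the algorithm. It also needs $\delta\le (\epsilon'/\sigma)^{p(p+1)}=(\epsilon/(13\sigma))^{p(p+1)}$, which holds with equality. The conclusion is then $\EBP{\|\tilde{\bmu}-\bmu\|^p}\le 13\,\epsilon'^p = 13^{1-p}\epsilon^p\le \epsilon^p$, using $p>1$. We should also confirm that $D>\epsilon'$; this is immediate.

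For the query count, $\X_2$ and possibly $\X_3$ cost at most two oracle queries. The call to \texttt{QHTME} costs $\tilde{\OM}(\sqrt d(13\sigma/\epsilon)^{\frac{p}{2(p-1)}})$ by Theorem~\ref{thm:QHTME-for prob 1}, and the factor $13^{\frac{p}{2(p-1)}}$ depends only on $p$. The one subtle point is the dependence on $\delta$. The cost of \texttt{QHTME} grows only polylogarithmically in $1/\delta$, through $n_c=\OM(\log(1/\delta))$ and through $\log(3d/\delta)$ in $n_q$. Here $\log(1/\delta)=p(p+1)\log(13\sigma/\epsilon)=\OM(\log(\sigma/\epsilon))$, which the $\tilde{\OM}$ notation absorbs.

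I expect the main obstacle to be confirming that the lemma's hypotheses hold exactly as stated. The constants in $D$ and $\delta$ must align with the rescaled $\epsilon'$. It must also be justified that the inner guarantee $\|\X_1-\bmu\|\le\epsilon'$ is a probability-$(1-\delta)$ event independent of $\X_2$ and $\X_3$, which holds because those samples are drawn afresh after $\X_1$ is produced.
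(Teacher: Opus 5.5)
Your proposal is correct and follows the paper's proof essentially verbatim. You invoke Lemma~\ref{lemma: general pre lemma for unbiased estimator} with accuracy $\epsilon/13$, for which the algorithm's $D$ and $\delta$ match exactly, conclude $\EBP{\|\tilde{\bmu}-\bmu\|^p}\le 13(\epsilon/13)^p\le\epsilon^p$, and bound the cost by one call to \texttt{QHTME} plus at most two classical samples; your explicit check that $\log(1/\delta)=p(p+1)\log(13\sigma/\epsilon)$ is absorbed by $\tilde{\OM}$ spells out a detail the paper leaves implicit.
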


\begin{proof}
    Note that the random variable $\X$ satisfies $\EBP{\|\X-\bmu\|^{p}}\leq \sigma^{p}$.
    By using  Lemma \ref{lemma: general pre lemma for unbiased estimator}, and replacing $\epsilon, \sigma$ with $ \frac{\epsilon}{13}$ and $\sigma$, respectively, the output $\tilde{\bmu}$ of $\texttt{QHTME}^{+}$ satisfies
    \begin{equation*}
        \EBP{\|\tilde{\bmu}-\bmu\|^{p}}\leq \epsilon^{p},
    \end{equation*}
    where $\epsilon\in(0,\sigma)$.
    
    As for the query complexity, Algorithm \ref{alg:QHTME_plus} only calls $\texttt{QHTME}(\X,\sigma,p,d, \epsilon/13, \delta)$ once and one can prepare the classical samples $\X_2$ and $\X_{3}$ by querying the quantum sampling oracle $O_{\X}$ once for each and performing the measurements on the computational basis. Since we have that $\EBP{\|\X-\bmu\|^{p}}\leq \sigma^{p}$, by Theorem \ref{thm:QHTME-for prob 1}, the overall query complexity equals 
    \begin{equation*}
\tilde{\mathcal{O}}\left(\frac{d^{\frac{1}{2}} \sigma^{\frac{p}{2(p-1)}}}{\epsilon^{\frac{p}{2(p-1)}}}\right).
    \end{equation*}
\end{proof}

\subsection{Proof of Theorem \ref{theorem: query complexity of QUHTME}}\label{appendix: QUHTME}

\begin{proof}
Our proof mainly follows \citet{sidford2023quantum}. Observe that the output $\tilde{\bmu}$ of Algorithm \ref{alg:QUHTME} can be expressed as
    \begin{equation*}
        \tilde{\bmu} = \tilde{\bmu}_0 + 2^{J} (\tilde{\bmu}_{J} -\tilde{\bmu}_{J-1}), \quad J\sim \operatorname{Geom}\left(\frac{1}{2}\right)\in \mathbb{N}.
    \end{equation*}
    Then, we have an unbiased output satisfying \begin{equation*}
        \EBP{\tilde{\bmu}} = \EBP{\tilde{\bmu}_{0}}+\sum_{j=1}^{\infty}\Pr(J=j) 2^{j} (\EBP{\tilde{\bmu}_{j}}- \EBP{\tilde{\bmu}_{j-1}})=\EBP{\tilde{\bmu}_{\infty}}=\bmu,
    \end{equation*}
    where we used the fact that $\Pr(J=j)=2^{-j}$ in the second equality.  By the triangle inequality and Jensen's inequality ($(a+b)^{p}\leq 2^{p-1}(a^{p}+b^{p}), \forall p\in(1,2]$), we have
    \begin{equation*}
        \EBP{\myNorm{\tilde{\bmu}-\bmu}^{p}}\leq 2^{p-1} \EBP{\myNorm{\tilde{\bmu}-\tilde{\bmu}_{0}}^{p}}+2^{p-1} \EBP{\myNorm{\tilde{\bmu}_{0}- \bmu}^{p}},
    \end{equation*}
    where \begin{equation*}
        \EBP{\myNorm{\tilde{\bmu}-\tilde{\bmu}_{0}}^{p}}= \sum_{j=1}^{\infty} \Pr(J=j)2^{p j} \EBP{\myNorm{\tilde{\bmu}_{j} - \tilde{\bmu}_{j-1}}^{p}} = \sum_{j=1}^{\infty} 2^{(p-1)j} \EBP{\myNorm{\tilde{\bmu}_{j} - \tilde{\bmu}_{j-1}}^{p}}.
    \end{equation*}
    For each $j$, we have
    \begin{equation*}
        \EBP{\myNorm{\tilde{\bmu}_{j} - \tilde{\bmu}_{j-1}}^{p}}\leq 2^{p-1} \EBP{\myNorm{\tilde{\bmu}_{j}- \bmu}^{p}} +2^{p-1} \EBP{\myNorm{\tilde{\bmu}_{j-1}-\bmu}^{p}}.
    \end{equation*}
    By Lemma \ref{lemma: quantum query complexity of QHTME-plus} and line 4 in Algorithm \ref{alg:QUHTME}, we have
    \begin{equation*}
        \EBP{\myNorm{\tilde{\bmu}_{j}-\bmu}^{p}} \leq \frac{\epsilon^{p}}{c \cdot 2^{\frac{3(p-1)j}{2}}}, \forall j\geq 1,
    \end{equation*}
    which indicates that
    \begin{equation*}
         \EBP{\myNorm{\tilde{\bmu}_{j} - \tilde{\bmu}_{j-1}}^{p}}\leq \frac{2^{p-1}\epsilon^{p}}{c\cdot 2^{3(p-1)(j-1)/2}}+\frac{2^{p-1}\epsilon^{p}}{c\cdot 2^{3(p-1)j/2}}\leq \frac{8\epsilon^{p}}{c\cdot 2^{3(p-1)j/2}}.
    \end{equation*}
The second inequality above follows from the facts that $2^{p-1}\leq 2$ for all $p\in(1,2]$ and 
\begin{equation*}
    \frac{2^{p-1}}{c \cdot 2^{3(p-1)(j-1)/2}}\leq \frac{2}{c \cdot 2^{3(p-1)(j-1)/2}}\leq \frac{2\cdot 2^{\frac{3(p-1)}{2}}}{c\cdot 2^{3(p-1)j/2}}\leq  \frac{4\sqrt{2}}{c\cdot 2^{3(p-1)j/2}}\leq \frac{6}{c\cdot 2^{3(p-1)j/2}}.
\end{equation*} Therefore, since we let $c:= 24/(2^{\frac{p-1}{2}}-1)$ in line 3 of Algorithm \ref{alg:QUHTME} and the series $\sum_{j=1}^{\infty}\frac{1}{2^{(p-1)j/2}}$ converges to $\frac{1}{2^{(p-1)/2}-1}$ for any given $p\in (1,2]$, then it holds that
    \begin{equation*}
         \EBP{\myNorm{\tilde{\bmu}-\tilde{\bmu}_{0}}^{p}}\leq \sum_{j=1}^{\infty} 2^{(p-1)j} \frac{8\epsilon^{p}}{c\cdot 2^{3(p-1)j/2}}\leq\sum_{j=1}^{\infty}  \frac{8\epsilon^{p}}{c\cdot 2^{(p-1)j/2}}  \leq \frac{1}{3}\epsilon^{p}.
    \end{equation*}
    Since it holds that $\EBP{\myNorm{\tilde{\bmu}_{0}-\bmu}^{p}}\leq \frac{\epsilon^{p}}{c}=\frac{(2^{\frac{p-1}{2}}-1)\epsilon ^{p}}{24} \leq \frac{\epsilon^{p}}{10}$ by Lemma \ref{lemma: quantum query complexity of QHTME-plus} and line 2 in Algorithm \ref{alg:QUHTME}, then we must have
    \begin{equation*}
         \EBP{\myNorm{\tilde{\bmu}-\bmu}^{p}}\leq 2^{p-1} \EBP{\myNorm{\tilde{\bmu}-\tilde{\bmu}_{0}}^{p}}+2^{p-1} \EBP{\myNorm{\tilde{\bmu}_{0}- \bmu}^{p}}\leq \frac{2^{p-1}}{3}\epsilon^{p}+ \frac{2^{p-1}}{10}\epsilon^{p}\leq \epsilon^{p}.
    \end{equation*}
    Moreover, since it holds that, for a given $p\in (1,2]$
    \begin{equation*}
        \sum_{j=1}^{\infty} \Pr(J=j) 2^{\frac{3j}{4}}=\sum_{j=1}^{\infty}2^{-j} 2^{\frac{3j}{4}}=\sum_{j=1}^{\infty}2^{\frac{-j}{4}}=\frac{1}{2^{\frac{1}{4}}-1}
    \end{equation*}
    and 
    \begin{equation*}
         \sum_{j=1}^{\infty} \Pr(J=j) 2^{\frac{3(j-1)}{4}}=\sum_{j=1}^{\infty}2^{-j} 2^{\frac{3(j-1)}{4}}=\frac{1}{2-2^{3/4}}
    \end{equation*}
    are finite constants,  
    The expected number of queries is 
    \begin{equation*}
\tilde{\mathcal{O}}\left(\frac{d^{\frac{1}{2}} \sigma^{\frac{p}{2(p-1)}}}{\epsilon^{\frac{p}{2(p-1)}}}\right) \cdot \left(1+ \sum_{j=1}^{\infty} \Pr(J=j) \cdot \left(2^{\frac{3j}{4}}+ 2^{\frac{3(j-1)}{4}}\right)\right)=  \tilde{\mathcal{O}}\left(\frac{d^{\frac{1}{2}} \sigma^{\frac{p}{2(p-1)}}}{\epsilon^{\frac{p}{2(p-1)}}}\right).
    \end{equation*}
\end{proof}

\subsection{Lower Bounds on Mean Estimation of Heavy-Tailed Random Variables}\label{appendix: lower bound of estimating the mean}

\begin{theorem}[Bounded-error quantum query lower bound for heavy-tailed mean estimation]\label{thm: lower bounds 1 for heavy-tailed mean estimation}
Fix $p\in(1,2]$ and $\sigma>0$. Consider the problem of estimating the mean
$\bmu=\EBP{\X}$ of a d-dimensional real-valued random variable $\X$ satisfying $$\EBP{\Norm{\X-\bmu}^p} \leq \sigma^p .$$
For any estimator which, with probability at least $2/3$, outputs a mean estimate
$\tilde{\bmu}$ satisfying $$\Norm{\tilde{\bmu}-\bmu}\leq \epsilon ,$$ the query complexity is 
$$\Omega\!\left( \left(\frac{\sigma}{\epsilon}\right)^{\frac{p}{2(p-1)}} \right)$$ by using the quantum sampling oracle.
\end{theorem}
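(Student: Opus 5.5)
We prove the lower bound by reducing from unstructured search, in the form of distinguishing two rare-spike distributions, as the paper indicates. It suffices to take $d=1$: embedding the construction in the first coordinate of $\mathbb{R}^d$ preserves both the moment condition and the accuracy requirement. We use two distributions, $P_0$ and $P_1$. Under $P_0$ the random variable is the constant $\X\equiv 0$. Under $P_1$ we set $\X=a$ with probability $q$ and $\X=0$ otherwise. The means are $\bmu_0=0$ and $\bmu_1=qa$, and we choose the parameters so that $qa=3\epsilon$. An estimator with accuracy $\epsilon$ and success probability $2/3$ then distinguishes $P_0$ from $P_1$ with probability at least $2/3$. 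The variable under $P_0$ is constant, so it trivially satisfies the moment bound. Under $P_1$,
\[
\EBP{|\X-\bmu_1|^p}\le q(a-qa)^p+(1-q)(qa)^p\le q a^p+(qa)^p .
\]
We choose $q$ as small as the constraint $q a^p\le \sigma^p/2$ allows, with $a=3\epsilon/q$. This gives $q^{1-p}(3\epsilon)^p\le\sigma^p/2$, that is, $q=\Theta\big((\epsilon/\sigma)^{\frac{p}{p-1}}\big)$. The remaining term $(qa)^p=(3\epsilon)^p$ is at most $\sigma^p/2$ provided $\epsilon\le c\sigma$ for a small constant $c$. If $\epsilon$ is larger than this, the claimed bound is $\Omega(1)$ and holds trivially.

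The next step is to show that distinguishing these two oracles requires $\Omega(1/\sqrt q)$ queries. Following Definition~\ref{def:quantum-sampling-oracle}, we write the state-preparation unitaries as
\[
O_0:\ket{0}\ket{0}\mapsto\ket{0}\ket{\psi},\qquad
O_1:\ket{0}\ket{0}\mapsto\sqrt{1-q}\,\ket{0}\ket{\psi}+\sqrt{q}\,\ket{a}\ket{\psi'} .
\]
The two prepared states have inner product $\sqrt{1-q}$, so their Euclidean distance is $O(\sqrt q)$. We can then either invoke the known lower bound for distinguishing state-preparation oracles, which says that $\Omega(1/\mathrm{dist})$ queries are needed (as used in \cite{Cornelissen_2022} and \cite{blanchet2024quadraticspeedupinfinitevariance}), or reduce directly from unstructured search. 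For the direct reduction, take a search instance of size $M=\lfloor 1/q\rfloor$ with at most one marked item and define $\X(i)=a\cdot\mathbf 1\{i \text{ marked}\}$ for $i$ uniform on $[M]$. One query to $O_\X$ is simulated by a uniform superposition over $[M]$ followed by one query to the search oracle (together with its inverse, which at most doubles the query count). If there is no marked item, $\X\equiv0$. If there is exactly one, $\X$ equals $a$ with probability $1/M\approx q$, and $M$ can be chosen so that this probability is exactly the required $q$ up to constants. Deciding whether a marked item exists requires $\Omega(\sqrt M)$ queries by the BBBV/Grover lower bound.

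Combining the two steps, the number of queries is
\[
N=\Omega(q^{-1/2})=\Omega\Big((\sigma/\epsilon)^{\frac{p}{2(p-1)}}\Big).
\]

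The main obstacle is making the two settings compatible. The oracle in Definition~\ref{def:quantum-sampling-oracle} prepares a state over values with a garbage register, while the search lower bound is stated for a phase or bit oracle. The reduction must therefore implement $O_\X$ and $O_\X^\dagger$ from the search oracle with only constant overhead, and must handle the garbage register $\ket{\psi_\x}$. Here that register can simply hold the index $\ket{i}$, which is allowed by the definition since arbitrary auxiliary states are permitted. A second, routine point is the rounding of $M=1/q$ to an integer, which changes only constants.
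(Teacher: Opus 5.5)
Your proposal is correct and is essentially the paper's proof: you reduce to $d=1$, encode an unstructured-search instance on $N=\Theta\bigl((\sigma/\epsilon)^{p/(p-1)}\bigr)$ items as a rare spike of height $3\epsilon N$ (index in the garbage register, one search query per sampling-oracle call), separate the means $0$ and $3\epsilon$ by thresholding, and invoke the $\Omega(\sqrt N)$ BBBV bound. One minor point: your side condition $\epsilon\le c\sigma$ is unnecessary, since $(qa)^p=q^{p-1}\cdot qa^p\le qa^p$, and this is exactly how the paper absorbs that term into $2M^p/N$.
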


\begin{proof}
It suffices to prove the result for the case $d=1$, since the general $d$-dimensional statement follows by considering random vectors supported entirely on the first coordinate. In the one-dimensional setting, our argument is a direct adaptation of the proof technique in \cite[Section 6]{blanchet2024quadraticspeedupinfinitevariance}, replacing the bounded $p$-th raw moment assumption with a bounded $p$-th central moment assumption. For completeness, we include the full proof here.

We consider reducing our mean estimation problem to the unstructured search problem. Let $N\geq 2$ be an integer to be chosen later, and consider Boolean functions
$f:[N]:=\{1,\ldots,N\}\to\{0,1\}$ under the promise that either
\[
    f(i)=0 \quad \text{for all } i\in[N],
\]
or there exists a unique $j\in[N]$ such that
\[
    f(j)=1, \qquad f(i)=0 \ \text{for all } i\neq j .
\]
As established in \cite[Theorem 5]{blanchet2024quadraticspeedupinfinitevariance}, with probability at least $2/3$,  distinguishing these two cases requires
$\Omega(N)$ classical queries \cite{classicallowerboundforsearch} and $\Omega(\sqrt N)$ quantum queries \cite{Bennett_1997}.

Given such an $f$, define a random variable
\[
    \X_f := M f(U),
\]
where $U$ is uniformly distributed on $[N]$ and $M>0$ is a parameter. One query to a classical sampling oracle for $\X_f$ can be simulated by randomly choose a number in $[N]$ and then using one query to $f$; similarly, in the quantum setting, one query to the corresponding quantum sampling oracle $O_{\X_{f}}$ satisfying
$$O_{\X_{f}} \ket{0}\ket{0}= \sum_{i=1}^{N}\frac{1}{\sqrt{N}} \ket{M f(i)}\ket{i}$$
can be simulated by preparing a uniform superposition by Hadamard gates and one query to the binary oracle $\mathcal{B}_{f}$ for $f$ satisfying $$\mathcal{B}_{f}: \ket{i}\ket{0}\rightarrow \ket{i}\ket{f(i)}.$$ Hence, any mean-estimation algorithm for the class under consideration yields a search-distinguishing algorithm with the same order of query complexity.

If $f\equiv 0$, then $\X_f=0$ almost surely, so
\[
    \bmu_f:=\EBP{\X}_f =0,
    \qquad
    \EBP{ |\X_f-\bmu_f|^p}=0 .
\]
If $f$ has a unique marked element, then
\[
    \mathbb P(\X_f=M)=\frac1N,
    \qquad
    \mathbb P(\X_f=0)=1-\frac1N,
\]
and therefore
\[
    \bmu_f=\EBP{\X_f}=\frac{M}{N}.
\]
Choose
\[
    M:=3\epsilon N .
\]
Then in the marked case,
\[
    \bmu_f=3\epsilon ,
\]
whereas in the unmarked case $\bmu_f=0$.

We now verify the moment constraint. In the marked case,
\[
\begin{aligned}
    \EBP{ |\X_f-\bmu_f|^p}
    &=
    \frac1N \left|M-\frac{M}{N}\right|^p
    +
    \left(1-\frac1N\right)\left|\frac{M}{N}\right|^p  \\
    &\leq
    \frac{M^p}{N}
    +
    \frac{M^p}{N^p}
    \leq
    \frac{2M^p}{N}.
\end{aligned}
\]
Substituting $M=3\epsilon N$ gives
\[
    \EBP{ |\X_f-\bmu_f|^p}
    \leq
    2\cdot 3^p \epsilon^p N^{p-1}.
\]
Thus the condition $\EBP{|\X_f-\bmu_f|^p}\leq \sigma^p$ holds whenever
\[
    N \leq c_p \left(\frac{\sigma}{\epsilon}\right)^{\frac{p}{p-1}},
\]
for a constant $c_p>0$ depending only on $p$.

Now suppose there exists a mean-estimation algorithm which, for every distribution satisfying the above moment condition, outputs $\tilde{\bmu}$ with
$|\tilde{\bmu}-\bmu_f|\leq \epsilon$ with probability at least $2/3$ using $T$ queries. We use it to distinguish the two search cases as follows: run the estimator on $\X_f$ and output ``marked'' if
\[
    |\tilde{\bmu}|\geq \frac{3\epsilon}{2},
\]
and output ``unmarked'' otherwise.

If $f\equiv 0$, then $\bmu_f=0$, so on the success event,
\[
    |\tilde\bmu|\leq \epsilon < \frac{3\epsilon}{2}.
\]
If $f$ has a unique marked element, then $\bmu_f=3\epsilon$, so on the success event,
\[
    |\tilde\bmu|
    \geq
    3\epsilon-\epsilon
    =
    2\epsilon
    >
    \frac{3\epsilon}{2}.
\]
Therefore, the induced algorithm distinguishes the two promised search cases with probability at least $2/3$ using $T$ queries.

By Grover's lower bound \cite{Bennett_1997}, it implies
\[
    T=\Omega(\sqrt N)
    =
    \Omega\!\left(
    \left(\frac{\sigma}{\epsilon}\right)^{\frac{p}{2(p-1)}}
    \right)
\]
in the quantum sampling oracle. This proves the claim.
\end{proof}

\begin{corollary}[Expected-query quantum lower bound for heavy-tailed mean estimation under $L_p$ error]
\label{coro: expected query lower bound for heavy-tailed mean estimation with p-th moment error}
Fix $p\in(1,2]$ and $\sigma>0$. Let
$\mathcal P_\sigma$ denote the class of all $d$-dimensional random vectors
$\X$ with mean $\bmu=\EBP{\X}$ satisfying
\[
    \EBP{\|\X-\bmu\|^p}\leq\sigma^p.
\]
Suppose a quantum algorithm, for every $\X\in\mathcal P_\sigma$, outputs
an estimator $\widetilde{\bmu}$ satisfying
\[
    \EBP{\|\widetilde{\bmu}-\bmu\|^p}\leq\epsilon^p.
\]
Then its expected query complexity is at least
\[
    \Omega\!\left(
        \left(\frac{\sigma}{\epsilon}\right)^{\frac{p}{2(p-1)}}
    \right)
\]
with the quantum sampling oracle.
\end{corollary}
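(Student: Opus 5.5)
We reduce from the bounded-error lower bound of Theorem~\ref{thm: lower bounds 1 for heavy-tailed mean estimation}, or more directly from the same search-distinguishing instance used in its proof. Take $d=1$ (the general case embeds into the first coordinate) and reuse the hard family $\X_f=M f(U)$ with $M=c\,\epsilon N$ for a suitable constant $c$ (say $c=6$). Choose $N=\Theta_p((\sigma/\epsilon)^{p/(p-1)})$ small enough that the moment constraint $\EBP{|\X_f-\bmu_f|^p}\le\sigma^p$ holds in the marked case. The unmarked instance then has mean $0$, and the marked instance has mean $c\epsilon$.

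Suppose an algorithm $\mathcal A$ has expected query count $Q$ on every instance and guarantees $\EBP{|\widetilde{\bmu}-\bmu|^p}\le\epsilon^p$. We convert it into a bounded-error, bounded-query distinguisher in two steps.

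\emph{From $L_p$ error to a success probability.} By Markov's inequality, $\Pr(|\widetilde{\bmu}-\bmu|\ge (c/2)\epsilon)\le (2/c)^p$. This is at most $1/9$ for $c$ large, say $c=6\cdot 9^{1/p}$, which is bounded by a constant since $p>1$. Thresholding $|\widetilde{\bmu}|$ at $c\epsilon/2$ therefore identifies the case with probability at least $8/9$.

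\emph{From expected to worst-case queries.} We run $\mathcal A$ but halt it after $9Q$ queries. Markov's inequality on the query count shows that truncation occurs with probability at most $1/9$. When it does, we output an arbitrary answer. By a union bound, the truncated algorithm errs with probability at most $2/9<1/3$ and always uses at most $9Q$ queries.

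By Grover's lower bound~\cite{Bennett_1997}, the truncated algorithm needs $\Omega(\sqrt N)$ queries, so $9Q=\Omega(\sqrt N)$. Hence $Q=\Omega((\sigma/\epsilon)^{p/(2(p-1))})$.

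The main subtlety is the truncation step in the quantum setting. A quantum algorithm with a random query count must be understood as one that, after intermediate measurements, adaptively decides whether to continue querying. Halting such a process at a fixed budget yields a valid algorithm in the standard bounded-query model, to which the adversary/polynomial lower bound for search applies.

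A second check is that enlarging $c$ (to make the Markov step work) only changes the constant in $N\le c_p(\sigma/\epsilon)^{p/(p-1)}$. The marked-case moment is $\le 2M^p/N = 2c^p\epsilon^pN^{p-1}$, so $N$ still has the required order and the final rate is unaffected.
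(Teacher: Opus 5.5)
Your proof is correct and follows essentially the same route as the paper: Markov's inequality on both the query count and the $L_p$ error, a union bound, and truncation give a bounded-error, bounded-query estimator, and the Grover-based search lower bound then applies. The only difference is cosmetic: the paper applies Theorem~\ref{thm: lower bounds 1 for heavy-tailed mean estimation} as a black box at accuracy $6^{1/p}\epsilon$ with budget $6T$, whereas you reopen the search reduction and enlarge the spike constant $c$, which changes only $p$-dependent constants.
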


\begin{proof}


Let $\mathcal A$ be an algorithm satisfying the stated guarantee and $\EBP{Q}$
denotes the worst-case expected number of oracle queries made by $\mathcal A$. 
Set $T:=\EBP{Q}$. By Markov's inequality,
\[
    \Pr(Q>6T)\leq \frac16 .
\]
On the other hand, from
\[
    \EBP{\Norm{\tilde\bmu-\bmu}^{p}}\leq \epsilon^p ,
\]
Markov's inequality gives
\[
    \Pr\!\left(
        \Norm{\tilde\bmu-\bmu}>6^{1/p}\epsilon
    \right)
    \leq
    \frac{1}{6}.
\]
Therefore,
\[
    \Pr\!\left(
        Q\leq 6T
        \ \text{and}\
        \Norm{\tilde\bmu-\bmu}\leq 6^{1/p}\epsilon
    \right)
    \geq
    1-\frac16-\frac16
    =
    \frac23 .
\]

Now define a truncated algorithm $\mathcal A'$ as follows: run $\mathcal A$,
but stop it once it has made more than $6T$ oracle queries. If this happens,
output an arbitrary value. Then $\mathcal A'$ makes at most $6T$ queries in
the worst case. Moreover, by the inequality above, $\mathcal A'$ outputs an
estimate satisfying
\[
    \Norm{\tilde\bmu-\bmu}\leq 6^{1/p}\epsilon
\]
with probability at least $2/3$.

Thus, $\mathcal A'$ is a valid high-probability mean estimator with accuracy
$6^{1/p}\epsilon$ and worst-case query complexity at most $6T$. By the
high-probability quantum lower bound in Theorem \ref{thm: lower bounds 1 for heavy-tailed mean estimation},
\[
    6T
    \geq
    \Omega\!\left(
        \left(\frac{\sigma}{6^{1/p}\epsilon}\right)^{\frac{p}{2(p-1)}}
    \right).
\]
Since $6^{1/p}$ is a constant depending only on $p$, this implies
\[
    T
    \geq
    \Omega\!\left(
        \left(\frac{\sigma}{\epsilon}\right)^{\frac{p}{2(p-1)}}
    \right).
\]

\end{proof}

\begin{corollary}[Expected-query quantum lower bound for unbiased heavy-tailed mean estimation]
\label{coro:expected-query-lower-bound-for-unbaised-heavy-tailed-mean-estimation}
Fix $p\in(1,2]$, $\sigma>0$, and $\epsilon>0$. Let
$\mathcal P_\sigma$ denote the class of all $d$-dimensional random vectors
$\X$ with mean $\bmu=\EBP{\X}$ satisfying
\[
    \EBP{\|\X-\bmu\|^p}\leq\sigma^p.
\]
Suppose a quantum algorithm, for every $\X\in\mathcal P_\sigma$, outputs
an estimator $\widetilde{\bmu}$ satisfying
\[
    \EBP{\widetilde{\bmu}}=\bmu
    \qquad\text{and}\qquad
    \EBP{\|\widetilde{\bmu}-\bmu\|^p}\leq\epsilon^p.
\]
Then its worst-case expected number of queries to the quantum sampling
oracle is at least
\[
    \Omega\!\left(
        \left(\frac{\sigma}{\epsilon}\right)^{
            \frac{p}{2(p-1)}
        }
    \right).
\]
\end{corollary}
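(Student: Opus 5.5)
This corollary follows directly from Corollary~\ref{coro: expected query lower bound for heavy-tailed mean estimation with p-th moment error}. Any algorithm meeting the hypotheses here, namely unbiasedness together with $\EBP{\|\widetilde{\bmu}-\bmu\|^p}\le\epsilon^p$ for every $\X\in\mathcal P_\sigma$, also meets the hypotheses of that earlier corollary, because unbiasedness is an extra constraint and is never used in the lower-bound argument. So the class of unbiased estimators is a subclass of the estimators covered there.

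The plan is therefore short. Let $\mathcal A$ be an algorithm satisfying the stated guarantees, and let $T$ denote its worst-case expected number of queries to the quantum sampling oracle. Drop the condition $\EBP{\widetilde{\bmu}}=\bmu$ and keep only the $L_p$-error guarantee. Then apply Corollary~\ref{coro: expected query lower bound for heavy-tailed mean estimation with p-th moment error}, which gives $T=\Omega\big((\sigma/\epsilon)^{\frac{p}{2(p-1)}}\big)$.

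Unpacked, that corollary uses Markov's inequality twice. The first application gives $\Pr(Q>6T)\le 1/6$. The second gives $\Pr(\|\widetilde{\bmu}-\bmu\|>6^{1/p}\epsilon)\le 1/6$. We then truncate the run of $\mathcal A$ after $6T$ queries, which yields a bounded-error estimator with accuracy $6^{1/p}\epsilon$, success probability $2/3$, and worst-case cost $6T$. Finally, we invoke the Grover-based reduction of Theorem~\ref{thm: lower bounds 1 for heavy-tailed mean estimation}, using the hard instances $\X_f=Mf(U)$ with $M=3\epsilon N$ and $N\asymp(\sigma/\epsilon)^{p/(p-1)}$.

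There is no substantial obstacle. The one point to verify is that the hard family is legitimately inside $\mathcal P_\sigma$, so the adversary may feed it to $\mathcal A$ and $\mathcal A$ must then be unbiased and accurate on it. This holds because membership in $\mathcal P_\sigma$ was already checked in the proof of Theorem~\ref{thm: lower bounds 1 for heavy-tailed mean estimation}: the marked case gives $\EBP{|\X_f-\bmu_f|^p}\le 2\cdot 3^p\epsilon^pN^{p-1}\le\sigma^p$. Constants depending only on $p$, including the $6^{1/p}$ loss in accuracy, are absorbed into the $\Omega(\cdot)$.
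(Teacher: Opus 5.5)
Your proposal is correct and follows essentially the same route as the paper: an unbiased estimator with $\EBP{\|\widetilde{\bmu}-\bmu\|^p}\le\epsilon^p$ satisfies the hypotheses of Corollary~\ref{coro: expected query lower bound for heavy-tailed mean estimation with p-th moment error}, so the bound follows immediately from that corollary. Your unpacking (two Markov bounds, truncation at $6T$ queries, then the Grover reduction of Theorem~\ref{thm: lower bounds 1 for heavy-tailed mean estimation}) matches how those earlier results are proved. The extra constant factor from running the reduction at accuracy $6^{1/p}\epsilon$ is absorbed into the $\Omega(\cdot)$.
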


\begin{proof}
Any unbiased estimator satisfying
\[
    \EBP{\|\widetilde{\bmu}-\bmu\|^p}\leq\epsilon^p
\]
is, in particular, an estimator satisfying the assumptions of
Corollary~\ref{coro: expected query lower bound for heavy-tailed mean estimation with p-th moment error}.
The claimed lower bound therefore follows immediately from that corollary.
\end{proof}

Here, we establish $d$-dependent quantum lower bounds 
for solving Problems~\ref{prob: obtain mean estimate for heavy tail random variable with bounded mean}  and \ref{prob: obtain unbiased mean estimate for heavy tail random variable}, which are stronger than $\Omega\left(\left(\frac{\sigma}{\epsilon}\right)^{\frac{p}{2(p-1)}}\right)$ when the heavy-tailed index satisfies $p\in (4/3,2]$. 
Our lower bounds are obtained via a reduction to a $\texttt{Recovery}^{d}\circ \texttt{Search}^{M}$ problem.
Before introducing $\texttt{Recovery}^{d}\circ \texttt{Search}^{M}$, we first consider the simpler $\texttt{PairParity}^{d}\circ\texttt{Search}^{M}$ problem. We will see that the quantum lower bound for $\texttt{Recovery}^{d}\circ \texttt{Search}^{M}$ can be obtained via a reduction to $\texttt{PairParity}^{d}\circ\texttt{Search}^{M}$.

\begin{definition}[$\texttt{PairParity}^{d}\circ\texttt{Search}^{K}$ problem]
Let $d,K\in\mathbb N$ with $d$ even. An input is a matrix
\[
    A\in\{0,1\}^{d\times K}
\]
satisfying the row-wise promise
\[
    \Norm{A_{i,*}}_{1} :=\sum_{j=1}^{K} A_{i,j}\in\{0,1\},
    \qquad \forall i\in[d],
\]
and the pairwise-balance promise
\[
    \mathbf 1\{\Norm{A_{2r-1,*}}_{1}=1\}
    +
    \mathbf 1\{\Norm{A_{2r,*}}_{1}=1\}
    =1,
    \qquad \forall r\in[d/2].
\]
Define the hidden vector $b^{(A)}\in\{0,1\}^d$ by
\[
    b_i^{(A)}:=\mathbf 1\{\Norm{A_{i,*}}_{1}=1\}.
\]
Equivalently, the pairwise-balance promise requires
\[
    b_{2r-1}^{(A)}+b_{2r}^{(A)}=1,
    \qquad \forall r\in[d/2].
\]
The algorithm is given query access to the following oracle
\[
    O_A:\ket{i,j,c}\mapsto (-1)^{cA_{i,j}}\ket{i,j,c},
    \qquad (i,j)\in[d]\times[K],
\]
where $c\in \{0,1\}$.

The goal is to output
\[
    \texttt{PairParity}^{d}\circ\texttt{Search}^{K}(A)
    :=
    \bigoplus_{r=1}^{d/2} b_{2r-1}^{(A)}.
\]
\end{definition}

\begin{theorem}[Quantum lower bound for
$\texttt{PairParity}^{d}\circ\texttt{Search}^{K}$]
\label{thm:pair-parity-search-lower-bound}
Let $d,K\in\mathbb N$ with $d$ even, and let
$\rho\in(1/2,1)$ be any fixed constant. Any quantum algorithm that solves
$\texttt{PairParity}^{d}\circ\texttt{Search}^{K}$ with success
probability at least $\rho$ must make
\[
    \Omega_{\rho}(d\sqrt K)
\]
queries to the oracle
\[
    O_A:\ket{i,j,c}\mapsto(-1)^{c A_{i,j}}\ket{i,j,c},
\]
where the implicit constant in \(\Omega_\rho(\cdot)\) may depend on \(\rho\).
\end{theorem}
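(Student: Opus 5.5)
We prove the bound by recognizing $\texttt{PairParity}^{d}\circ\texttt{Search}^{K}$ as a composed function $g\circ h$ and invoking the composition theorem for the (negative-weight) adversary bound. This bound characterizes bounded-error quantum query complexity up to constants, so $\mathrm{ADV}^{\pm}(g\circ h)=\mathrm{ADV}^{\pm}(g)\,\mathrm{ADV}^{\pm}(h)$, and it remains valid for partial functions given by promises (Reichardt; Lee--Mittal--Reichardt--Špalek--Szegedy; Kimmel for the partial-function composition).

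\textbf{Structure of the composition.} Group the $d$ rows into $d/2$ pairs. For each pair $r$, the two rows $(A_{2r-1,*},A_{2r,*})\in\{0,1\}^{2K}$ have the following promise: exactly one of the two rows contains a single $1$, and the other row is all-zero. Equivalently, the pair is a string of length $2K$ with exactly one marked position. The inner function $h$ returns the bit $b_{2r-1}^{(A)}$, i.e., whether the marked position lies in the first half. The outer function $g=\mathrm{PARITY}_{d/2}$ takes the XOR of the $d/2$ inner outputs.

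\textbf{Inner bound.} We first show $\mathrm{ADV}(h)=\Omega(\sqrt{K})$. Use the positive-weight adversary relation $R=\{(x,y): x \text{ has its }1\text{ in row }2r-1,\ y\text{ has its }1\text{ in row }2r\}$. Each $x$ is related to $K$ inputs $y$, and each $y$ to $K$ inputs $x$. For any query position $(i,j)$, each $x$ has at most one related $y$ that differs from it at $(i,j)$, and symmetrically. Ambainis' theorem then gives $\Omega(\sqrt{K\cdot K/(1\cdot 1)})=\Omega(\sqrt K)$. Alternatively, we reduce unstructured search on $K$ items, using the standard $\Omega(\sqrt K)$ lower bound of \cite{Bennett_1997} cited earlier, to distinguishing which half holds the marked item.

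\textbf{Outer bound and composition.} We have $\mathrm{ADV}^{\pm}(\mathrm{PARITY}_{m})\geq\mathrm{ADV}(\mathrm{PARITY}_m)=m$, via the relation between inputs at Hamming distance $1$. Composing, $\mathrm{ADV}^{\pm}(g\circ h)=\Omega\bigl((d/2)\sqrt K\bigr)$. The phase oracle $O_A$ with control bit $c$ is equivalent, up to a constant factor of queries, to the standard oracle on the bits $A_{i,j}$ (controlled phase query versus XOR query). Finally, since $Q_\epsilon=\Theta(\mathrm{ADV}^{\pm})$ for any fixed constant error below $1/2$, success probability $\rho\in(1/2,1)$ yields $\Omega_\rho(d\sqrt K)$. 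For $\rho$ close to $1/2$, we first amplify the success probability to $2/3$ by $O_\rho(1)$ repetitions.

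\textbf{Main obstacle.} The delicate step is the composition with promises. The inner domain is a partial function, and the overall input set must be exactly the product of the inner promise sets. It is, because both constraints are imposed pairwise, so the composition theorem for partial inner functions (the lower-bound direction $\mathrm{ADV}^{\pm}(g\circ h)\ge \mathrm{ADV}^{\pm}(g)\mathrm{ADV}^{\pm}(h)$, which holds for partial functions with Boolean inner outputs) applies. If that citation is unavailable, the fallback is to construct the adversary matrix for $g\circ h$ directly as the tensor-product composition $\Gamma_g\circ\Gamma_h$ of the parity and inner adversary matrices and verify the spectral norm bounds by hand.
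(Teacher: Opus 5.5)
\textbf{Gap in the inner bound.} Your overall plan is sound, and it cuts the composition differently from the paper. The paper takes the inner function to be promise-$\mathrm{OR}_K$ on a single row, with domain $\{0^K\}\cup\{e_1,\dots,e_K\}$, and the outer function to be a \emph{partial} function on $d$ bits carrying the pairwise promise. You instead merge each pair of rows into one $2K$-bit inner function $h$: the input has exactly one marked position, and $h$ reports which half it lies in. The outer function is then the total $\mathrm{PARITY}_{d/2}$, and the composed domain is literally the product of the inner domains. That is a legitimate and arguably cleaner decomposition.

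However, the inner adversary computation as written is wrong. Take $x$ with its $1$ at position $j_0$ of row $2r-1$. Every $y$ in your relation has its $1$ in row $2r$, so $y_{(2r-1,j_0)}=0\neq x_{(2r-1,j_0)}$. Thus all $K$ partners of $x$ differ from it at that coordinate, so $\ell=K$, not $1$, and symmetrically $\ell'=K$. The formula $\Omega(\sqrt{mm'/(\ell\ell')})$ then yields only $\Omega(1)$. This is not specific to your relation. For any relation on $h$, the coordinate carrying $x$'s $1$ separates $x$ from all of its partners, so $\ell\ge m$ and $\ell'\ge m'$. The unweighted Ambainis bound therefore can never exceed a constant for $h$.

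\textbf{How to repair it.} You need a bound that pairs the counts edge by edge.
\begin{itemize}
\item Refined Ambainis theorem, with bound $\Omega(\sqrt{mm'/\ell_{\max}})$ where $\ell_{\max}=\max \ell_{x,i}\ell_{y,i}$ over related pairs and differing coordinates. For your all-pairs relation each such product is $K\cdot 1$ or $1\cdot K$, so $\ell_{\max}=K$ and the bound is $\sqrt{K\cdot K/K}=\sqrt K$.
\item Spectral adversary, equivalently. Take $\Gamma$ to be the all-ones bipartite matrix between the two halves. Then $\|\Gamma\|=K$, while each $\Gamma\circ D_i$ is a single row or column of ones with norm $\sqrt K$. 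This gives $\mathrm{ADV}(h)\ge\sqrt K$.
\item The paper's cut. There the inner relation $\{0^K\}\times\{e_j\}$ genuinely has $\ell=\ell'=1$. The price is that the outer function carries a promise, needs its own relation (flip one pair, giving $\Omega(d)$), and both levels rely on the partial-function composition theorem.
\end{itemize}

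Your fallback, ``reduce unstructured search to $h$'', is not a black-box reduction in the direction you need. From an $\mathrm{OR}_K$ instance you cannot build a weight-one $2K$-string without already knowing whether the instance is marked. What does work is the BBBV hybrid argument: compare the runs on $e_k$ (first half) and $e_{k'}$ (second half) against the run on the all-zeros oracle.

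With either repair, the rest of your argument goes through and matches the paper: parity adversary $d/2$, composition, phase-versus-bit oracle equivalence, and majority amplification for $\rho<2/3$.
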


\begin{proof}
Let
\[
    m:=\frac d2.
\]
We first express $\texttt{PairParity}^{d}\circ\texttt{Search}^{K}$ as a
composition of two partial Boolean functions.

Define the inner promise function
\[
    g:D_g\to\{0,1\},
    \qquad
    D_g:=\{x\in\{0,1\}^{K}:\Norm{x}_1\in\{0,1\}\},
\]
by
\[
    g(x):=\mathrm{OR}_K(x).
\]
Thus $g(0^K)=0$ and $g(e_j)=1$ for every $j\in[K]$.

Define the outer promise function
\[
    f:D_f\to\{0,1\},
\]
where
\[
    D_f:=
    \left\{
        b\in\{0,1\}^{d}:
        b_{2r-1}+b_{2r}=1,\ \forall r\in[m]
    \right\},
\]
by
\[
    f(b):=\bigoplus_{r=1}^{m} b_{2r-1}.
\]
Then $\texttt{PairParity}^{d}\circ\texttt{Search}^{K}$ is precisely the
promise composition
\[
    F=f\circ g^d,
\]
where the input is
\[
    A=(A_{1,*},\ldots,A_{d,*}),
\]
each row $A_{i,*}$ lies in $D_g$, and
\[
    (g(A_{1,*}),\ldots,g(A_{d,*}))\in D_f.
\]

Note that the oracle \(O_A\) is unitarily equivalent to the standard binary
oracle
\[
    B_A:\ket{i,j,c}
    \mapsto
    \ket{i,j,c\oplus A_{i,j}}
\]
via Hadamard gates on the third  register:
\[
    B_A
    =
    (I_{i,j}\otimes H_c)\,
    O_A\,
    (I_{i,j}\otimes H_c).
\]
Hence the standard adversary lower bounds in the Boolean query model
apply to the present oracle model.

We now lower bound the adversary values of the inner and outer functions.

First consider the inner function $g$. We use Ambainis' relation method \cite[Theorem 2]{ambainis2000quantum}.
Let
\[
    X_g:=\{0^K\},
    \qquad
    Y_g:=\{e_1,\ldots,e_K\}.
\]
For every $x\in X_g$ and $y\in Y_g$, we have $g(x)=0$ and $g(y)=1$.
Define
\[
    R_g:=X_g\times Y_g.
\]
Then every $x\in X_g$ is related to $K$ inputs in $Y_g$, and every
$y\in Y_g$ is related to one input in $X_g$. In Ambainis' notation,
\[
    m_g=K,
    \qquad
    m_g'=1.
\]
Moreover, for a fixed $x=0^K$ and coordinate $j\in[K]$, there is at most
one $y\in Y_g$ such that $(x,y)\in R_g$ and $x_j\neq y_j$, namely
$y=e_j$. Similarly, for a fixed $y=e_t$ and coordinate $j$, there is at
most one $x\in X_g$ such that $(x,y)\in R_g$ and $x_j\neq y_j$. Therefore
\[
    \ell_g=1,
    \qquad
    \ell_g'=1.
\]
The result in  \cite[Theorem 2]{ambainis2000quantum} gives
\[
    Q(g)
    =
    \Omega\left(
        \sqrt{\frac{m_gm_g'}{\ell_g\ell_g'}}
    \right)
    =
    \Omega(\sqrt K).
\]
Equivalently, the same relation defines a feasible positive adversary
matrix, and hence
\[
    \mathrm{ADV}^{+}(g)\ge \Omega(\sqrt K).
\]
Since the general adversary bound allows negative weights and hence dominates the positive adversary bound,
\[
    \mathrm{ADV}^{\pm}(g)\ge \mathrm{ADV}^{+}(g)=\Omega(\sqrt K).
\]

Next consider the outer function $f$. Although $f$ is defined on $d$ input
bits, the pairwise promise leaves only $m=d/2$ free bits. To apply
Ambainis' method, identify each
\[
    c=(c_1,\ldots,c_m)\in\{0,1\}^{m}
\]
with
\[
    \phi(c)
    :=
    (c_1,1-c_1,\ c_2,1-c_2,\ \ldots,\ c_m,1-c_m)
    \in D_f.
\]
Then
\[
    f(\phi(c))
    =
    c_1\oplus\cdots\oplus c_m
    =
    \mathrm{PARITY}_m(c).
\]

Define
\[
    X_f:=\{\phi(c):\mathrm{PARITY}_m(c)=0\},
    \qquad
    Y_f:=\{\phi(c):\mathrm{PARITY}_m(c)=1\}.
\]
For $x=\phi(c)\in X_f$ and $y=\phi(c')\in Y_f$, define
\[
    (x,y)\in R_f
    \quad\Longleftrightarrow\quad
    c'=c\oplus e_r
    \text{ for some } r\in[m].
\]
Thus $x$ and $y$ differ exactly on the pair of coordinates
\[
    2r-1,\ 2r.
\]
Every $x\in X_f$ is related to exactly $m$ inputs in $Y_f$, one for each
choice of $r\in[m]$. Similarly, every $y\in Y_f$ is related to exactly
$m$ inputs in $X_f$. Hence
\[
    m_f=m,
    \qquad
    m_f'=m.
\]
Now fix any $x\in X_f$ and any coordinate $i\in[d]$. The coordinate $i$
belongs to a unique pair $(2r-1,2r)$. Among all $y$ with $(x,y)\in R_f$,
only the one obtained by flipping the corresponding free bit $c_r$ differs
from $x$ at coordinate $i$. Therefore
\[
    \ell_f=1.
\]
The same argument gives
\[
    \ell_f'=1.
\]
By Ambainis' theorem~\cite[Theorem 2]{ambainis2000quantum},
\[
    Q(f)
    =
    \Omega\left(
        \sqrt{\frac{m_fm_f'}{\ell_f\ell_f'}}
    \right)
    =
    \Omega(m)
    =
    \Omega\left(\frac{d}{2}\right)=\Omega(d).
\]
Equivalently, this Ambainis relation gives
\[
    \mathrm{ADV}^{+}(f)\ge \Omega(d),
\]
and hence
\[
    \mathrm{ADV}^{\pm}(f)\ge \Omega(d).
\]

Finally, we use the perfect composition property of the general adversary
bound for partial Boolean functions \cite[Lemma A.3]{Kimmel_2013}:
\[
    \mathrm{ADV}^{\pm}(f\circ g^d)
    =
    \mathrm{ADV}^{\pm}(f)\,\mathrm{ADV}^{\pm}(g).
\]
Therefore
\[
    \mathrm{ADV}^{\pm}(F)
    =
    \mathrm{ADV}^{\pm}(f\circ g^d)
    \ge
    \Omega(d)\cdot \Omega(\sqrt K)
    =
    \Omega(d\sqrt K).
\]
Since the general adversary bound characterizes bounded-error quantum
query complexity up to constant factors~\cite[Theorem 1.3]{reichardt2011reflections},
\[
    Q(F)=\Omega(\mathrm{ADV}^{\pm}(F))
    =
    \Omega(d\sqrt K).
\]
The preceding adversary argument establishes the claimed lower bound
for success probability at least $2/3$. We finally explain why the same
asymptotic lower bound holds for every fixed
$\rho\in(1/2,1)$.

If $\rho\geq2/3$, the result follows immediately from the lower bound
for success probability $2/3$. Now suppose that
$\rho\in(1/2,2/3)$. Let $\mathcal A$ be an algorithm that succeeds with
probability at least $\rho$ using $T$ queries. Run $\mathcal A$
independently $s$ times and output the majority of its answers, where
$s$ is an odd integer. By Hoeffding's inequality, the failure
probability of the amplified algorithm is at most
\[
    \exp\left(
        -2s\left(\rho-\frac12\right)^2
    \right).
\]
Choosing
\[
    s
    \geq
    \frac{\ln 3}{
        2\left(\rho-\frac12\right)^2
    }
\]
makes this failure probability at most $1/3$. Since $\rho$ is fixed,
$s$ is a constant independent of $d$ and $K$.

The amplified algorithm therefore succeeds with probability at least
$2/3$ using $sT$ queries. Applying the already established
$2/3$-success lower bound gives
\[
    sT=\Omega(d\sqrt K).
\]
Since $s$ depends only on $\rho$, we obtain
\[
    T=\Omega_{\rho}(d\sqrt K).
\]
This proves the result for every fixed $\rho>1/2$.

\end{proof}


\begin{definition}[$\texttt{Recovery}^{d}\circ\texttt{Search}^{K}$ problem]
\label{def:recovery-search}
Let $d,K\in\mathbb N$ with $d$ even. Define the set of valid inputs by
\[
    \mathcal D_{d,K}
    :=
    \left\{
        A\in\{0,1\}^{d\times K}:
        \|A_{i,*}\|_1\in\{0,1\}\ \text{for all }i\in[d],
        \quad
        \sum_{i=1}^d
        \mathbf 1\{\|A_{i,*}\|_1=1\}
        =\frac d2
    \right\}.
\]
For each $A\in\mathcal D_{d,K}$, define the hidden vector
$b^{(A)}\in\{0,1\}^d$ by
\[
    b_i^{(A)}
    :=
    \mathbf 1\{\|A_{i,*}\|_1=1\}.
\]
The algorithm is given query access to the following oracle
\[
    O_A:\ket{i,j,c}\mapsto(-1)^{cA_{i,j}}\ket{i,j,c},
    \qquad (i,j)\in[d]\times[K], c\in\{0,1\}.
\]
The goal is to recover, or approximately recover, the vector $b^{(A)}$.
\end{definition}

\begin{theorem}[Quantum lower bound for exact recovery in $\texttt{Recovery}^{d}\circ\texttt{Search}^{K}$]
\label{thm:pair-parity-reduces-to-recovery}
Let $d,K\in\mathbb N$ with $d$ even, and let
$\rho\in(1/2,1)$ be a constant. Any quantum algorithm that
outputs the vector $b^{(A)}$ for every valid instance of
$\texttt{Recovery}^{d}\circ\texttt{Search}^{K}$ with success probability
at least $\rho$ must make
\[
    \Omega_{\rho}(d\sqrt K)
\]
queries to the oracle
\[
    O_A:\ket{i,j,c}\mapsto(-1)^{cA_{i,j}}\ket{i,j,c}.
\]
\end{theorem}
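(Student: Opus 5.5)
The plan is a direct reduction from $\texttt{PairParity}^{d}\circ\texttt{Search}^{K}$, whose $\Omega_\rho(d\sqrt K)$ lower bound is given by Theorem~\ref{thm:pair-parity-search-lower-bound}.

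First I check that every valid PairParity input is also a valid Recovery input. If $A$ satisfies the row-wise promise $\|A_{i,*}\|_1\in\{0,1\}$ and the pairwise-balance promise $b^{(A)}_{2r-1}+b^{(A)}_{2r}=1$ for all $r\in[d/2]$, then
\[
\sum_{i=1}^d \mathbf 1\{\|A_{i,*}\|_1=1\}=\sum_{r=1}^{d/2}\bigl(b^{(A)}_{2r-1}+b^{(A)}_{2r}\bigr)=\frac d2 .
\]
Hence $A\in\mathcal D_{d,K}$. Both problems use the same oracle $O_A$ and the same definition of $b^{(A)}$, so no query simulation overhead arises.

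Next, let $\mathcal A$ be any quantum algorithm that outputs $b^{(A)}$ with probability at least $\rho$ on every instance of $\mathcal D_{d,K}$, using $T$ queries. Build $\mathcal A'$ for PairParity as follows: run $\mathcal A$ on the given oracle $O_A$, obtain a candidate $\hat b\in\{0,1\}^d$, and output $\bigoplus_{r=1}^{d/2}\hat b_{2r-1}$. This classical post-processing uses no extra queries. Whenever $\hat b=b^{(A)}$, which happens with probability at least $\rho$, the output equals $\texttt{PairParity}^{d}\circ\texttt{Search}^{K}(A)$. So $\mathcal A'$ solves PairParity with success probability at least $\rho>1/2$ using $T$ queries, and Theorem~\ref{thm:pair-parity-search-lower-bound} gives $T=\Omega_\rho(d\sqrt K)$.

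There is no substantive obstacle. The one point to state carefully is the promise inclusion in the first step: a worst-case guarantee over the larger domain $\mathcal D_{d,K}$ applies in particular to the PairParity sub-domain. The fact that Theorem~\ref{thm:pair-parity-search-lower-bound} already covers every fixed $\rho\in(1/2,1)$ means no separate amplification is needed.
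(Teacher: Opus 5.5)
Your proposal is correct and follows the paper's proof essentially line for line. The steps match: PairParity instances lie in $\mathcal D_{d,K}$ because $\sum_i b^{(A)}_i = d/2$, the output is post-processed with no extra queries into $\bigoplus_{r=1}^{d/2}\hat b_{2r-1}$, and the bound then follows from Theorem~\ref{thm:pair-parity-search-lower-bound}, which already covers every fixed $\rho\in(1/2,1)$.
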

\begin{proof}
Let $\mathcal A$ be a quantum algorithm that outputs $b^{(A)}$ for every valid instance of
$\texttt{Recovery}^{d}\circ\texttt{Search}^{K}$ with success probability
at least $\rho$, using $T$ queries. We construct an algorithm
$\mathcal B$ for
$\texttt{PairParity}^{d}\circ\texttt{Search}^{K}$ using the same number
of queries.

Recall that a valid input of
$\texttt{PairParity}^{d}\circ\texttt{Search}^{K}$ satisfies
\[
    b_{2r-1}^{(A)}+b_{2r}^{(A)}=1,
    \qquad
    r\in[d/2].
\]
Consequently,
\[
    \sum_{i=1}^{d}b_i^{(A)}
    =
    \sum_{r=1}^{d/2}
    \left(
        b_{2r-1}^{(A)}+b_{2r}^{(A)}
    \right)
    =
    \frac d2.
\]
Thus every valid PairParity instance is also a valid instance of
$\texttt{Recovery}^{d}\circ\texttt{Search}^{K}$.

On input $A$, algorithm $\mathcal B$ runs $\mathcal A$ with the same
oracle $O_A$. If $\mathcal A$ outputs $\widehat b\in\{0,1\}^d$, then
$\mathcal B$ outputs
\[
    \bigoplus_{r=1}^{d/2}\widehat b_{2r-1}.
\]
Whenever $\widehat b=b^{(A)}$, this output equals
\[
    \bigoplus_{r=1}^{d/2}b_{2r-1}^{(A)}
    =
    \texttt{PairParity}^{d}\circ\texttt{Search}^{K}(A).
\]
Therefore, $\mathcal B$ succeeds whenever $\mathcal A$ succeeds. It has
success probability at least $\rho$ and uses exactly $T$ queries. By Theorem~\ref{thm:pair-parity-search-lower-bound},
\[
    T=\Omega_{\rho}(d\sqrt K).
\]
\end{proof}

\begin{lemma}[$\ell_1$ approximate recovery lower bound]
\label{lem:l1-approx-row-rare-search}
There exist universal constants $\alpha_0>1$ and $c_0>0$ such that the
following holds. For every even $d\in\mathbb N$, every $K\in\mathbb N$,
and every quantum algorithm $\mathcal A$ that, on every instance
$A\in\mathcal D_{d,K}$ of
$\texttt{Recovery}^{d}\circ\texttt{Search}^{K}$, makes at most
\[
    \frac{d\sqrt K}{\alpha_0}
\]
queries to $O_A$ and outputs a vector
$\widetilde b\in\mathbb R^d$, there exists an input
$A\in\mathcal D_{d,K}$ such that
\[
    \Pr\left[
        \|\widetilde b-b^{(A)}\|_1
        \geq c_0d
    \right]
    \geq\frac13.
\]
\end{lemma}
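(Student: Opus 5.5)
The plan is to reduce the exact-recovery lower bound (Theorem~\ref{thm:pair-parity-reduces-to-recovery}), or more precisely the composed-adversary argument behind Theorem~\ref{thm:pair-parity-search-lower-bound}, to the approximate setting using an error-correcting code on the outer layer. Parity of all pairs does not survive approximation, so I would change the outer promise function.

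\textbf{Setup.} Let $m=d/2$ and fix a binary linear code $\mathcal C\subseteq\{0,1\}^m$ with dimension $k=\Omega(m)$ and minimum distance at least $\beta m$, for a universal constant $\beta>0$. Random linear codes, or explicit Justesen-type codes, provide this. I would also like the dual code to have linear distance, for a reason explained in the last paragraph. Restrict to the inputs $A$ whose hidden vector has the form $b^{(A)}=\phi(c)$ with $c\in\mathcal C$, where $\phi$ is the pair map from the proof of Theorem~\ref{thm:pair-parity-search-lower-bound}. Every such $b^{(A)}$ is balanced, so these inputs lie in $\mathcal D_{d,K}$.

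\textbf{Decoding step.} Suppose $\mathcal A$ outputs $\widetilde b$ with $\|\widetilde b-b^{(A)}\|_1<c_0d$ with probability above $2/3$. Set $c_0:=\beta/8$.
\begin{itemize}
\item Round each pair of $\widetilde b$ to obtain $\widehat c\in\{0,1\}^m$. Every pair $r$ on which $\widehat c_r\neq c_r$ contributes at least $1/2$ to the $\ell_1$ error, since the true pair is $(c_r,1-c_r)$.
\item Hence $\widehat c$ is within Hamming distance $2c_0 d=\beta m/2$ of $c$.
\item Nearest-codeword decoding, which uses no queries, therefore returns $c$ exactly.
\item From $c$ we read off any Boolean function of the message, for instance a fixed nonzero linear functional $\langle u,\cdot\rangle$ of the message bits. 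This uses the same $T$ queries and succeeds with probability above $2/3$.
\end{itemize}
Arguing by contradiction, if $\mathcal A$ succeeds on every input with probability greater than $2/3$, we get a bounded-error algorithm for the composed partial function $F'=f'\circ g^d$. Here $g=\mathrm{OR}_K$ on its promise, exactly as before, and $f'(\phi(c))=\langle u,\mathrm{msg}(c)\rangle$ on the domain $\phi(\mathcal C)$.

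\textbf{Applying the adversary bound.} We already have $\mathrm{ADV}^{\pm}(g)=\Omega(\sqrt K)$. By the composition theorem \cite[Lemma A.3]{Kimmel_2013} and the characterization \cite{reichardt2011reflections},
\[
Q(F')=\Omega\bigl(\mathrm{ADV}^{\pm}(f')\sqrt K\bigr),
\]
so it remains to show $\mathrm{ADV}^{\pm}(f')=\Omega(d)$. Combined with the composition bound, this gives $T\ge d\sqrt K/\alpha_0$ for a suitable universal $\alpha_0$, contradicting the query budget. Hence some $A$ makes $\mathcal A$ fail, that is, has error at least $c_0d$, with probability at least $1/3$.

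\textbf{Main obstacle: $\mathrm{ADV}^{\pm}(f')=\Omega(d)$.} Here Ambainis's unweighted relation method from the PairParity proof breaks down. Codewords differ in at least $\beta m$ pairs, so the parameters $\ell,\ell'$ grow with the relation degree and the ratio $m_fm_f'/(\ell\ell')$ need not be $\Omega(m^2)$. My first attempt would be a spectral (weighted) adversary matrix $\Gamma$ indexed by codewords:
\begin{itemize}
\item Set $\Gamma[c,c']=1$ when $f'(c)\ne f'(c')$, so $\Gamma$ is built from the character structure of $\mathcal C$.
\item Each $\Gamma\circ D_i$ is then a shifted coset-indicator matrix whose norm can be computed by Fourier analysis over $\mathbb F_2^k$.
\item Using linear dual distance, so that any $o(m)$ coordinates are $k$-wise uniform under a random codeword, I would aim to show $\|\Gamma\|/\max_i\|\Gamma\circ D_i\|=\Omega(m)$.
\end{itemize}
If this direct computation gets stuck, the fallback is an information-theoretic route. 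A uniformly random codeword carries $k=\Omega(d)$ bits, while each block of $\sqrt K$ queries to a Search instance can leak only $O(1)$ bits about that row. Making this rigorous would require a quantum direct-sum or strong direct-product theorem for $\mathrm{OR}_K$ under the row promise, rather than the composition lemma.
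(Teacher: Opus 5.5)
\textbf{Gap: the outer adversary bound is never proved.} Everything in your argument hinges on $\mathrm{ADV}^{\pm}(f')=\Omega(d)$, and that step is left open. The earlier steps are fine: pairwise rounding, unique decoding within radius $\beta m/2$, and restricting to a subset of $\mathcal D_{d,K}$. But the routes you sketch for the key step do not deliver it.
\begin{itemize}
\item Your matrix $\Gamma[c,c']=1$ whenever $f'(c)\neq f'(c')$ is the all-ones adversary matrix. Take any pair-coordinate $r$ on which codewords are not constant. If $c_r$ is balanced and independent of $f'(c)$, then $\Gamma\circ D_{2r-1}$ splits into two all-ones blocks of half the side length; if instead $c_r=f'(c)$, it equals $\Gamma$. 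Either way $\|\Gamma\circ D_{2r-1}\|\ge\|\Gamma\|/2$, so this $\Gamma$ certifies only a constant. The same happens for plain parity, which is why the paper uses the distance-one relation there.
\item The choice of ``a fixed nonzero linear functional of the message'' can make $f'$ trivial. In a systematic code a message bit is a single codeword coordinate, so $Q(f')=O(1)$.
\item The direct-product fallback is not carried out.
\end{itemize}
Your route could be repaired as follows. Choose $v$ so that the coset $v+\mathcal C^{\perp}$ has minimum weight $\Omega(m)$; such a coset exists by counting, since there are $2^{\dim\mathcal C}=2^{\Omega(m)}$ cosets. Then show $\widetilde{\deg}(f')=\Omega(m)$, using that every character of degree below that weight is orthogonal to $(-1)^{\langle v,c\rangle}$ on $\mathcal C$. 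Finally invoke tightness of $\mathrm{ADV}^{\pm}$ to get $\mathrm{ADV}^{\pm}(f')=\Omega(m)$ before composing. As written, however, the proposal does not establish the lemma.

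\textbf{What the paper does instead.} It needs no code and no new outer bound. It keeps the full domain $\mathcal D_{d,K}$ and reduces approximate recovery to exact recovery by \emph{correcting} the residual errors with quantum search rather than decoding.
\begin{itemize}
\item If $\|\widetilde b-b^{(A)}\|_1\le\delta d$, rounding $\widehat b_i=\mathbf 1\{\widetilde b_i\ge 1/2\}$ leaves at most $2\delta d$ wrong coordinates.
\item Exact amplitude amplification applies because the success probability $1/K$ on a row with one marked entry is known. It gives a coherent, error-free $O(\sqrt K)$-query oracle for $b^{(A)}_i$, and hence for the mistake predicate $\widehat b_i\oplus b^{(A)}_i$.
\item Grover enumeration of at most $\lceil 2\delta d\rceil$ marked rows costs $O(\sqrt{\delta}\,d)$ calls to this oracle, i.e.\ $O(\sqrt{\delta}\,d\sqrt K)$ queries to $O_A$.
\item With $\delta$ small and $\alpha_0$ large, approximate recovery plus correction recovers $b^{(A)}$ exactly with probability $\tfrac23\cdot\tfrac56=\tfrac59>\tfrac12$ using fewer than $\beta d\sqrt K$ queries. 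This contradicts the already-proved exact-recovery bound (Theorem~\ref{thm:pair-parity-reduces-to-recovery}).
\end{itemize}
This cheap search-based correction is the idea missing from your proposal; with it, no lower bound for a new outer function is required.
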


\begin{proof}

By Theorem \ref{thm:pair-parity-reduces-to-recovery}, we know that there exists a universal constant \(\beta>0\) such that any
quantum algorithm that exactly recovers \(b^{(A)}\) with success probability at
least \(5/9\) requires at least
\[
    \beta d\sqrt K
\]
queries to \(O_A\).  

We now prove the approximate lower bound by contradiction.  Suppose that for
some \(\delta>0\) there exists a quantum algorithm \(\mathcal A\) making at most
\[
    \frac{d\sqrt K}{\alpha_0}
\]
queries to \(O_A\) such that, for every valid input \(A\),
\[
    \Pr\!\left[
        \|\widetilde b-b^{(A)}\|_1\le \delta d
    \right]\ge \frac23 .
\]
We will show that, for suitable universal constants \(\alpha_0\) and \(\delta\),
this gives an exact-recovery algorithm using fewer than \(\beta d\sqrt K\)
queries, contradicting the exact-recovery lower bound.

Condition on the event
\[
    \|\widetilde b-b^{(A)}\|_1\le \delta d .
\]
Define the thresholded vector \(\widehat b\in\{0,1\}^d\) by
\[
    \widehat b_i:=\mathbf 1\{\widetilde b_i\ge 1/2\}.
\]
Let
\[
    E:=\{i\in[d]:\widehat b_i\ne b_i^{(A)}\},
    \qquad
    r:=|E|.
\]
Whenever \(i\in E\), we have
\[
    |\widetilde b_i-b_i^{(A)}|\ge \frac12 .
\]
Therefore,
\[
    \frac r2
    \le
    \|\widetilde b-b^{(A)}\|_1
    \le
    \delta d,
\]
and hence
\[
    r\le 2\delta d .
\]

It remains to locate and correct the indices in \(E\).  Define the mistake
predicate
\[
    P(i)=1
    \quad\Longleftrightarrow\quad
    \widehat b_i\ne b_i^{(A)} .
\]
We next show that the corresponding phase oracle
\[
    O_P:\ket i\mapsto (-1)^{P(i)}\ket i
\]
can be implemented using \(O(\sqrt K)\) queries to \(O_A\).

For each row \(i\), define
\[
    \chi_i(j):=A_{i,j},\qquad j\in[K].
\]
Under the row-wise promise, \(\chi_i\) has either no marked element or a unique
marked element.  In the latter case, the uniform initial state
\[
    \ket{s}:=\frac1{\sqrt K}\sum_{j=1}^K\ket j
\]
has success probability
\[
    a=\frac1K,
\]
which is known.

We use the exact amplitude amplification procedure of
\cite[Section~2.1, Theorem~4]{brassard2002quantum}.  Let
\[
    \theta:=\arcsin(1/\sqrt K),\qquad
    q:=\left\lceil \frac{\pi}{4\theta}-\frac12\right\rceil,
\]
and set
\[
    \theta':=\frac{\pi}{4q+2},
    \qquad
    a':=\sin^2\theta' .
\]
Then \(a'\le a\).  Introduce a one-qubit ancilla and let \(B\) be the rotation
\[
    B\ket0
    =
    \sqrt{1-\frac{a'}a}\ket0
    +
    \sqrt{\frac{a'}a}\ket1 .
\]
Let \(U_K\ket0=\ket{s}\), and define
\[
    \mathcal U:=U_K\otimes B .
\]
The good predicate for the amplified search is
\[
    \chi_A'(i,j,c):=c\,A_{i,j}.
\]
The associated phase oracle is
\[
    S_{\chi_A'}\ket{i,j,c}
    =
    (-1)^{cA_{i,j}}\ket{i,j,c}.
\]
By Definition~\ref{def:recovery-search}, this is precisely the oracle
\(O_A\). Hence each application of \(S_{\chi_A'}\) uses one query to
\(O_A\).

Let \(S_0\) denote the phase flip about the all-zero state of the \((j,c)\)
registers, and define the amplitude amplification iterate
\[
    Q_A:=
    -\mathcal U S_0\mathcal U^\dagger S_{\chi_A'} ,
    \qquad
    G_A:=Q_A^q\mathcal U .
\]
The row index \(i\) is not measured.  The same circuit acts coherently on
superpositions of row indices.  Equivalently,
\[
    G_A
    =
    \sum_{i=1}^d
    \ket i\!\bra i\otimes G_i ,
\]
where \(G_i\) is the exact amplitude amplification circuit for the \(i\)-th row.


Let $\mathcal W$ denote all the work registers used by the exact
amplitude-amplification unitary $G_A$, including the registers $j$ and
$c$. By construction, $G_A$ uses $q=O(\sqrt K)$ queries to $O_A$ and
satisfies
\begin{equation} \label{eq:exact-row-detection}
        S_{\chi_A'}G_A
    \ket{i}\ket{0}_{\mathcal W}
    =
    (-1)^{b_i^{(A)}}
    G_A\ket{i}\ket{0}_{\mathcal W},
    \qquad i\in[d].
\end{equation}


To obtain a controlled phase oracle for $b_i^{(A)}$, introduce an
additional control qubit $z\in\{0,1\}$. Define
\[
    S_{\chi_A'}^{(z)}
    \ket{i,j,c,z}
    :=
    (-1)^{zcA_{i,j}}
    \ket{i,j,c,z}.
\]
This unitary can be implemented using one query to $O_A$. Indeed,
introduce an auxiliary qubit $t$ initialized to $\ket{0}$ and compute
$t=zc$ using a Toffoli gate. Applying $O_A$ with $t$ as its response
register, and then uncomputing $t$, gives
\[
\begin{aligned}
    \ket{i,j,c,z,0}_t
    &\longmapsto
    \ket{i,j,c,z,zc}_t                                      \\
    &\xmapsto{\,O_A^{(i,j,t)}\,}
    (-1)^{zcA_{i,j}}
    \ket{i,j,c,z,zc}_t                                      \\
    &\longmapsto
    (-1)^{zcA_{i,j}}
    \ket{i,j,c,z,0}_t.
\end{aligned}
\]
Thus, the auxiliary qubit $t$ is returned to $\ket{0}$ and
$S_{\chi_A'}^{(z)}$ uses exactly one query to $O_A$.

Now define
\[
    \widetilde O_b
    :=
    \bigl(G_A^\dagger\otimes I_z\bigr)
    S_{\chi_A'}^{(z)}
    \bigl(G_A\otimes I_z\bigr).
\]
Equation~\eqref{eq:exact-row-detection} implies that
\[
    S_{\chi_A'}^{(z)}
    G_A\ket{i}\ket{0}_{\mathcal W}\ket z
    =
    (-1)^{zb_i^{(A)}}
    G_A\ket{i}\ket{0}_{\mathcal W}\ket z.
\]
Consequently,
\[
    \widetilde O_b
    \ket{i,z}\ket{0}_{\mathcal W}
    =
    (-1)^{zb_i^{(A)}}
    \ket{i,z}\ket{0}_{\mathcal W}.
\]

Let $\widehat b\in\{0,1\}^d$ be the rounded estimate obtained above, and
define the error predicate
\[
    P(i)
    :=
    \widehat b_i\oplus b_i^{(A)}.
\]
Since $\widehat b$ is known classically, the unitary
\[
    D_{\widehat b}^{(z)}
    \ket{i,z}
    :=
    (-1)^{z\widehat b_i}\ket{i,z}
\]
requires no queries to $O_A$. Define
\[
    \widetilde O_P
    :=
    \bigl(D_{\widehat b}^{(z)}\otimes I_{\mathcal W}\bigr)
    \widetilde O_b.
\]
Then
\begin{align*}
    \widetilde O_P
    \ket{i,z}\ket{0}_{\mathcal W}
    &=
    (-1)^{z(\widehat b_i+b_i^{(A)})}
    \ket{i,z}\ket{0}_{\mathcal W} \\
    &=
    (-1)^{zP(i)}
    \ket{i,z}\ket{0}_{\mathcal W}.
\end{align*}
Hence $\widetilde O_P$ is the controlled phase-query oracle for the
error predicate $P$.

Finally, applying Hadamard gates to the response qubit gives the standard
bit-query oracle
\[
    B_P
    :=
    \bigl(I_i\otimes H_z\otimes I_{\mathcal W}\bigr)
    \widetilde O_P
    \bigl(I_i\otimes H_z\otimes I_{\mathcal W}\bigr),
\]
which satisfies
\[
    B_P
    \ket{i,z}\ket{0}_{\mathcal W}
    =
    \ket{i,z\oplus P(i)}\ket{0}_{\mathcal W}.
\]
Each application of $B_P$ uses
\[
    q+1+q
    =
    2q+1
    =
    O(\sqrt K)
\]
queries to $O_A$.

We now use quantum enumeration over the index set \([d]\) to find all marked
indices of \(P\). Let
\[
    R:=\lceil 2\delta d\rceil .
\]
On the conditioned event, \(r=|E|\le R\). Grover enumeration with an
upper bound \(R\) on the number of marked elements finds all marked indices
with success probability at least \(5/6\) using
\[
    O(\sqrt{dR})
\]
queries to \(B_P\).

If \(2\delta d<1\), then \(r=0\), since \(r\) is an integer, and no
correction is needed. Otherwise, since
\[
    R=\lceil 2\delta d\rceil\le 4\delta d
\]
whenever \(2\delta d\ge 1\), the number of queries to \(B_P\) is at most
\[
    C_{\mathrm{enum}}d\sqrt{\delta}.
\]
Since each application of \(B_P\) uses at most \(C\sqrt K\) queries to
\(O_A\), the total number of queries to \(O_A\) used in the correction step
is at most
\[
    C_{\mathrm{enum}}d\sqrt{\delta}\cdot C\sqrt K
    =
    C'd\sqrt K\sqrt{\delta},
\]
for another universal constant \(C'>0\).

After all indices in \(E\) have been found, we flip the corresponding bits of
\(\widehat b\).  On the event
\(\|\widetilde b-b^{(A)}\|_1\le \delta d\), and conditioned on the success of the
quantum enumeration procedure, this produces the exact vector \(b^{(A)}\).

Therefore, the approximate-recovery algorithm followed by the correction
procedure exactly recovers \(b^{(A)}\) with success probability at least
\[
    \frac23\cdot \frac56
    =
    \frac59 .
\]
Its total number of queries to \(O_A\) is at most
\[
    \frac{d\sqrt K}{\alpha_0}
    +
    C'd\sqrt K\sqrt{\delta}.
\]


Choose
\[
    \alpha_0
    >
    \max\left\{
        1,\frac{2}{\beta}
    \right\}.
\]
Then
\[
    \frac1{\alpha_0}<\frac{\beta}{2}.
\]
Next, choose
\[
    0<\delta<
    \min\left\{
        1,
        \left(
            \frac{\beta}{2C'}
        \right)^2
    \right\}.
\]
It follows that
\[
    C'\sqrt{\delta}<\frac{\beta}{2},
\]
and hence
\[
    \frac1{\alpha_0}
    +
    C'\sqrt{\delta}
    <
    \beta.
\]
Therefore, the resulting exact-recovery algorithm uses fewer than
\[
    \beta d\sqrt K
\]
queries to $O_A$, contradicting the exact-recovery lower bound.

Setting
\[
    c_0:=\delta
\]
completes the proof.

\end{proof}

\begin{lemma}[$\ell_2$ approximate recovery lower bound]
\label{lem:l2-approx-row-rare-search}
There exist universal constants $\alpha_0>1$ and $c_0>0$ such that the
following holds. For every even $d\in\mathbb N$, every $K\in\mathbb N$,
and every quantum algorithm $\mathcal A$ that, on every instance
$A\in\mathcal D_{d,K}$ of
$\texttt{Recovery}^{d}\circ\texttt{Search}^{K}$, makes at most
\[
    \frac{d\sqrt K}{\alpha_0}
\]
queries to $O_A$ and outputs a vector
$\widetilde b\in\mathbb R^d$, there exists an input
$A\in\mathcal D_{d,K}$ such that
\[
    \Pr\left[
        \|\widetilde b-b^{(A)}\|_2
        \geq c_0\sqrt d
    \right]
    \geq\frac13.
\]
\end{lemma}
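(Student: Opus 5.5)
The plan is to reduce the $\ell_2$ statement to the $\ell_1$ lower bound of Lemma~\ref{lem:l1-approx-row-rare-search}, paying only a constant-factor change in the accuracy constant. Let $\alpha_0^{(1)}$ and $c_0^{(1)}$ denote the constants supplied by Lemma~\ref{lem:l1-approx-row-rare-search}. I would take the same query budget, $\alpha_0:=\alpha_0^{(1)}$, and set $c_0:=c_0^{(1)}/2$ (any constant of this kind will do).

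Naively one would use Cauchy--Schwarz, $\|v\|_1\le\sqrt d\,\|v\|_2$. Then $\|\widetilde b-b^{(A)}\|_1\ge c_0^{(1)}d$ gives $\|\widetilde b-b^{(A)}\|_2\ge c_0^{(1)}\sqrt d$. The difficulty is that this only works if the two algorithms output the same $\widetilde b$. An arbitrary real-valued output could have a large $\ell_1$ error concentrated in a few huge coordinates, in which case a small $\ell_2$ error is impossible anyway. So Cauchy--Schwarz is in fact the correct direction: a large $\ell_1$ error forces a large $\ell_2$ error.

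Concretely, fix any algorithm $\mathcal A$ making at most $d\sqrt K/\alpha_0$ queries and outputting $\widetilde b\in\mathbb R^d$. Applying Lemma~\ref{lem:l1-approx-row-rare-search} to $\mathcal A$ itself yields an input $A\in\mathcal D_{d,K}$ with $\Pr[\|\widetilde b-b^{(A)}\|_1\ge c_0^{(1)}d]\ge 1/3$. On that event,
\[
\|\widetilde b-b^{(A)}\|_2\ \ge\ \frac{\|\widetilde b-b^{(A)}\|_1}{\sqrt d}\ \ge\ c_0^{(1)}\sqrt d\ \ge\ c_0\sqrt d .
\]
Hence $\Pr[\|\widetilde b-b^{(A)}\|_2\ge c_0\sqrt d]\ge 1/3$, which is the claim.

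I expect no genuine obstacle here. The only point to double-check is that the $\ell_1$ lemma quantifies over all algorithms with real-valued outputs, with no rounding assumed. It does, since its proof thresholds $\widetilde b$ internally, so it can be applied verbatim to $\mathcal A$. If one instead wanted a more robust route that first projects $\widetilde b$ onto $[0,1]^d$ (a $1$-Lipschitz map that can only decrease each coordinate error), the same inequality chain applies to the projected output, so either route works.
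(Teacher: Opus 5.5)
Your proposal is correct and is essentially the paper's proof. You apply Lemma~\ref{lem:l1-approx-row-rare-search} to the same algorithm $\mathcal A$ with the same query budget, and on the resulting event you use $\|x\|_1\le\sqrt d\,\|x\|_2$ to convert $\ell_1$ error at least $c_0 d$ into $\ell_2$ error at least $c_0\sqrt d$. The only differences are cosmetic: the paper keeps $c_0$ unchanged rather than halving it, and neither your detour about the ``naive'' direction nor the optional projection onto $[0,1]^d$ is needed.
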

\begin{proof}
Let $\alpha_0>1$ and $c_0>0$ be the constants in
Lemma~\ref{lem:l1-approx-row-rare-search}. 
Consider any quantum algorithm $\mathcal A$ making at most
\[
    \frac{d\sqrt K}{\alpha_0}
\]
queries to $O_A$ and outputting $\widetilde b\in\mathbb R^d$.
By Lemma~\ref{lem:l1-approx-row-rare-search}, there exists a valid input $A$
such that
\[
    \Pr\!\left[
        \|\widetilde b-b^{(A)}\|_1\ge c_0 d
    \right]\ge \frac13 .
\]
For every vector $x\in\mathbb R^d$, we have
\[
    \|x\|_1\le \sqrt d\,\|x\|_2 .
\]
Applying this to
\[
    x=\widetilde b-b^{(A)}
\]
gives
\[
    \|\widetilde b-b^{(A)}\|_2
    \ge
    \frac{\|\widetilde b-b^{(A)}\|_1}{\sqrt d}.
\]
Hence, on the event
\[
    \|\widetilde b-b^{(A)}\|_1\ge c_0 d,
\]
we also have
\[
    \|\widetilde b-b^{(A)}\|_2
    \ge
    c_0\sqrt d.
\]
Therefore,
\[
    \Pr\!\left[
        \|\widetilde b-b^{(A)}\|_2\ge c_0\sqrt d
    \right]
    \ge
    \Pr\!\left[
        \|\widetilde b-b^{(A)}\|_1\ge c_0 d
    \right]
    \ge
    \frac13 .
\]
This proves the lemma.
\end{proof}

\begin{theorem}[Quantum lower bound for heavy-tailed multivariate mean estimation]
\label{thm:row-wise-rare-spike-lower-bound}
Let $p\in(1,2]$. There exists a universal constant $c>0$ such that the
following holds. Let $d\geq2$ and suppose
\[
    n\geq d.
\]
For every quantum mean estimator that uses at most $n$ queries to the
quantum sampling oracle of a $d$-dimensional random variable, there
exists a random vector $\X$ with mean $\bmu=\EBP{\X}$ satisfying
\[
    \EBP{\|\X-\bmu\|^p}\leq\sigma^p
\]
such that the estimator outputs $\widetilde\bmu$ satisfying
\[
    \Pr\left[
        \|\widetilde\bmu-\bmu\|
        \geq
        c\,
        \frac{
            \sigma
            d^{\frac{2(p-1)}p-\frac12}
        }{
            n^{\frac{2(p-1)}p}
        }
    \right]
    \geq\frac13.
\]
\end{theorem}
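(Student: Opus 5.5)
The plan is to reduce the $\ell_2$ approximate-recovery problem of Lemma~\ref{lem:l2-approx-row-rare-search} to mean estimation. Given an instance $A\in\mathcal D_{d,K}$ of $\texttt{Recovery}^{d}\circ\texttt{Search}^{K}$, with $d$ even, I will define a rare-spike random vector whose mean is a known rescaling of the hidden vector $b^{(A)}$. Concretely, draw $i\in[d]$ and $j\in[K]$ independently and uniformly, and set
\[
    \X_A := M\,A_{i,j}\,e_i ,
\]
where $M>0$ will be fixed later.

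The quantum sampling oracle $O_{\X_A}$ maps $\ket{0}\ket{0}$ to $\frac{1}{\sqrt{dK}}\sum_{i,j}\ket{M A_{i,j} e_i}\ket{i,j}$. It can be implemented with one query to $O_A$ as follows:
\begin{itemize}
\item prepare the uniform superposition over $(i,j)$;
\item convert $O_A$ into the bit oracle $B_A$ via Hadamards on the $c$ register, as in the proof of Theorem~\ref{thm:pair-parity-search-lower-bound};
\item write $A_{i,j}$ into a fresh register;
\item reversibly compute $M A_{i,j} e_i$ from $(i,A_{i,j})$.
\end{itemize}
Hence an $n$-query mean estimator yields an $n$-query algorithm for the recovery problem.

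Next I will compute the mean and verify the moment condition. Since exactly $d/2$ rows contain a single one, the mean is
\[
    \bmu = \frac{M}{dK}\, b^{(A)},
\]
and $\|\X_A\|\in\{0,M\}$ with $\Pr(\|\X_A\|=M)=\frac{1}{2K}$. Using $(a+b)^p\le 2^{p-1}(a^p+b^p)$,
\[
    \EBP{\|\X_A-\bmu\|^p}\le 2^{p-1}\Bigl(\frac{M^p}{2K}+\|\bmu\|^p\Bigr).
\]
Here $\|\bmu\|=\frac{M}{K\sqrt{2d}}\le \frac{M}{K}$, so $\|\bmu\|^p\le M^p/K^p\le M^p/K$. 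The whole bound is therefore at most $C_p' M^p/K$. Choosing $M:=\sigma (K/C_p')^{1/p}$, i.e.\ $M\asymp \sigma K^{1/p}$, makes the moment constraint hold.

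From any estimate $\widetilde\bmu$ I will form $\widetilde b:=\frac{dK}{M}\widetilde\bmu$, which satisfies
\[
    \|\widetilde b-b^{(A)}\|_2=\frac{dK}{M}\|\widetilde\bmu-\bmu\| .
\]
I then pick $K$ so that the query budget meets the hypothesis of Lemma~\ref{lem:l2-approx-row-rare-search}: set $K:=\lfloor(n\alpha_0/d)^2\rfloor$, hmm, rather $K:=\lceil(\alpha_0 n/d)^2\rceil$, so that $n\le d\sqrt K/\alpha_0$. The assumption $n\ge d$ guarantees $K\ge1$ and $K\asymp (n/d)^2$ up to constants.

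Lemma~\ref{lem:l2-approx-row-rare-search} then gives an instance $A$ with $\|\widetilde b-b^{(A)}\|_2\ge c_0\sqrt d$ with probability at least $1/3$. On that event,
\[
    \|\widetilde\bmu-\bmu\|\ge c_0\frac{M\sqrt d}{dK}\asymp \sigma K^{\frac1p-1}d^{-\frac12}\asymp \sigma d^{-\frac12}\Bigl(\frac{n}{d}\Bigr)^{-\frac{2(p-1)}{p}},
\]
which equals $\sigma d^{\frac{2(p-1)}{p}-\frac12}n^{-\frac{2(p-1)}{p}}$. This is the claimed bound, and the random vector $\X:=\X_A$ is the witness.

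For odd $d$ I will embed the even dimension $d-1\ge 2$ into the first $d-1$ coordinates; this changes only constants. Two further points need care. First, the rounding of $K$ to an integer must keep both $n\le d\sqrt K/\alpha_0$ and $K\lesssim (n/d)^2$; this holds because $\alpha_0 n/d\ge 1$. Second, the constant must be uniform in $p$: $C_p'$ is bounded for $p\in(1,2]$ and $C_p'^{-1/p}$ is bounded below, so a single universal $c$ works.

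I expect the main obstacle to be checking that $O_{\X_A}$ exactly matches Definition~\ref{def:quantum-sampling-oracle} using one query, including the garbage register $\ket{i,j}$. The probability bookkeeping is routine once this simulation and the choice $M\asymp\sigma K^{1/p}$ are in place.
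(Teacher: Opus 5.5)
Your proposal is correct and is essentially the paper's proof. It uses the same rare-spike vector $\X^{(A)}(i,j)=R\,A_{i,j}e_i$ with $R\asymp\sigma K^{1/p}$ and $K=\lceil(\alpha_0 n/d)^2\rceil$, the same one-query simulation of the sampling oracle via the Hadamard-conjugated bit oracle (with $(i,j,A_{i,j})$ kept as the auxiliary state), the rescaling $\widetilde b=\frac{dK}{R}\widetilde\bmu$ fed into Lemma~\ref{lem:l2-approx-row-rare-search}, and zero-padding for odd $d$. The only differences are cosmetic, such as your moment constant $3\cdot 2^{p-2}$ versus the paper's cruder $2^p$ (which gives $R=(\sigma/2)K^{1/p}$).
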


\begin{proof}
We first prove the result when $d$ is even. Let $\alpha_0>1$ and
$c_0>0$ be the constants from
Lemma~\ref{lem:l2-approx-row-rare-search}. Choose
\[
    K
    :=
    \left\lceil
        \left(
            \frac{\alpha_0n}{d}
        \right)^2
    \right\rceil.
\]
Since $n\geq d$ and $\alpha_0>1$, we have
\[
    \frac{\alpha_0n}{d}\geq\alpha_0>1.
\]
Therefore,
\[
    \left(
        \frac{\alpha_0n}{d}
    \right)^2
    \leq K
    \leq
    \left(
        \frac{\alpha_0n}{d}
    \right)^2+1
    \leq
    2\left(
        \frac{\alpha_0n}{d}
    \right)^2.
\]
Consequently,
\[
    K
    =
    \Theta\left(
        \frac{n^2}{d^2}
    \right).
\]
Moreover,
\[
    K
    \geq
    \left(
        \frac{\alpha_0n}{d}
    \right)^2
\]
implies
\[
    n
    \leq
    \frac{d\sqrt K}{\alpha_0}.
\]
Thus Lemma~\ref{lem:l2-approx-row-rare-search} applies.

Fix an arbitrary input $A\in\{0,1\}^{d\times K}\in\mathcal D_{d,K}$ of
$\texttt{Recovery}^{d}\circ\texttt{Search}^{K}$. Let
\[
    \Omega=[d]\times[K],
    \qquad
    \mathbb P(\omega=(i,j))=\frac1{dK}.
\]
For a scale parameter $R>0$ to be chosen later, define the random vector
\[
    \X^{(A)}:\Omega\to\mathbb R^d
\]
by
\[
    \X^{(A)}(i,j):=R A_{i,j}e_i,
\]
where $e_i\in\mathbb R^d$ is the $i$-th standard basis vector.

First, compute the mean. For each coordinate $i\in[d]$,
\begin{align*}
    \bmu_i
    :=
    \EBP{\X^{(A)}_i}
    &=
    \sum_{j=1}^K \frac1{dK} R A_{i,j} \\
    &=
    \frac{R}{dK}\sum_{j=1}^K A_{i,j} \\
    &=
    \frac{R}{dK}b_i^{(A)}.
\end{align*}
Hence
\begin{equation}\label{equ: mean and b}
        \bmu:=\mathbb E[\X^{(A)}]
    =
    \frac{R}{dK}b^{(A)}.
\end{equation}

Next, we verify the $p$-th central moment condition. By the balance promise, exactly
$d/2$ entries of $A$ are equal to $1$. Therefore
\[
    \mathbb P(\X^{(A)}\ne 0)
    =
    \frac{d/2}{dK}
    =
    \frac1{2K}.
\]
Since $\|X^{(A)}(i,j)\|_2=R$ whenever $A_{i,j}=1$ and equals $0$ otherwise, we have
\[
    \mathbb E\|\X^{(A)}\|^p
    =
    \frac{R^p}{2K}.
\]
Also, since $b^{(A)}$ has Hamming weight $d/2$,
\[
    \|b^{(A)}\|=\sqrt{\frac d2}.
\]
Thus
\[
    \|\bmu\|
    =
    \frac{R}{dK}\sqrt{\frac d2}
    =
    \frac{R}{K\sqrt{2d}},
\]
and therefore
\[
    \|\bmu\|^p
    =
    \frac{R^p}{K^p(2d)^{p/2}}.
\]
Using the inequality
\[
    \|x-y\|^p
    \le
    2^{p-1}\bigl(\|x\|^p+\|y\|^p\bigr),
    \qquad p\ge 1,
\]
we obtain
\begin{align*}
    \mathbb E\|\X^{(A)}-\bmu\|^p
    &\le
    2^{p-1}\mathbb E\|\X^{(A)}\|^p
    +
    2^{p-1}\|\bmu\|^p \\
    &=
    2^{p-1}\frac{R^p}{2K}
    +
    2^{p-1}\frac{R^p}{K^p(2d)^{p/2}}.
\end{align*}
Since $K\ge 1$, $d\ge 1$, and $p>1$,
\[
    \frac1{K^p(2d)^{p/2}}
    \le
    \frac1K.
\]
Hence
\[
    \mathbb E\|\X^{(A)}-\bmu\|^p
    \le
    2^{p-1}\frac{R^p}{2K}
    +
    2^{p-1}\frac{R^p}{K}
    \le
    2^p\frac{R^p}{K}.
\]
Choose
\[
    R:=\frac{\sigma}{2}K^{1/p}.
\]
Then
\[
    2^p\frac{R^p}{K}
    =
    2^p\frac{\sigma^p K/2^p}{K}
    =
    \sigma^p.
\]
Therefore
\[
    \mathbb E\|\X^{(A)}-\bmu\|^p\le \sigma^p.
\]


Let $\mathcal M$ be an arbitrary quantum mean estimator that uses at
most $n$ queries to the quantum sampling oracle. We now show explicitly
that a quantum sampling oracle for $\X^{(A)}$ can be implemented using
one query to $O_A$.

Introduce index registers $\mathsf I$ and $\mathsf J$, a one-qubit
register $\mathsf C$, and a value register $\mathsf X$. Let
$U_{\mathrm{unif}}$ be an input-independent unitary satisfying
\[
    U_{\mathrm{unif}}
    \ket{0}_{\mathsf I}\ket{0}_{\mathsf J}
    =
    \frac{1}{\sqrt{dK}}
    \sum_{i=1}^d\sum_{j=1}^K
    \ket{i}_{\mathsf I}\ket{j}_{\mathsf J}.
\]
Since $U_{\mathrm{unif}}$ is independent of $A$, it requires no queries
to $O_A$.

Recall that the oracle for $A$ is
\[
    O_A\ket{i,j,c}
    =
    (-1)^{cA_{i,j}}\ket{i,j,c}.
\] 
Conjugating the register $\mathsf C$ by Hadamard gates gives the
corresponding bit-query oracle
\[
    B_A
    :=
    \bigl(I_{\mathsf I\mathsf J}\otimes H_{\mathsf C}\bigr)
    O_A
    \bigl(I_{\mathsf I\mathsf J}\otimes H_{\mathsf C}\bigr).
\]
Indeed,
\[
    B_A\ket{i,j,c}
    =
    \ket{i,j,c\oplus A_{i,j}}.
\]
Therefore, $B_A$ uses exactly one query to $O_A$.

Next, let $V_R$ be the query-free reversible operation satisfying
\[
    V_R
    \ket{i}_{\mathsf I}\ket{j}_{\mathsf J}
    \ket{c}_{\mathsf C}\ket{0}_{\mathsf X}
    =
    \ket{i}_{\mathsf I}\ket{j}_{\mathsf J}
    \ket{c}_{\mathsf C}\ket{Rc e_i}_{\mathsf X}.
\]
Here the vector $Rc e_i$ is stored using the finite-precision encoding
assumed for the sampling-oracle model.

Define the quantum sampling oracle $O_{\X^{(A)}}$ by the circuit that
first applies $U_{\mathrm{unif}}$, then $B_A$, and finally $V_R$. On the
all-zero input, it prepares
\begin{align*}
    O_{\X^{(A)}}
    \ket{0}_{\mathsf X}
    \ket{0}_{\mathsf I}
    \ket{0}_{\mathsf J}
    \ket{0}_{\mathsf C}
    &=
    \frac{1}{\sqrt{dK}}
    \sum_{i=1}^d\sum_{j=1}^K
    \ket{R A_{i,j}e_i}_{\mathsf X}
    \ket{i,j,A_{i,j}}_{\mathsf I\mathsf J\mathsf C}.
\end{align*}

We verify that this is a quantum sampling oracle in the sense of
Definition~\ref{def:quantum-sampling-oracle}. Let
\[
    I_1(A):=\{i\in[d]:b_i^{(A)}=1\}.
\]
For every $i\in I_1(A)$, let $j_i$ denote the unique index such that
$A_{i,j_i}=1$. Define
\[
    \ket{\psi_0^{(A)}}
    :=
    \frac{1}{\sqrt{dK-d/2}}
    \sum_{\substack{i\in[d],\,j\in[K]\\A_{i,j}=0}}
    \ket{i,j,0}
\]
and, for every $i\in I_1(A)$,
\[
    \ket{\psi_{R e_i}^{(A)}}
    :=
    \ket{i,j_i,1}.
\]
Because every valid $A\in\mathcal D_{d,K}$ contains exactly $d/2$
entries equal to one, the state prepared above can be written as
\begin{align*}
    O_{\X^{(A)}}\ket{0}\ket{0}
    &=
    \sqrt{1-\frac{1}{2K}}\,
    \ket{0}\ket{\psi_0^{(A)}}
    +
    \sum_{i\in I_1(A)}
    \frac{1}{\sqrt{dK}}\,
    \ket{R e_i}\ket{\psi_{R e_i}^{(A)}}.
\end{align*}
On the other hand,
\[
    \Pr[\X^{(A)}=0]
    =
    1-\frac{1}{2K},
\]
and, for every $i\in I_1(A)$,
\[
    \Pr[\X^{(A)}=R e_i]
    =
    \frac{1}{dK}.
\]
Thus,
\[
    O_{\X^{(A)}}\ket{0}\ket{0}
    =
    \sum_x
    \sqrt{p_{\X^{(A)}}(x)}
    \ket{x}\ket{\psi_x^{(A)}},
\]
so $O_{\X^{(A)}}$ is a valid quantum sampling oracle for
$\X^{(A)}$.

Notice that the register containing $A_{i,j}$ need not be uncomputed:
it is included in the auxiliary state $\ket{\psi_x^{(A)}}$, as permitted
by the definition of a quantum sampling oracle. Consequently, one
application of $O_{\X^{(A)}}$ uses exactly one query to $O_A$.
Likewise, $O_{\X^{(A)}}^\dagger$ can be implemented using one query to
$O_A$ by reversing the above circuit.

Therefore, every query made by $\mathcal M$ to the quantum sampling
oracle, or its inverse, can be simulated using one query to $O_A$.
By \eqref{equ: mean and b}, running $\mathcal M$ with this oracle induces
an algorithm for
$\texttt{Recovery}^{d}\circ\texttt{Search}^{K}$ using at most $n$
queries to $O_A$. 

Let $\widetilde\bmu$ be the output of the mean estimator and define
\[
    \widetilde b
    :=
    \frac{dK}{R}\widetilde\bmu.
\]
Since
\[
    \bmu=\frac{R}{dK}b^{(A)},
\]
we have
\[
    \widetilde b-b^{(A)}
    =
    \frac{dK}{R}(\widetilde\bmu-\bmu),
\]
and hence
\[
    \|\widetilde b-b^{(A)}\|
    =
    \frac{dK}{R}\|\widetilde\bmu-\bmu\|.
\]
Because $n\le d\sqrt K/\alpha_0$, Lemma~\ref{lem:l2-approx-row-rare-search} implies that
there exists an input $A$ such that, with probability at least $1/3$,
\[
    \|\widetilde b-b^{(A)}\|
    \ge
    c_0\sqrt d.
\]
For this input $A$,
\[
    \frac{dK}{R}\|\widetilde\bmu-\bmu\|
    \ge
    c_0\sqrt d,
\]
and therefore
\[
    \|\widetilde\bmu-\bmu\|
    \ge
    c_0\frac{R\sqrt d}{dK}
    =
    c_0\frac{R}{\sqrt d\,K}.
\]
Substituting $R=(\sigma/2)K^{1/p}$ gives
\[
    \|\widetilde\bmu-\bmu\|
    \ge
    \frac{c_0}{2}
    \frac{\sigma K^{1/p}}{\sqrt d\,K}
    =
    \frac{c_0}{2}
    \frac{\sigma}{\sqrt d\,K^{(p-1)/p}}.
\]
Since
\[
    K=\Theta\!\left(\frac{n^2}{d^2}\right),
\]
we obtain
\[
    \frac1{\sqrt d\,K^{(p-1)/p}}
    =
    \Omega\!\left(
        \frac{
            d^{\frac{2(p-1)}p-\frac12}
        }{
            n^{\frac{2(p-1)}p}
        }
    \right).
\]
Thus
\[
    \|\widetilde\bmu-\bmu\|
    \ge
    \Omega\!\left(
        \frac{
            \sigma\, d^{\frac{2(p-1)}p-\frac12}
        }{
            n^{\frac{2(p-1)}p}
        }
    \right)
\]
with probability at least $1/3$, as claimed.

It remains to consider odd $d$. Let
\[
    d_0:=d-1.
\]
Then $d_0$ is even and $d/2\leq d_0\leq d$. Given any
$d_0$-dimensional random vector $\Y$, define its zero-padded embedding
\[
    \overline{\Y}:=(\Y,0)\in\mathbb R^d.
\]
A quantum sampling oracle for $\overline{\Y}$ can be implemented from one
for $\Y$ without any additional oracle queries. Moreover,
\[
    \mathbb E
    \left\|
        \overline{\Y}-\mathbb E[\overline{\Y}]
    \right\|^p
    =
    \mathbb E\|\Y-\mathbb E[\Y]\|^p.
\]

Let $\mathcal M$ be an arbitrary mean estimator in dimension $d$. Running
$\mathcal M$ on $\overline{\Y}$ and returning the first $d_0$ coordinates
of its output defines a mean estimator in dimension $d_0$ using the same
number of queries. Since $n\geq d\geq d_0$, the even-dimensional result
applies.

Let
\[
    \beta:=\frac{2(p-1)}{p}-\frac12.
\]
Since $p\in(1,2]$, we have $-1/2<\beta\leq1/2$. Therefore,
using $d/2\leq d_0\leq d$,
\[
    d_0^\beta
    \geq
    2^{-1/2}d^\beta.
\]
Furthermore, projection onto the first $d_0$ coordinates does not increase
the Euclidean norm. Hence, a lower bound for the projected estimate also
gives the same lower bound, up to a universal constant factor, for the
full $d$-dimensional estimate. Absorbing the factor $2^{-1/2}$ into constant $c$
proves the result for odd $d$.
\end{proof}

\begin{corollary}[Bounded-error quantum query lower bound for
heavy-tailed mean estimation]
\label{cor:epsilon-heavy-tailed-mean-lower-bound}
Let $p\in(1,2]$ be fixed. There exist constants $c_0,c_p>0$ such
that the following holds. Let $d\geq2$ and suppose
\[
    0<\epsilon
    \leq
    c_0\frac{\sigma}{\sqrt d}.
\]
Any quantum mean estimator that, for every $d$-dimensional random vector
$\X$ satisfying
\[
    \mathbb E
    \left\|
        \X-\mathbb E[\X]
    \right\|^p
    \leq
    \sigma^p,
\]
outputs an estimate $\widetilde{\bmu}$ satisfying
\[
    \Pr\left[
        \left\|
            \widetilde{\bmu}-\mathbb E[\X]
        \right\|
        \leq\epsilon
    \right]
    \geq
    \frac23
\]
must use at least
\[
    c_p\,
    d^{\frac{3p-4}{4(p-1)}}
    \left(
        \frac{\sigma}{\epsilon}
    \right)^{\frac{p}{2(p-1)}}
\]
queries to the quantum sampling oracle. Equivalently, the quantum query
complexity is
\[
    \Omega\left(
        d^{\frac{3p-4}{4(p-1)}}
        \left(
            \frac{\sigma}{\epsilon}
        \right)^{\frac{p}{2(p-1)}}
    \right).
\]
\end{corollary}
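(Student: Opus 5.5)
The plan is to invert Theorem~\ref{thm:row-wise-rare-spike-lower-bound} by contraposition. Suppose a quantum mean estimator achieves $\Pr[\|\widetilde\bmu-\bmu\|\le\epsilon]\ge 2/3$ for every admissible $\X$ using $n$ queries. First we reduce to the regime $n\ge d$, where Theorem~\ref{thm:row-wise-rare-spike-lower-bound} applies. If $n<d$, we simply pad the algorithm with dummy queries so that it makes exactly $d$ queries. This is harmless: the target bound is itself at least of order $d$ in this regime. Indeed, with $\epsilon\le c_0\sigma/\sqrt d$, i.e.\ $\sigma/\epsilon\ge\sqrt d/c_0$, we get
\[
d^{\frac{3p-4}{4(p-1)}}(\sigma/\epsilon)^{\frac{p}{2(p-1)}}\gtrsim d^{\frac{3p-4}{4(p-1)}+\frac{p}{4(p-1)}}=d .
\]
So for the conclusion it suffices to work with $n'=\max\{n,d\}$ and show $n'\gtrsim$ target. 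If $n'=d$, the bound is then immediate up to constants, provided $c_p$ is chosen small enough.

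For $n\ge d$, Theorem~\ref{thm:row-wise-rare-spike-lower-bound} yields an admissible $\X$ for which
\[
\|\widetilde\bmu-\bmu\|\ge c\,\sigma d^{\frac{2(p-1)}{p}-\frac12}n^{-\frac{2(p-1)}{p}}
\]
with probability at least $1/3$. The estimator succeeds with probability at least $2/3$, so the two events overlap only if $\epsilon$ is at least this threshold. To get a genuine contradiction we need strictness, because $1/3+2/3=1$ leaves no room. I will handle this either by first boosting the success probability to $5/6$ with the geometric-median trick of Lemma~\ref{lem:proxy} (a constant number of repetitions, constant-factor loss in $\epsilon$), or by applying the theorem with threshold $2\epsilon$ in place of $\epsilon$. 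Either way we obtain
\[
\epsilon\ge c'\sigma d^{\frac{2(p-1)}{p}-\frac12}n^{-\frac{2(p-1)}{p}} .
\]

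Rearranging gives
\[
n^{\frac{2(p-1)}{p}}\ge c'(\sigma/\epsilon)\,d^{\frac{4p-4-p}{2p}} .
\]
Raising both sides to the power $p/(2(p-1))$ yields
\[
n\gtrsim(\sigma/\epsilon)^{\frac{p}{2(p-1)}}d^{\frac{3p-4}{4(p-1)}},
\]
which is the claim, with $c_p$ absorbing $c'^{p/(2(p-1))}$ and the probability-boosting constants.

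The main obstacle I expect is the probability bookkeeping: matching the ``$\ge1/3$ failure'' conclusion against a ``$\ge2/3$ success'' hypothesis. I would resolve it via Lemma~\ref{lem:proxy} amplification, taking the median over $O(1)$ independent runs. Since each run's estimate is within $\epsilon$ with probability $2/3$, the geometric median is within $3\epsilon$ with probability at least $5/6$. The odd-$d$ and $d\ge2$ cases are already covered by Theorem~\ref{thm:row-wise-rare-spike-lower-bound}.
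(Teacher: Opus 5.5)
This is essentially the paper's argument: invert Theorem~\ref{thm:row-wise-rare-spike-lower-bound} and solve $\epsilon \gtrsim \sigma d^{\alpha-1/2} n^{-\alpha}$ for $n$, with $\alpha = 2(p-1)/p$. Your exponent bookkeeping is correct, including the check that the target bound is at least $c_0^{-p/(2(p-1))} d$. You are more careful than the paper on two points it glosses over. The first is the requirement $n \ge d$. The second is that a failure probability $\ge 1/3$ does not contradict a success probability $\ge 2/3$, since $1/3 + 2/3 = 1$. Each of your fixes needs a small repair.

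First, the case where the query count is below $d$. A lower bound on $n' = \max\{n, d\} = d$ says nothing about $n$, and no choice of $c_p$ rescues this. The correct conclusion is that this case cannot occur. Apply the theorem to the padded algorithm with $n' = d$: it errs by at least $c\sigma/\sqrt d$ with probability at least $1/3$. This is incompatible with the (boosted) accuracy guarantee at radius $O(\epsilon) \le O(c_0)\,\sigma/\sqrt d$ once $c_0$ is small compared to $c$. So it is $c_0$, not $c_p$, that must be taken small, as the paper's last sentence indicates.

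Second, Lemma~\ref{lem:proxy} with factor $3$ (i.e.\ $\alpha = 1/4$) needs a $3/4$ fraction of the runs to land within $\epsilon$. That fails when each run succeeds only with probability $2/3$. Take $\alpha < 1/6$ instead, e.g.\ $\alpha = 1/12$. Then $O(1)$ independent runs plus Hoeffding give radius $7\epsilon$ with probability at least $5/6$. Apply the theorem (and the padding) to the amplified algorithm with its $kn$ queries, and absorb $k$ and the factor $7$ into $c_p$ and $c_0$. Your alternative of ``using threshold $2\epsilon$'' does not break the $1/3 + 2/3 = 1$ tie by itself, so keep the amplification route.
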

\begin{proof}
Define
\[
    \alpha:=\frac{2(p-1)}{p}.
\]
By Theorem~\ref{thm:row-wise-rare-spike-lower-bound}, a quantum mean
estimator using $n$ queries incurs, on some admissible random vector, an
estimation error of order at least
\[
    \frac{\sigma d^{\alpha-\frac12}}{n^\alpha}
\]
with probability at least $1/3$. Therefore, achieving error at most
$\epsilon$ with constant success probability requires
\[
    n^\alpha
    =
    \Omega\left(
        \frac{\sigma d^{\alpha-\frac12}}{\epsilon}
    \right).
\]
It follows that
\[
    n
    =
    \Omega\left(
        \left(\frac{\sigma}{\epsilon}\right)^{1/\alpha}
        d^{\frac{\alpha-\frac12}{\alpha}}
    \right).
\]
Since
\[
    \frac1\alpha
    =
    \frac{p}{2(p-1)}
\]
and
\[
    \frac{\alpha-\frac12}{\alpha}
    =
    \frac{3p-4}{4(p-1)},
\]
we obtain
\[
    n
    =
    \Omega\left(
        d^{\frac{3p-4}{4(p-1)}}
        \left(
            \frac{\sigma}{\epsilon}
        \right)^{\frac{p}{2(p-1)}}
    \right).
\]

Finally, the assumed condition
$\epsilon\leq c_0\sigma/\sqrt d$, for a sufficiently small constant
$c_0$, ensures that the resulting lower bound is at least $d$.
Therefore, it lies in the query regime required by
Theorem~\ref{thm:row-wise-rare-spike-lower-bound}.
\end{proof}

\begin{corollary}[Expected-query quantum lower bound under $L_p$ error]
\label{cor:expected-query-lower-bound-low-dimensional with pth moment error}
Let $p\in(1,2]$ be fixed. There exist constants
$\bar c_0,\bar c_p>0$ such that the following holds. Let
$\mathcal P_\sigma$ denote the set of all $d$-dimensional random vectors
$\X$ with mean $\bmu=\EBP{\X}$ satisfying
\[
    \EBP{\|\X-\bmu\|^p}\leq\sigma^p.
\]
Let $d\geq2$ and suppose
\[
    0<\epsilon
    \leq
    \bar c_0\frac{\sigma}{\sqrt d}.
\]
Then any quantum algorithm that, for every $\X\in\mathcal P_\sigma$,
outputs an estimator $\widetilde{\bmu}$ satisfying
\[
    \EBP{
        \|\widetilde{\bmu}-\bmu\|^p
    }
    \leq
    \epsilon^p
\]
must make at least
\[
    \Omega\left(
        d^{\frac{3p-4}{4(p-1)}}
        \left(
            \frac{\sigma}{\epsilon}
        \right)^{\frac{p}{2(p-1)}}
    \right)
\]
queries in expectation to the quantum sampling oracle $O_{\X}$. 
\end{corollary}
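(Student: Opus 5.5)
The plan is to reuse the Markov-plus-truncation reduction from Corollary~\ref{coro: expected query lower bound for heavy-tailed mean estimation with p-th moment error}, now feeding into the dimension-dependent bound of Corollary~\ref{cor:epsilon-heavy-tailed-mean-lower-bound} instead of Theorem~\ref{thm: lower bounds 1 for heavy-tailed mean estimation}. Let $\mathcal A$ be an algorithm achieving $\EBP{\|\widetilde{\bmu}-\bmu\|^p}\le\epsilon^p$ on every $\X\in\mathcal P_\sigma$. Let $T$ be its worst-case expected number of queries, and let $Q$ denote the random query count.

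First, Markov's inequality applied to $Q$ gives $\Pr(Q>6T)\le 1/6$. Applied to $\|\widetilde{\bmu}-\bmu\|^p$, it gives $\Pr(\|\widetilde{\bmu}-\bmu\|>6^{1/p}\epsilon)\le 1/6$. A union bound then yields
\[
\Pr\bigl(Q\le 6T \text{ and } \|\widetilde{\bmu}-\bmu\|\le 6^{1/p}\epsilon\bigr)\ge 2/3 .
\]

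Second, define $\mathcal A'$ to run $\mathcal A$ but halt and output an arbitrary vector as soon as more than $6T$ queries would be made. Then $\mathcal A'$ uses at most $6T$ queries in the worst case. It is a bounded-error estimator with accuracy $\epsilon':=6^{1/p}\epsilon$ and success probability at least $2/3$, uniformly over $\mathcal P_\sigma$.

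Third, apply Corollary~\ref{cor:epsilon-heavy-tailed-mean-lower-bound} at accuracy $\epsilon'$. This requires $\epsilon'\le c_0\sigma/\sqrt d$, and it is the only place the constant $\bar c_0$ enters: set $\bar c_0:=6^{-1/p}c_0$, so the hypothesis $\epsilon\le\bar c_0\sigma/\sqrt d$ gives exactly what is needed. The corollary then yields
\[
6T\ge c_p\, d^{\frac{3p-4}{4(p-1)}}\left(\frac{\sigma}{6^{1/p}\epsilon}\right)^{\frac{p}{2(p-1)}} .
\]
The factor $6^{-\frac{1}{2(p-1)}}$ is a constant depending only on $p$, so $T=\Omega\bigl(d^{\frac{3p-4}{4(p-1)}}(\sigma/\epsilon)^{\frac{p}{2(p-1)}}\bigr)$.

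The main point needing care is the legitimacy of the truncation in the quantum setting. Stopping after $6T$ queries must still be a valid quantum query algorithm. Here the query count of $\mathcal A$ is a classical random variable determined by intermediate measurements, so the halting rule is classically controlled and well defined. I would also note that "expected queries" means the worst case over inputs of the expectation, so Markov's inequality applies input by input. With these points settled, the remaining steps are routine constant bookkeeping.
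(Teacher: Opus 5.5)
Your proposal is correct and matches the paper's proof essentially step for step. Both apply Markov's inequality to the query count and to $\|\widetilde{\bmu}-\bmu\|^p$, take a union bound, truncate at $6T$ queries, and invoke Corollary~\ref{cor:epsilon-heavy-tailed-mean-lower-bound} at accuracy $6^{1/p}\epsilon$ with $\bar c_0 := 6^{-1/p}c_0$; your justification that the truncation is a valid, classically controlled halting rule is a reasonable point the paper leaves implicit.
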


\begin{proof}
Let $\mathcal A$ be a quantum algorithm satisfying the guarantees in
Problem~\ref{prob: obtain unbiased mean estimate for heavy tail random variable}.
For each $\X\in\mathcal P_\sigma$, let $Q_{\X}$ denote the number of
queries made by $\mathcal A$, and define its worst-case expected query
complexity by
\[
    T
    :=
    \sup_{\X\in\mathcal P_\sigma}
    \EBP{Q_{\X}}.
\]
If $T=\infty$, the result is immediate, so assume $T<\infty$.

For every $\X\in\mathcal P_\sigma$, Markov's inequality gives
\[
    \Pr(Q_{\X}>6T)
    \leq
    \frac{\EBP{Q_{\X}}}{6T}
    \leq
    \frac16.
\]
Moreover, since
\[
    \EBP{
        \|\widetilde{\bmu}-\bmu\|^p
    }
    \leq
    \epsilon^p,
\]
another application of Markov's inequality gives
\[
    \Pr\left(
        \|\widetilde{\bmu}-\bmu\|
        >
        6^{1/p}\epsilon
    \right)
    \leq
    \frac16.
\]
Therefore, by the union bound,
\[
    \Pr\left(
        Q_{\X}\leq6T
        \ \text{and}\
        \|\widetilde{\bmu}-\bmu\|
        \leq6^{1/p}\epsilon
    \right)
    \geq
    \frac23.
\]

Define a truncated algorithm $\mathcal A'$ as follows. Run $\mathcal A$,
but terminate it once it attempts to make more than $6T$ queries. If
this happens, output an arbitrary vector. Then $\mathcal A'$ uses at
most $6T$ queries in the worst case and, for every
$\X\in\mathcal P_\sigma$, outputs an estimate satisfying
\[
    \|\widetilde{\bmu}-\bmu\|
    \leq
    6^{1/p}\epsilon
\]
with probability at least $2/3$. The truncation may destroy the unbiasedness of the estimator. This causes
no difficulty, because
Corollary~\ref{cor:epsilon-heavy-tailed-mean-lower-bound} applies to
arbitrary estimators and requires only a bounded-error guarantee.

Set
\[
    \epsilon'
    :=
    6^{1/p}\epsilon.
\]
Choose
\[
    \bar c_0
    :=
    6^{-1/p}c_0,
\]
where $c_0$ is the constant from
Corollary~\ref{cor:epsilon-heavy-tailed-mean-lower-bound}. The assumed
condition on $\epsilon$ then implies
\[
    \epsilon'
    =
    6^{1/p}\epsilon
    \leq
    c_0\frac{\sigma}{\sqrt d}.
\]
Thus, Corollary~\ref{cor:epsilon-heavy-tailed-mean-lower-bound} applies
to $\mathcal A'$ and gives
\[
    6T
    \geq
    c_p\,
    d^{\frac{3p-4}{4(p-1)}}
    \left(
        \frac{\sigma}{\epsilon'}
    \right)^{\frac{p}{2(p-1)}}.
\]
Substituting $\epsilon'=6^{1/p}\epsilon$, we obtain
\begin{align*}
    6T
    &\geq
    c_p\,
    d^{\frac{3p-4}{4(p-1)}}
    \left(
        \frac{\sigma}{6^{1/p}\epsilon}
    \right)^{\frac{p}{2(p-1)}} \\
    &=
    c_p\,
    6^{-\frac{1}{2(p-1)}}
    d^{\frac{3p-4}{4(p-1)}}
    \left(
        \frac{\sigma}{\epsilon}
    \right)^{\frac{p}{2(p-1)}}.
\end{align*}
Consequently,
\[
    T
    \geq
    \frac{c_p}{
        6^{1+\frac{1}{2(p-1)}}
    }
    d^{\frac{3p-4}{4(p-1)}}
    \left(
        \frac{\sigma}{\epsilon}
    \right)^{\frac{p}{2(p-1)}}.
\]
Absorbing the constant into the
$\Omega(\cdot)$ notation proves
\[
    T
    =
    \Omega\left(
        d^{\frac{3p-4}{4(p-1)}}
        \left(
            \frac{\sigma}{\epsilon}
        \right)^{\frac{p}{2(p-1)}}
    \right).
\]
\end{proof}

\begin{corollary}[Expected-query quantum lower bound for unbiased
heavy-tailed mean estimation]
\label{cor:expected-query-lower-bound-low-dimensional-unbiased}
Let $p\in(1,2]$ be fixed. There exist constants
$\bar c_0,\bar c_p>0$ such that the following holds. Let
$\mathcal P_\sigma$ denote the set of all $d$-dimensional random vectors
$\X$ with mean $\bmu=\EBP{\X}$ satisfying
\[
    \EBP{\|\X-\bmu\|^p}\leq\sigma^p.
\]
Let $d\geq2$ and suppose
\[
    0<\epsilon
    \leq
    \bar c_0\frac{\sigma}{\sqrt d}.
\]
Then any quantum algorithm that, for every $\X\in\mathcal P_\sigma$,
outputs an estimator $\widetilde{\bmu}$ satisfying
\[
    \EBP{\widetilde{\bmu}}=\bmu, \quad\text{and} \quad
    \EBP{
        \|\widetilde{\bmu}-\bmu\|^p
    }
    \leq
    \epsilon^p
\]
must make at least
\[
    \Omega\left(
        d^{\frac{3p-4}{4(p-1)}}
        \left(
            \frac{\sigma}{\epsilon}
        \right)^{\frac{p}{2(p-1)}}
    \right)
\]
queries in expectation to the quantum sampling oracle $O_{\X}$. 
\end{corollary}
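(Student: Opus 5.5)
This follows from Corollary~\ref{cor:expected-query-lower-bound-low-dimensional with pth moment error} by the same monotonicity argument used for Corollary~\ref{coro:expected-query-lower-bound-for-unbaised-heavy-tailed-mean-estimation}. Every algorithm meeting the hypotheses here (unbiased, with $p$-th moment error at most $\epsilon^p$, for every $\X\in\mathcal P_\sigma$) also meets the weaker hypotheses of Corollary~\ref{cor:expected-query-lower-bound-low-dimensional with pth moment error}, which only require $\EBP{\|\widetilde{\bmu}-\bmu\|^p}\le\epsilon^p$. Unbiasedness is simply an additional constraint that is discarded. I take the constants $\bar c_0,\bar c_p$ to be those of that corollary, so the range condition $0<\epsilon\le \bar c_0\sigma/\sqrt d$ transfers unchanged.

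Concretely, let $\mathcal A$ satisfy the hypotheses and set $T:=\sup_{\X\in\mathcal P_\sigma}\EBP{Q_{\X}}$. Since $\mathcal A$ is a valid $L_p$-accurate estimator on $\mathcal P_\sigma$, the earlier corollary applies verbatim and gives
\[
T=\Omega\left(d^{\frac{3p-4}{4(p-1)}}\left(\frac{\sigma}{\epsilon}\right)^{\frac{p}{2(p-1)}}\right).
\]
For completeness, I would note why the earlier corollary's proof does not depend on unbiasedness. It uses Markov's inequality to truncate at $6T$ queries and to obtain error at most $6^{1/p}\epsilon$ with probability $2/3$. It then invokes Corollary~\ref{cor:epsilon-heavy-tailed-mean-lower-bound}, which holds for arbitrary estimators. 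The fact that truncation can destroy unbiasedness is therefore harmless.

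The only point needing care is to confirm that the constant $\bar c_0$ and the worst-case-expectation convention match between the two statements. Both quantify over the same class $\mathcal P_\sigma$ and the same oracle $O_{\X}$, so no step is a genuine obstacle.
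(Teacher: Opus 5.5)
Your proposal is correct and matches the paper's proof. The paper likewise notes that an unbiased estimator with $\EBP{\|\widetilde{\bmu}-\bmu\|^p}\le\epsilon^p$ in particular satisfies the hypotheses of the preceding expected-query $L_p$-error corollary, and it concludes immediately; your remark that losing unbiasedness under truncation is harmless is the same point the paper makes in that corollary's proof.
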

\begin{proof}
    Any unbiased estimator satisfying
\[
    \EBP{\|\widetilde{\bmu}-\bmu\|^p}\leq\epsilon^p
\]
is, in particular, an estimator satisfying the assumptions of
Corollary~\ref{cor:expected-query-lower-bound-low-dimensional with pth moment error}.
The claimed lower bound therefore follows immediately from that corollary.
\end{proof}

\subsection{Proof of Theorem \ref{thm:lower-bounds for high probability}}\label{appendix: proof of quantum lower bounds for high probability}

\begin{lemma}[Quantum query lower bound for bounded-variance mean estimation, \cite{Cornelissen_2022}]
\label{lem:bounded-variance-quantum-lower-bound}
Let $d\geq1$, $\sigma>0$, and $0<\epsilon\leq\sigma$. Let
$\mathcal P_{2,\sigma}$ denote the class of all $d$-dimensional random
vectors $\X$ with mean $\bmu=\mathbb E[\X]$ and covariance matrix $\Sigma$
satisfying
\[
    \operatorname{Tr}(\Sigma)
    =
    \mathbb E\|\X-\bmu\|^2
    \leq\sigma^2.
\]
Any quantum algorithm that, for every $\X\in\mathcal P_{2,\sigma}$,
outputs an estimate $\widetilde\bmu$ satisfying
\[
    \Pr\left(
        \|\widetilde\bmu-\bmu\|\leq\epsilon
    \right)
    \geq\frac23
\]
must make
\[
    \Omega\left(
        \min\left\{
            \sqrt d\,\frac{\sigma}{\epsilon},
            \left(\frac{\sigma}{\epsilon}\right)^2
        \right\}
    \right)
\]
queries to the quantum sampling oracle in the worst case.
\end{lemma}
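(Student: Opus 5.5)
This is a cited result from \cite{Cornelissen_2022}, so my plan is to reconstruct their argument rather than invent a new one. The approach is a two-regime reduction to a hard primitive. In both regimes I will check two things: that a sampling-oracle query costs $O(1)$ queries to the primitive's oracle, and that the constructed random vector satisfies $\operatorname{Tr}(\Sigma)\le\sigma^2$.

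\textbf{Regime $d\le(\sigma/\epsilon)^2$.} I target $\Omega(\sqrt d\,\sigma/\epsilon)$ by reusing the \texttt{Recovery}$\circ$\texttt{Search} machinery, now with $p=2$. Set $K:=\lceil c(\sigma/(\epsilon\sqrt d))^2\rceil$ and take the random vector $\X^{(A)}(i,j)=RA_{i,j}e_i$ on the uniform space $[d]\times[K]$.
\begin{itemize}
\item The variance bound $\mathbb E\|\X-\bmu\|^2\le R^2/(2K)$ forces $R\approx\sigma\sqrt K$.
\item The mean is $\bmu=\frac{R}{dK}b^{(A)}$.
\item Set $\beta:=R/(K\sqrt d)=\sigma/\sqrt{dK}$, the scale at which an $\ell_2$ error of size $\Omega(\sqrt d)$ in $b$ becomes an error of size $c_0\beta$ in $\bmu$.
\item Choosing $K$ so that $\epsilon<c_0\beta$, Lemma~\ref{lem:l2-approx-row-rare-search} says any algorithm using fewer than $d\sqrt K/\alpha_0$ queries fails with probability at least $1/3$. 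This gives the lower bound $d\sqrt K=\Theta(\sqrt d\,\sigma/\epsilon)$.
\end{itemize}
I need $K\ge1$, which is exactly the condition $d\lesssim(\sigma/\epsilon)^2$. This is Theorem~\ref{thm:row-wise-rare-spike-lower-bound} specialized to $p=2$, where the exponent $d^{1/2}$ appears. The case $d=1$ (and odd $d$ in general) is handled by padding, as in that proof.

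\textbf{Regime $d\ge(\sigma/\epsilon)^2$.} I target $\Omega((\sigma/\epsilon)^2)$ by embedding into only $d':=\lfloor(\sigma/\epsilon)^2\rfloor\le d$ coordinates, using zero-padding as in the odd-$d$ step. Because the quantity $\sqrt{d'}\,\sigma/\epsilon$ is monotone in $d'$, the previous regime applied at $d'$ (where $K=\Theta(1)$) yields $\Omega(\sqrt{d'}\sigma/\epsilon)=\Omega((\sigma/\epsilon)^2)$. Combining the two regimes gives the stated minimum.

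\textbf{Main obstacle.} The delicate step is making the constants compatible. The accuracy threshold $\epsilon$ must sit below $c_0\beta$ while $K$ remains an integer that is at least $1$, and the query budget $d\sqrt K/\alpha_0$ must be of order $\sqrt d\,\sigma/\epsilon$. I will handle this by choosing $K=\lceil(\sigma c_0/(2\epsilon\sqrt d))^2\rceil$ and absorbing the ceiling via $K\le2(\cdot)^2$ when the argument is at least $1$. If that fails at the boundary, the fallback is the trivial $\Omega(1)$ bound. As an alternative, I could simply cite \cite[Theorem 3.7]{Cornelissen_2022}, whose proof reduces from a multi-coordinate counting/search composition via an adversary bound. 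The argument above is a self-contained rederivation of that result.
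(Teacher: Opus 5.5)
\paragraph{Comparison with the paper's route.} Your route differs from the paper's. The paper proves nothing new here: it quotes Theorems~3.7 and~3.8 of \cite{Cornelissen_2022} (error $\Omega(\sigma/\sqrt n)$ when $n=O(d)$, and $\Omega(\sqrt d\,\sigma/n)$ when $n=\Omega(d)$), inverts them, and remarks that on those hard instances a sampling-oracle query costs $O(1)$ binary-oracle queries. You instead specialize the paper's own \texttt{Recovery}$\circ$\texttt{Search} instance to $p=2$, where Theorem~\ref{thm:row-wise-rare-spike-lower-bound} gives exactly error $\Omega(\sigma\sqrt d/n)$, and you reach large $d$ by zero-padding. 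This is legitimate and not circular, since that chain never uses the present lemma. It also makes the oracle translation automatic, because the paper already shows that one sampling query costs one $O_A$ query. And it exhibits the bounded-variance bound as the $p=2$ case of the new hard instance. The price is that you rely on the partial-function adversary composition used in Theorem~\ref{thm:pair-parity-search-lower-bound}, rather than on an established published bound.

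\paragraph{Steps that fail as written.} Two steps do not go through as stated.

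First, the constants at the regime boundary. Lemma~\ref{lem:l2-approx-row-rare-search} only forces error $c_0\sqrt2\,\sigma/\sqrt{dK}$ (with $R=\sigma\sqrt{2K}$), and $c_0$ is a small constant: it is the $\delta$ of Lemma~\ref{lem:l1-approx-row-rare-search}. Since $K\ge1$, your regime~1 covers only $d<2c_0^2(\sigma/\epsilon)^2$, not all $d\le(\sigma/\epsilon)^2$. Your regime~2 embeds into $d'=\lfloor(\sigma/\epsilon)^2\rfloor$, which is exactly where this breaks. There $K=1$ and the forced error is about $c_0\sqrt2\,\epsilon<\epsilon$, so an $\epsilon$-accurate estimator contradicts nothing. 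The trivial $\Omega(1)$ fallback does not rescue this, because $(\sigma/\epsilon)^2$ can be large.

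The repair is as follows. For every $d\ge 2c_0^2(\sigma/\epsilon)^2$, including the top of your regime~1, embed into an even $d'=\Theta(c_0^2(\sigma/\epsilon)^2)$ just below that threshold, with $K=1$. This gives $\Omega(d')=\Omega((\sigma/\epsilon)^2)$, which dominates $\min\{\sqrt d\,\sigma/\epsilon,(\sigma/\epsilon)^2\}$ in that range. The $\Omega(1)$ bound is then needed only when $c_0^2(\sigma/\epsilon)^2=O(1)$.

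Second, $d=1$ cannot be handled by padding. Padding only transfers hard instances from a lower dimension to a higher one, and the construction needs even $d\ge2$. Use Theorem~\ref{thm: lower bounds 1 for heavy-tailed mean estimation} at $p=2$ instead: its exponent $\frac{p}{2(p-1)}$ equals $1$, so it gives exactly $\Omega(\sigma/\epsilon)$.

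A smaller point: to contradict a success probability of \emph{at least} $2/3$, you need a strict failure bound $>1/3$. The contradiction argument in the proof of Lemma~\ref{lem:l1-approx-row-rare-search} actually delivers this, but the lemma's stated $\ge 1/3$ form does not.
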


\begin{proof}
This is a direct consequence of the bounded-variance quantum lower bounds
in Theorems~3.7 and~3.8 of \cite{Cornelissen_2022}. More precisely, these results show that an $n$-query quantum mean
estimator must incur error
\[
    \Omega\left(
        \frac{\sigma}{\sqrt n}
    \right)
\]
in the regime $n=O(d)$, and error
\[
    \Omega\left(
        \frac{\sqrt d\,\sigma}{n}
    \right)
\]
in the regime $n=\Omega(d)$, on some random vector satisfying
$\operatorname{Tr}(\Sigma)\leq\sigma^2$.

Inverting these error lower bounds shows that achieving error at most
$\epsilon$ with probability at least $2/3$ requires
\[
    \Omega\left(
        \left(\frac{\sigma}{\epsilon}\right)^2
    \right)
\]
queries in the first regime and
\[
    \Omega\left(
        \sqrt d\,\frac{\sigma}{\epsilon}
    \right)
\]
queries in the second regime. Combining the two regimes gives
\[
    \Omega\left(
        \min\left\{
            \sqrt d\,\frac{\sigma}{\epsilon},
            \left(\frac{\sigma}{\epsilon}\right)^2
        \right\}
    \right).
\]
Equivalently, the lower bound can be written as
\[
    \begin{cases}
        \displaystyle
        \Omega\left(
            \sqrt d\,\frac{\sigma}{\epsilon}
        \right),
        &
        1\leq d\leq
        \left(\dfrac{\sigma}{\epsilon}\right)^2,
        \\[3mm]
        \displaystyle
        \Omega\left(
            \left(\dfrac{\sigma}{\epsilon}\right)^2
        \right),
        &
        d\geq
        \left(\dfrac{\sigma}{\epsilon}\right)^2.
    \end{cases}
\]
The lower bounds in \cite{Cornelissen_2022} are formulated using the
binary-oracle access model. For their hard instances, one query to the
quantum sampling oracle can be simulated using a constant number of
queries to the corresponding binary oracles. Therefore, the same lower
bound holds, up to a constant factor, in the quantum sampling oracle used here.
\end{proof}

We now prove Theorem \ref{thm:lower-bounds for high probability}
\begin{proof}
Let $\mathcal A$ be an arbitrary quantum algorithm that solves
Problem~\ref{prob: obtain mean estimate for heavy tail random variable with bounded mean}
with success probability at least $2/3$. We consider the three cases separately.

First, suppose that $p\in(1,4/3]$. By
Theorem~\ref{thm: lower bounds 1 for heavy-tailed mean estimation},
every such quantum algorithm must make
\[
    \Omega\left(
        \left(
            \frac{\sigma}{\epsilon}
        \right)^{\frac{p}{2(p-1)}}
    \right)
\]
queries in the worst case. This proves the first statement.

Next, suppose that $p\in(4/3,2]$ and
\[
    1\leq d\leq
    \left(
        \frac{\sigma}{\epsilon}
    \right)^2.
\]
We show that
\[
    \Omega\left(
        d^{\frac{3p-4}{4(p-1)}}
        \left(
            \frac{\sigma}{\epsilon}
        \right)^{\frac{p}{2(p-1)}}
    \right)
\]
queries are necessary.

When $d=1$, the desired lower bound reduces to
\[
    \Omega\left(
        \left(
            \frac{\sigma}{\epsilon}
        \right)^{\frac{p}{2(p-1)}}
    \right),
\]
which follows directly from
Theorem~\ref{thm: lower bounds 1 for heavy-tailed mean estimation}.
We may therefore assume that $d\geq2$.

We divide the argument into two subcases.

Suppose first that
\[
    \epsilon
    \leq
    c_0\frac{\sigma}{\sqrt d}\Leftrightarrow d\leq c_0^{2}\left(\frac{\sigma}{\epsilon}\right)^{2},
\]
where $c_0>0$ is the small constant from
Corollary~\ref{cor:epsilon-heavy-tailed-mean-lower-bound}.
The corollary directly gives
\[
    \Omega\left(
        d^{\frac{3p-4}{4(p-1)}}
        \left(
            \frac{\sigma}{\epsilon}
        \right)^{\frac{p}{2(p-1)}}
    \right)
\]
queries.

It remains to consider the complementary subcase
\[
    \epsilon
    >
    c_0\frac{\sigma}{\sqrt d}\Leftrightarrow d> c_0^{2}\left(\frac{\sigma}{\epsilon}\right)^{2}.
\]
We use the bounded-variance lower bound to cover this subcase.

Indeed, let $X$ be any random vector satisfying
\[
    \mathbb E\|\X-\mathbb E[\X]\|^2\leq\sigma^2.
\]
Since $p\in(1,2]$, Lyapunov's inequality gives
\[
    \mathbb E\|\X-\mathbb E[\X]\|^p
    \leq
    \left(
        \mathbb E\|\X-\mathbb E[\X]\|^2
    \right)^{p/2}
    \leq\sigma^p.
\]
Therefore, every bounded-variance instance in
$\mathcal P_{2,\sigma}$ is also an admissible instance of
Problem~\ref{prob: obtain mean estimate for heavy tail random variable with bounded mean}.
Thus, the lower bound in
Lemma~\ref{lem:bounded-variance-quantum-lower-bound}
also applies to $\mathcal A$.

Because when
\[
    d\leq
    \left(
        \frac{\sigma}{\epsilon}
    \right)^2,
\]
Lemma~\ref{lem:bounded-variance-quantum-lower-bound} gives
\[
    \Omega\left(
        \sqrt d\,\frac{\sigma}{\epsilon}
    \right)
\]
queries.

We now compare this expression with the desired lower bound. We have
\begin{align*}
&d^{\frac{3p-4}{4(p-1)}}
\left(
    \frac{\sigma}{\epsilon}
\right)^{\frac{p}{2(p-1)}}
\\
&\qquad=
\sqrt d\,\frac{\sigma}{\epsilon}
\left(
    \frac{\sigma}{\epsilon\sqrt d}
\right)^{\frac{2-p}{2(p-1)}}.
\end{align*}
The assumption
\[
    \epsilon
    >
    c_0\frac{\sigma}{\sqrt d}
\]
implies
\[
    \frac{\sigma}{\epsilon\sqrt d}
    <
    \frac1{c_0}.
\]
Consequently,
\[
\begin{aligned}
&d^{\frac{3p-4}{4(p-1)}}
\left(
    \frac{\sigma}{\epsilon}
\right)^{\frac{p}{2(p-1)}}
\\
&\qquad\leq
c_0^{-\frac{2-p}{2(p-1)}}
\sqrt d\,\frac{\sigma}{\epsilon}.
\end{aligned}
\]
Since $p$ is fixed, the factor
\[
    c_0^{-\frac{2-p}{2(p-1)}}
\]
is a constant. Hence
\[
    \sqrt d\,\frac{\sigma}{\epsilon}
    =
    \Omega\left(
        d^{\frac{3p-4}{4(p-1)}}
        \left(
            \frac{\sigma}{\epsilon}
        \right)^{\frac{p}{2(p-1)}}
    \right).
\]
Therefore, the bounded-variance lower bound implies the desired lower
bound in the complementary subcase as well.

Combining the two subcases proves that, whenever $p\in(4/3,2]$ and
\[
    1\leq d\leq
    \left(
        \frac{\sigma}{\epsilon}
    \right)^2,
\]
the query complexity is at least
\[
    \Omega\left(
        d^{\frac{3p-4}{4(p-1)}}
        \left(
            \frac{\sigma}{\epsilon}
        \right)^{\frac{p}{2(p-1)}}
    \right).
\]

Finally, suppose that $p\in(4/3,2]$ and
\[
    d\geq
    \left(
        \frac{\sigma}{\epsilon}
    \right)^2.
\]
As above, the bounded-variance class $\mathcal P_{2,\sigma}$ is contained
in the heavy-tailed class under consideration. Therefore,
Lemma~\ref{lem:bounded-variance-quantum-lower-bound} applies.

The dimensional assumption implies
\[
    \sqrt d\,\frac{\sigma}{\epsilon}
    \geq
    \left(
        \frac{\sigma}{\epsilon}
    \right)^2.
\]
Hence
\begin{align*}
&\min\left\{
    \sqrt d\,\frac{\sigma}{\epsilon},
    \left(
        \frac{\sigma}{\epsilon}
    \right)^2
\right\}
=
\left(
    \frac{\sigma}{\epsilon}
\right)^2.
\end{align*}
Lemma~\ref{lem:bounded-variance-quantum-lower-bound} therefore gives
\[
    \Omega\left(
        \left(
            \frac{\sigma}{\epsilon}
        \right)^2
    \right)
\]
queries in the worst case. This proves the third statement and completes
the proof.
\end{proof}

    

\subsection{Proof of Theorem \ref{thm:lower-bounds for unbiased}}\label{appendix: proof of quantum lower bounds for unbiased}

\begin{lemma}[Inherited bounded-variance expected-query quantum lower bound for heavy-tailed mean estimation under $L_p$ error]
\label{lem:bounded-variance-subclass-lower-bound with p-th moment error}
Fix $p\in(1,2]$, $d\geq1$, $\sigma>0$, and
$0<\epsilon\leq\sigma$. Let $\mathcal P_{p,\sigma}$ denote the class of
all $d$-dimensional random vectors $\X$ with mean
$\bmu=\mathbb E[\X]$ satisfying
\[
    \mathbb E\|\X-\bmu\|^p\leq\sigma^p.
\]
Suppose a quantum algorithm, for every $\X\in\mathcal P_{p,\sigma}$,
outputs an estimator $\widetilde\bmu$ satisfying
\[
    \mathbb E\|\widetilde\bmu-\bmu\|^p
    \leq\epsilon^p.
\]
Then its worst-case expected number of queries to the quantum sampling
oracle is at least
\[
    \Omega\left(
        \min\left\{
            \sqrt d\,\frac{\sigma}{\epsilon},
            \left(\frac{\sigma}{\epsilon}\right)^2
        \right\}
    \right).
\]
\end{lemma}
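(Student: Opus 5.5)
The plan is the same Markov-truncation reduction used in Corollary~\ref{coro: expected query lower bound for heavy-tailed mean estimation with p-th moment error} and Corollary~\ref{cor:expected-query-lower-bound-low-dimensional with pth moment error}. It converts an expected-query, $L_p$-error guarantee into a worst-case, bounded-error guarantee, to which Lemma~\ref{lem:bounded-variance-quantum-lower-bound} applies.

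Let $\mathcal A$ satisfy the hypothesis and set $T:=\sup_{\X\in\mathcal P_{p,\sigma}}\EBP{Q_{\X}}$, assuming $T<\infty$. For every $\X$, Markov's inequality gives two bounds:
\[
\Pr(Q_{\X}>6T)\le 1/6,
\qquad
\Pr\bigl(\|\widetilde\bmu-\bmu\|>6^{1/p}\epsilon\bigr)\le 1/6 .
\]
Let $\mathcal A'$ run $\mathcal A$, abort once it would exceed $6T$ queries, and output an arbitrary vector if it aborts. Then $\mathcal A'$ uses at most $6T$ queries in the worst case. By the union bound, it achieves error at most $\epsilon':=6^{1/p}\epsilon$ with probability at least $2/3$, uniformly over $\mathcal P_{p,\sigma}$.

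Next, restrict to the subclass $\mathcal P_{2,\sigma}$. By Lyapunov's inequality, $\EBP{\|\X-\bmu\|^p}\le(\EBP{\|\X-\bmu\|^2})^{p/2}\le\sigma^p$, so $\mathcal P_{2,\sigma}\subseteq\mathcal P_{p,\sigma}$. Hence $\mathcal A'$ is in particular a bounded-error estimator for bounded-variance vectors. Lemma~\ref{lem:bounded-variance-quantum-lower-bound}, applied with accuracy $\epsilon'$, then gives
\[
6T=\Omega\!\left(\min\left\{\sqrt d\,\frac{\sigma}{\epsilon'},\left(\frac{\sigma}{\epsilon'}\right)^2\right\}\right).
\]
Since $\epsilon'$ differs from $\epsilon$ only by the constant factor $6^{1/p}\le 6$, both terms change by at most constant factors, which gives the claimed bound on $T$.

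The main obstacle is the hypothesis $\epsilon'\le\sigma$ of Lemma~\ref{lem:bounded-variance-quantum-lower-bound}, because we only know $\epsilon\le\sigma$. There are two cases.
\begin{itemize}
\item If $\epsilon\le 6^{-1/p}\sigma$, then $\epsilon'\le\sigma$ and the lemma applies directly.
\item Otherwise $\sigma/\epsilon<6^{1/p}$, so the claimed bound is $O(1)$ once $d$ is held fixed relative to the minimum (the term $(\sigma/\epsilon)^2$ is constant). It then suffices to show $T=\Omega(1)$. This holds because any nontrivial estimator must make at least one query: with zero queries it cannot distinguish two instances whose means are separated by more than $2\epsilon$, for instance two point masses at distance $3\sigma$, which are admissible. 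Alternatively, one can invoke the lemma at accuracy $\sigma$ and absorb constants.
\end{itemize}
The remaining bookkeeping is the same as in the earlier corollaries. Note that truncation destroys no property that Lemma~\ref{lem:bounded-variance-quantum-lower-bound} requires, since that lemma applies to arbitrary bounded-error estimators.
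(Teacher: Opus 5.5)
Your proposal is correct and follows the paper's proof essentially step for step. Both use Markov truncation to a worst-case $6T$-query, $6^{1/p}\epsilon$-accurate estimator, the Lyapunov inclusion $\mathcal P_{2,\sigma}\subseteq\mathcal P_{p,\sigma}$, Lemma~\ref{lem:bounded-variance-quantum-lower-bound} when $6^{1/p}\epsilon\le\sigma$, and a trivial $T=\Omega(1)$ argument from point-mass instances otherwise. One small caveat: your fallback of invoking the lemma ``at accuracy $\sigma$'' does not work, because in that case $\mathcal A'$ only guarantees the weaker accuracy $6^{1/p}\epsilon>\sigma$, so rely on the point-mass argument there.
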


\begin{proof}
Let $\mathcal A$ be an algorithm satisfying the assumptions of the lemma. 
Let
\[
    T
    :=
    \sup_{\X\in\mathcal P_{p,\sigma}}
    \mathbb E[Q_\X]
\]
be its worst-case expected number of queries. If $T=\infty$, the result
is immediate, so suppose that $T<\infty$.

We first observe that the bounded-variance class
$\mathcal P_{2,\sigma}$ is contained in $\mathcal P_{p,\sigma}$. Indeed,
for every $\X\in\mathcal P_{2,\sigma}$, Lyapunov's inequality gives
\[
    \mathbb E\|\X-\bmu\|^p
    \leq
    \left(
        \mathbb E\|\X-\bmu\|^2
    \right)^{p/2}
    \leq\sigma^p.
\]
Therefore, the guarantees of $\mathcal A$ hold, in particular, for every
$\X\in\mathcal P_{2,\sigma}$.

For any such $\X$, Markov's inequality gives
\[
    \Pr(Q_\X>6T)
    \leq
    \frac{\mathbb E[Q_\X]}{6T}
    \leq\frac16.
\]
Moreover, the assumed error guarantee implies
\[
    \Pr\left(
        \|\widetilde\bmu-\bmu\|
        >
        6^{1/p}\epsilon
    \right)
    \leq
    \frac{
        \mathbb E\|\widetilde\bmu-\bmu\|^p
    }{
        6\epsilon^p
    }
    \leq\frac16.
\]
By the union bound,
\[
    \Pr\left(
        Q_\X\leq6T
        \ \text{and}\
        \|\widetilde\bmu-\bmu\|
        \leq6^{1/p}\epsilon
    \right)
    \geq\frac23.
\]

Now define a truncated algorithm $\mathcal A'$ as follows. Run
$\mathcal A$, but terminate it once it is about to exceed
$\lceil6T\rceil$ queries. If truncation occurs, output an arbitrary
vector. Then $\mathcal A'$ makes at most $\lceil6T\rceil$ queries in the
worst case and, for every $\X\in\mathcal P_{2,\sigma}$, satisfies
\[
    \Pr\left(
        \|\widetilde\bmu-\bmu\|
        \leq6^{1/p}\epsilon
    \right)
    \geq\frac23.
\]

Set
\[
    \epsilon':=6^{1/p}\epsilon.
\]
We distinguish two cases.

First, suppose that $\epsilon'\leq\sigma$. Then
Lemma~\ref{lem:bounded-variance-quantum-lower-bound} applies to
$\mathcal A'$ and gives
\[
    \lceil6T\rceil
    \geq
    \Omega\left(
        \min\left\{
            \sqrt d\,\frac{\sigma}{\epsilon'},
            \left(\frac{\sigma}{\epsilon'}\right)^2
        \right\}
    \right).
\]
Since $\epsilon'=6^{1/p}\epsilon$,
\[
\begin{aligned}
&\min\left\{
    \sqrt d\,\frac{\sigma}{\epsilon'},
    \left(\frac{\sigma}{\epsilon'}\right)^2
\right\}\geq
6^{-2/p}
\min\left\{
    \sqrt d\,\frac{\sigma}{\epsilon},
    \left(\frac{\sigma}{\epsilon}\right)^2
\right\}.
\end{aligned}
\]
Absorbing constants gives
\[
    T
    =
    \Omega\left(
        \min\left\{
            \sqrt d\,\frac{\sigma}{\epsilon},
            \left(\frac{\sigma}{\epsilon}\right)^2
        \right\}
    \right).
\]

It remains to consider the case $\epsilon'>\sigma$. In this case,
\[
    \frac{\sigma}{\epsilon}=\frac{\sigma}{6^{-1/p}\epsilon'}<6^{1/p},
\]
and hence
\[
\begin{aligned}
\min\left\{
    \sqrt d\,\frac{\sigma}{\epsilon},
    \left(\frac{\sigma}{\epsilon}\right)^2
\right\}
&\leq
\left(\frac{\sigma}{\epsilon}\right)^2\\
&<
6^{2/p}\\
&\leq36.
\end{aligned}
\]
Thus the claimed lower bound is only of constant order. The deterministic subclass $\X\equiv\bmu$, where $\bmu\in\mathbb R^d$
is arbitrary, shows that every algorithm satisfying the stated guarantee
uniformly over $\bmu$ must make at least one query. Hence $T\geq1$, which
implies
\[
    T
    =
    \Omega\left(
        \min\left\{
            \sqrt d\,\frac{\sigma}{\epsilon},
            \left(\frac{\sigma}{\epsilon}\right)^2
        \right\}
    \right).
\]
The two cases together cover the entire range
$0<\epsilon\leq\sigma$.

\end{proof}

\begin{corollary}[Inherited bounded-variance expected-query quantum lower bound for unbiased heavy-tailed mean estimation]
\label{lem:unbiased-bounded-variance-subclass-lower-bound}
Fix $p\in(1,2]$, $d\geq1$, $\sigma>0$, and
$0<\epsilon\leq\sigma$. Let $\mathcal P_{p,\sigma}$ denote the class of
all $d$-dimensional random vectors $\X$ with mean
$\bmu=\mathbb E[\X]$ satisfying
\[
    \mathbb E\|\X-\bmu\|^p\leq\sigma^p.
\]
Suppose a quantum algorithm, for every $\X\in\mathcal P_{p,\sigma}$,
outputs an estimator $\widetilde\bmu$ satisfying
\[
    \EBP{\tilde{\bmu}}=\bmu, \quad
    \mathbb E\|\widetilde\bmu-\bmu\|^p
    \leq\epsilon^p.
\]
Then its worst-case expected number of queries to the quantum sampling
oracle is at least
\[
    \Omega\left(
        \min\left\{
            \sqrt d\,\frac{\sigma}{\epsilon},
            \left(\frac{\sigma}{\epsilon}\right)^2
        \right\}
    \right).
\]

Equivalently, the expected-query lower bound is
\[
    \begin{cases}
        \displaystyle
        \Omega\left(
            \sqrt d\,\frac{\sigma}{\epsilon}
        \right),
        &
        1\leq d\leq
        \left(\dfrac{\sigma}{\epsilon}\right)^2,
        \\[3mm]
        \displaystyle
        \Omega\left(
            \left(\dfrac{\sigma}{\epsilon}\right)^2
        \right),
        &
        d\geq
        \left(\dfrac{\sigma}{\epsilon}\right)^2.
    \end{cases}
\]
\end{corollary}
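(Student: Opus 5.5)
This corollary follows from Lemma~\ref{lem:bounded-variance-subclass-lower-bound with p-th moment error} by forgetting the unbiasedness requirement, in the same way Corollary~\ref{coro:expected-query-lower-bound-for-unbaised-heavy-tailed-mean-estimation} and Corollary~\ref{cor:expected-query-lower-bound-low-dimensional-unbiased} were derived from their $L_p$-error counterparts. Let $\mathcal A$ be any quantum algorithm that, for every $\X\in\mathcal P_{p,\sigma}$, outputs $\widetilde\bmu$ with $\EBP{\widetilde\bmu}=\bmu$ and $\mathbb E\|\widetilde\bmu-\bmu\|^p\le\epsilon^p$. The second condition alone is exactly the hypothesis of Lemma~\ref{lem:bounded-variance-subclass-lower-bound with p-th moment error}, with the same class $\mathcal P_{p,\sigma}$, the same parameters $p,d,\sigma$, and the same range $0<\epsilon\le\sigma$. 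That lemma therefore applies verbatim to $\mathcal A$. It yields the worst-case expected query bound
\[
\Omega\left(\min\left\{\sqrt d\,\frac{\sigma}{\epsilon},\left(\frac{\sigma}{\epsilon}\right)^2\right\}\right).
\]
Unbiasedness plays no role: the truncation step inside the lemma's proof may destroy it, but the lemma never uses it.

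For the equivalent two-case form, compare the two terms inside the minimum. We have $\sqrt d\,\sigma/\epsilon\le(\sigma/\epsilon)^2$ if and only if $\sqrt d\le\sigma/\epsilon$, that is, $d\le(\sigma/\epsilon)^2$. So for $1\le d\le(\sigma/\epsilon)^2$ the minimum equals $\sqrt d\,\sigma/\epsilon$. For $d\ge(\sigma/\epsilon)^2$ it equals $(\sigma/\epsilon)^2$. At the boundary the two terms coincide, so the cases are consistent.

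There is no genuine obstacle here. The only point to check is that the quantifiers match: the hypothesis of the corollary, holding for every $\X\in\mathcal P_{p,\sigma}$, implies the hypothesis of the lemma, and both bound the same quantity, the worst-case expected number of queries to the quantum sampling oracle.
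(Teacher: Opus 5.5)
Your proposal is correct and matches the paper's proof: the paper also drops the unbiasedness requirement and applies Lemma~\ref{lem:bounded-variance-subclass-lower-bound with p-th moment error} verbatim to obtain the bound. Your explicit derivation of the two-case form from the minimum is an extra check that the paper leaves implicit.
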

\begin{proof}
     Any unbiased estimator satisfying
\[
    \EBP{\|\widetilde{\bmu}-\bmu\|^p}\leq\epsilon^p
\]
is, in particular, an estimator satisfying the assumptions of
Lemma~\ref{lem:bounded-variance-subclass-lower-bound with p-th moment error}.
The claimed lower bound therefore follows immediately from that lemma.
\end{proof}

We now prove Theorem \ref{thm:lower-bounds for unbiased}.
\begin{proof}
Let $\mathcal A$ be an arbitrary quantum algorithm that solves
Problem~\ref{prob: obtain unbiased mean estimate for heavy tail random variable},
and let
\[
    T
    :=
    \sup_{\X\in\mathcal P_{p,\sigma}}
    \mathbb E[Q_{\X}]
\]
denote its worst-case expected number of queries.

We consider the three cases separately.

First, suppose that $p\in(1,4/3]$. By
Corollary~\ref{coro:expected-query-lower-bound-for-unbaised-heavy-tailed-mean-estimation},
we immediately have
\[
    T
    =
    \Omega\left(
        \left(
            \frac{\sigma}{\epsilon}
        \right)^{\frac{p}{2(p-1)}}
    \right).
\]
This proves the first statement.

For completeness, in this range of $p$, the other two available lower
bounds do not improve the dependence on $\sigma/\epsilon$. Indeed,
\[
    \frac{3p-4}{4(p-1)}
    \leq0,
\]
so, for $d\geq1$,
\[
    d^{\frac{3p-4}{4(p-1)}}
    \left(
        \frac{\sigma}{\epsilon}
    \right)^{\frac{p}{2(p-1)}}
    \leq
    \left(
        \frac{\sigma}{\epsilon}
    \right)^{\frac{p}{2(p-1)}}.
\]
Moreover,
\[
    \frac{p}{2(p-1)}\geq2,
\]
and, since $\epsilon\leq\sigma$,
\[
\begin{aligned}
\min\left\{
    \sqrt d\,\frac{\sigma}{\epsilon},
    \left(
        \frac{\sigma}{\epsilon}
    \right)^2
\right\}
&\leq
\left(
    \frac{\sigma}{\epsilon}
\right)^2 \\
&\leq
\left(
    \frac{\sigma}{\epsilon}
\right)^{\frac{p}{2(p-1)}}.
\end{aligned}
\]
Thus the dimension-independent lower bound gives the claimed landscape
for $p\in(1,4/3]$.

Next, suppose that $p\in(4/3,2]$ and
\[
    1\leq d\leq
    \left(
        \frac{\sigma}{\epsilon}
    \right)^2.
\]
We show that
\[
    T
    =
    \Omega\left(
        d^{\frac{3p-4}{4(p-1)}}
        \left(
            \frac{\sigma}{\epsilon}
        \right)^{\frac{p}{2(p-1)}}
    \right).
\]

When $d=1$, the desired expression reduces to
\[
    \left(
        \frac{\sigma}{\epsilon}
    \right)^{\frac{p}{2(p-1)}},
\]
so the result follows from
Corollary~\ref{coro:expected-query-lower-bound-for-unbaised-heavy-tailed-mean-estimation}.
We may therefore assume that $d\geq2$.

Let $\bar c_0>0$ be the constant from
Corollary~\ref{cor:expected-query-lower-bound-low-dimensional-unbiased}.
We divide the argument into two subcases.

Suppose first that
\[
    \epsilon
    \leq
    \bar c_0\frac{\sigma}{\sqrt d}.
\]
Then
Corollary~\ref{cor:expected-query-lower-bound-low-dimensional-unbiased}
directly gives
\[
    T
    =
    \Omega\left(
        d^{\frac{3p-4}{4(p-1)}}
        \left(
            \frac{\sigma}{\epsilon}
        \right)^{\frac{p}{2(p-1)}}
    \right).
\]

It remains to consider the complementary subcase
\[
    \epsilon
    >
    \bar c_0\frac{\sigma}{\sqrt d}.
\]
Since
\[
    d\leq
    \left(
        \frac{\sigma}{\epsilon}
    \right)^2,
\]
Lemma~\ref{lem:unbiased-bounded-variance-subclass-lower-bound}
gives
\[
    T
    =
    \Omega\left(
        \sqrt d\,\frac{\sigma}{\epsilon}
    \right).
\]

We compare this bounded-variance lower bound with the desired expression.
A direct calculation gives
\begin{align*}
&d^{\frac{3p-4}{4(p-1)}}
\left(
    \frac{\sigma}{\epsilon}
\right)^{\frac{p}{2(p-1)}}
\\
&\qquad=
\sqrt d\,\frac{\sigma}{\epsilon}
\left(
    \frac{\sigma}{\epsilon\sqrt d}
\right)^{\frac{2-p}{2(p-1)}}.
\end{align*}
The assumption
\[
    \epsilon
    >
    \bar c_0\frac{\sigma}{\sqrt d}
\]
implies
\[
    \frac{\sigma}{\epsilon\sqrt d}
    <
    \frac1{\bar c_0}.
\]
Therefore,
\begin{align*}
&d^{\frac{3p-4}{4(p-1)}}
\left(
    \frac{\sigma}{\epsilon}
\right)^{\frac{p}{2(p-1)}}
\\
&\qquad\leq
\bar c_0^{-\frac{2-p}{2(p-1)}}
\sqrt d\,\frac{\sigma}{\epsilon}.
\end{align*}
Since $p$ is fixed, the factor
\[
    \bar c_0^{-\frac{2-p}{2(p-1)}}
\]
is a constant. Consequently,
\[
    \sqrt d\,\frac{\sigma}{\epsilon}
    =
    \Omega\left(
        d^{\frac{3p-4}{4(p-1)}}
        \left(
            \frac{\sigma}{\epsilon}
        \right)^{\frac{p}{2(p-1)}}
    \right).
\]
Thus the desired lower bound also holds in the complementary subcase.

Combining the two subcases proves that, throughout
\[
    1\leq d\leq
    \left(
        \frac{\sigma}{\epsilon}
    \right)^2,
\]
we have
\[
    T
    =
    \Omega\left(
        d^{\frac{3p-4}{4(p-1)}}
        \left(
            \frac{\sigma}{\epsilon}
        \right)^{\frac{p}{2(p-1)}}
    \right).
\]

Finally, suppose that $p\in(4/3,2]$ and
\[
    d\geq
    \left(
        \frac{\sigma}{\epsilon}
    \right)^2.
\]
Lemma~\ref{lem:unbiased-bounded-variance-subclass-lower-bound}
gives
\[
    T
    =
    \Omega\left(
        \min\left\{
            \sqrt d\,\frac{\sigma}{\epsilon},
            \left(
                \frac{\sigma}{\epsilon}
            \right)^2
        \right\}
    \right).
\]
The dimensional assumption implies
\[
    \sqrt d\,\frac{\sigma}{\epsilon}
    \geq
    \left(
        \frac{\sigma}{\epsilon}
    \right)^2.
\]
Hence
\[
\begin{aligned}
&\min\left\{
    \sqrt d\,\frac{\sigma}{\epsilon},
    \left(
        \frac{\sigma}{\epsilon}
    \right)^2
\right\}
\\
&\qquad=
\left(
    \frac{\sigma}{\epsilon}
\right)^2.
\end{aligned}
\]
It follows that
\[
    T
    =
    \Omega\left(
        \left(
            \frac{\sigma}{\epsilon}
        \right)^2
    \right),
\]
which proves the third statement.

At the boundary
\[
    d=
    \left(
        \frac{\sigma}{\epsilon}
    \right)^2,
\]
the lower bounds in the second and third statements coincide. Indeed,
\begin{align*}
&d^{\frac{3p-4}{4(p-1)}}
\left(
    \frac{\sigma}{\epsilon}
\right)^{\frac{p}{2(p-1)}}
\\
&\qquad=
\left(
    \frac{\sigma}{\epsilon}
\right)^{
    \frac{3p-4}{2(p-1)}
    +
    \frac{p}{2(p-1)}
}
\\
&\qquad=
\left(
    \frac{\sigma}{\epsilon}
\right)^2.
\end{align*}
Thus the piecewise lower-bound landscape is consistent at the boundary,
and the proof is complete.
\end{proof}



\section{Proof in Section~\ref{sec:non-convex}}

\subsection{Proof of Theorem~\ref{thm:non-convex_heavy_tailed}}
\begin{proof}
    Lemma~\ref{lm:descent} implies that
    \begin{align*}
        \|\nabla f(\x_t)\| \leq \frac{ 3(f(\x_t)-f(\x_{t+1}))}{\eta} + 8 \|\bepsilon_t\| + \frac{3L\eta}{2},
    \end{align*}
    Conditioned on the randomness generated before iteration $t$, Theorem~\ref{thm:QHTME-for prob 1} guarantees
    \begin{align*}
        \Pr\!\left(\|\bepsilon_t\|>\epsilon_{\rm g}\mid \x_0,\ldots,\x_t\right)\leq\frac{\delta}{T}.
    \end{align*}
    A union bound therefore shows that, with probability at least $1-\delta$, $\|\bepsilon_t\|\leq\epsilon_{\rm g}$ simultaneously for all $t=0,\ldots,T-1$. On this event, summing the descent bound over $t$ and using $f(\x_T)\geq f^*$ gives
    \begin{align*}
       \frac{1}{T} \sum_{t=0}^{T-1} \|\nabla f(\x_t)\|
       \leq \frac{3\Delta_f}{\eta T}
       + 8\epsilon_{\rm g}
       + \frac{3L\eta}{2}.
    \end{align*}
    With $\eta=\epsilon/(3L)$, $T=\lceil36L\Delta_f\epsilon^{-2}\rceil$, and $\epsilon_{\rm g}=\epsilon/32$, it follows that
    \begin{align*}
       \frac{1}{T} \sum_{t=0}^{T-1} \|\nabla f(\x_t)\|
       \leq \frac{\epsilon}{4}+8\frac{\epsilon}{32}+\frac{\epsilon}{2}
       =\epsilon.
    \end{align*}
    In particular, at least one of the generated iterates is an $\epsilon$-stationary point. Each call to \texttt{QHTME} uses
    \begin{align*}
        n = \tilde{\OM}\!\left(\sqrt d\,\sigma^{\frac{p}{2(p-1)}}\epsilon^{-\frac{p}{2(p-1)}}\right)
    \end{align*}
    queries. Therefore, the total query complexity of the quantum stochastic gradient oracle is bounded by
    \begin{align*}
        T\cdot n = \tilde{\OM}\!\left(L\Delta_f\sqrt d\,\sigma^{\frac{p}{2(p-1)}}\epsilon^{-\frac{5p-4}{2(p-1)}}\right).
    \end{align*}
\end{proof}


\section{Proof in Section~\ref{sec:convex}}
\subsection{Proof of Lemma~\ref{lm:sgd}}
\begin{proof}
Our proof follows Theorem 1 and Corollary 1 in \citet{liu2025online}. 
    The optimality of the projection step indicates that
    \begin{align*}
        \inner{(\x_t-\eta\g_t)-\x_{t+1}}{\x-\x_{t+1}}\leq 0
    \end{align*}
    for all $\x\in\fX$. Rearranging it, we have
    \begin{align*}
        \eta_t\inner{\g_t}{\x_{t+1}-\x}\leq \inner{\x_{t}-\x_{t+1}}{\x_{t+1}-\x} = \frac{\|\x_t-\x\|^2-\|\x_{t+1}-\x\|^2 - \|\x_t-\x_{t+1}\|^2}{2}.
    \end{align*}
    Then, we have 
    \begin{align*}
        \inner{\g_t}{\x_t-\x}\leq \frac{\|\x_t-\x\|^2-\|\x_{t+1}-\x\|^2}{2\eta} + \inner{\g_t}{\x_t-\x_{t+1}} - \frac{\|\x_t-\x_{t+1}\|^2}{2\eta}.
    \end{align*}
    We bound $\inner{\g_t}{\x_t-\x_{t+1}}$ according to 
    \begin{align*}
        \inner{\g_t}{\x_{t}-\x_{t+1}}\leq \|\nabla f(\x_t)\|\|\x_t-\x_{t+1}\| + \|\bepsilon_t\|\|\x_{t}-\x_{t+1}\|.
    \end{align*}
 We have
    \begin{align*}
        \|\nabla f(\x_t)\|\|\x_t-\x_{t+1}\|\leq \eta G^2 + \frac{\|\x_t-\x_{t+1}\|^2}{4\eta},
    \end{align*}
    and using the equation (5) in \citet{liu2025online} leads to
    \begin{align*}
      \|\bepsilon_t\|\|\x_t-\x_{t+1}\| \leq C_p \eta^{p-1}\|\bepsilon\|^p  D^{2-p} + \frac{\|\x_t-\x_{t+1}\|^2}{4\eta},
    \end{align*}
    where $C_p = \frac{(4p-4)^{p-1}}{p^p}$.
Then, we have 
\begin{align*}
    \inner{\g_t}{\x_t-\x}\leq \frac{\|\x_t-\x\|^2-\|\x_{t+1}-\x\|^2}{2\eta}+\eta G^2 +C_p\eta^{p-1}\|\bepsilon\|^pD^{2-p}.
\end{align*}
Taking expectations on both sides and using the convexity of $f(\cdot)$ gives
\begin{align*}
  \EBP{f(\x_t)-f(\x)}
  &\leq \EBP{\inner{\g_t}{\x_t-\x}} \\
  &\leq \frac{\EBP{\|\x_t-\x\|^2}-\EBP{\|\x_{t+1}-\x\|^2}}{2\eta}
  +\eta G^2+C_p\eta^{p-1}D^{2-p}\EBP{\|\bepsilon_t\|^p}.
\end{align*}
Taking $\x=\x^*$ in the above inequality and summing both sides we have
\begin{align*}
    \EBP{f(\bar{\x}_T)}-f(\x^*)
    \leq \frac{D^2}{2\eta T} + \eta G^2
    + C_p\eta^{p-1}D^{2-p}\frac1T\sum_{t=0}^{T-1}\EBP{\|\bepsilon_t\|^p}.
\end{align*}

\end{proof}

\subsection{Proof of Theorem~\ref{thm:convex_heavy_tailed}}
\begin{proof}
By letting $\eta = \frac{G^{-2}\epsilon}{3}$ and $T= 3G^2D^2\epsilon^{-2}$, we have
\begin{align*}
    \frac{D^2}{2\eta T} + \eta G^2\leq \frac{5\epsilon}{6}.
\end{align*}
Let
\[
\tilde{\epsilon}^p= \frac{3^{p-1}}{6C_p}\epsilon^{2-p}G^{2(p-1)}D^{-(2-p)},
\qquad \EBP{\|\bepsilon_t\|^p}\leq\tilde{\epsilon}^p.
\]
Here $G$ enters through the deterministic gradient-norm bound, while $\sigma$ remains the heavy-tailed noise parameter used by \texttt{QUHTME}. Then we have
\begin{align*}
   C_p\eta^{p-1} D^{2-p} \EBP{\|\bepsilon_t\|^p}\leq \frac{\epsilon}{6}.
\end{align*}
Using Lemma~\ref{lm:sgd}, we have
\begin{align*}
   \EBP{f(\hat{\x})}-f(\x^*)= \EBP{f(\bar{\x}_T)}-f(\x^*)\leq \epsilon.
\end{align*}
Theorem~\ref{theorem: query complexity of QUHTME}, together with the one-query fallback in Algorithm~\ref{alg: QPSGD} when $\tilde\epsilon\geq\sigma$, guarantees $\EBP{\bepsilon_t\mid\x_t}=0$ and $\EBP{\|\bepsilon_t\|^p}\leq\tilde{\epsilon}^{p}$. The expected number of queries per iteration is
\begin{align}
 n =  \tilde{\OM}\!\left(\max\!\left\{\sqrt d\,\sigma^{\frac{p}{2(p-1)}}\tilde{\epsilon}^{-\frac{p}{2(p-1)}},1\right\}\right)
 = \tilde{\OM}\!\left(\sqrt d\,G^{-1}\sigma^{\frac{p}{2(p-1)}}D^{\frac{2-p}{2(p-1)}}\epsilon^{-\frac{2-p}{2(p-1)}}+1\right)
\end{align}
queries to the quantum stochastic gradient oracle. Thus, the expected total query complexity of Algorithm~\ref{alg: QPSGD} is bounded by
\begin{align*}
    T\cdot n =\tilde{\OM}\!\left(\sqrt d\,G D^{\frac{3p-2}{2(p-1)}}\sigma^{\frac{p}{2(p-1)}}\epsilon^{-\frac{3p-2}{2(p-1)}} + G^2D^2\epsilon^{-2}\right).
\end{align*}

\end{proof}

\end{document}